\documentclass{article}

    \PassOptionsToPackage{numbers, compress,sort}{natbib}



    \usepackage[final]{neurips_2025}



\usepackage{amsmath,amsfonts,bm,dsfont,diagbox,amsthm,mathtools}
\usepackage{tikz}










\def\eqref#1{equation~\ref{#1}}









\def\1{\bm{1}}

\usetikzlibrary{arrows.meta, positioning, shapes.geometric, fit}
\tikzset{fit margins/.style={/tikz/afit/.cd,#1,
    /tikz/.cd,
    inner xsep=\pgfkeysvalueof{/tikz/afit/left}+\pgfkeysvalueof{/tikz/afit/right},
    inner ysep=\pgfkeysvalueof{/tikz/afit/top}+\pgfkeysvalueof{/tikz/afit/bottom},
    xshift=-\pgfkeysvalueof{/tikz/afit/left}+\pgfkeysvalueof{/tikz/afit/right},
    yshift=-\pgfkeysvalueof{/tikz/afit/bottom}+\pgfkeysvalueof{/tikz/afit/top}},
    afit/.cd,left/.initial=2pt,right/.initial=2pt,bottom/.initial=2pt,top/.initial=2pt}




  





\def\rvx{{\mathbf{x}}}





\def\vg{{\bm{g}}}


\def\mA{{\bm{A}}}

\def\mI{{\bm{I}}}

\def\mW{{\bm{W}}}

\DeclareMathAlphabet{\mathsfit}{\encodingdefault}{\sfdefault}{m}{sl}
\SetMathAlphabet{\mathsfit}{bold}{\encodingdefault}{\sfdefault}{bx}{n}


\def\gD{{\mathcal{D}}}

\def\gP{{\mathcal{P}}}

\def\gW{{\mathcal{W}}}
\def\gX{{\mathcal{X}}}
\def\gY{{\mathcal{Y}}}
\def\gZ{{\mathcal{Z}}}


\def\sD{{\mathbb{D}}}

\def\sI{{\mathbb{I}}}

\def\sR{{\mathbb{R}}}

\def\sW{{\mathbb{W}}}








\newcommand{\E}{\mathbb{E}}




\newcommand{\Ind}{\mathds{1}}

\newcommand{\defas}{\coloneq}

\DeclareMathOperator{\indep}{\mbox{$\,\perp\!\!\!\perp\,$}}

\DeclareMathOperator{\dep}{\mbox{$\,\not\!\perp\!\!\!\perp\,$}}

\newcommand{\data}{\mathcal{D}}

\newcommand{\subg}[1]{_{-#1}}

\newcommand{\nto}{\not\to}
\newcommand{\nleadsto}{\not\leadsto}

\newcommand{\true}{1}
\newcommand{\false}{0}


\newcommand{\dsep}{S}
\newcommand{\dcon}{C}




\newcommand{\tnorm}[1]{T_{#1}}
\newcommand{\tconorm}[1]{O_{#1}}

\newcommand{\ourtnorm}[1]{\Tilde{T}_{#1}}
\newcommand{\ourtconorm}[1]{\Tilde{O}_{#1}}

\newcommand{\loss}{\mathcal{L}}

\newcommand{\pvals}{p_{\data}}

\newcommand{\path}{\mathbf{p}}

\newcommand{\prob}{\mathbb{P}}


\newcommand{\ourmethod}{DAGPA\xspace}
\newcommand{\ourmethodcomplete}{DAG Percolation Apartness\xspace}

\newcommand{\formula}{formul{\ae}\xspace}

\usepackage[utf8]{inputenc} 
\usepackage[T1]{fontenc}    
\usepackage[hidelinks]{hyperref}       
\usepackage{url}            
\usepackage{booktabs}       
\usepackage{array}
\usepackage{multirow}
\usepackage{amsfonts}       
\usepackage{nicefrac}       
\usepackage{microtype}      
\usepackage{xcolor}         
\usepackage{wrapfig}

\usepackage{bibunits}       

\usepackage{graphicx}
\usepackage{caption}
\usepackage{subcaption}
\usepackage{xspace}
\usepackage{scalerel}       
\usepackage{adjustbox}

\usepackage{enumitem} 
\setlist[itemize,enumerate]{leftmargin=*}

\usepackage{algorithm}      
\usepackage[noend]{algpseudocode}  

\usepackage{amsmath}
\usepackage{amssymb}
\usepackage{mathtools}
\usepackage{amsthm}
\usepackage{thmtools}

\usepackage[capitalize,noabbrev]{cleveref}

\defaultbibliography{main}
\defaultbibliographystyle{plainnat}

\theoremstyle{plain}
\newtheorem{theorem}{Theorem}[section]

\newtheorem{lemma}[theorem]{Lemma}

\theoremstyle{definition}
\newtheorem{definition}[theorem]{Definition}

\theoremstyle{remark}

\usepackage[textsize=tiny]{todonotes}



\newcommand{\revision}[1]{#1}

\newcommand{\kPC}{$k$-PC }

\title{Differentiable Constraint-Based Causal Discovery}

%

\author{%
  %
  Jincheng Zhou$^{1}$ \quad Mengbo Wang$^{1}$ \quad Anqi He$^{2}$ \quad Yumeng Zhou$^{2}$ \quad Hessam Olya$^{2}$ \\ 
  \textbf{Murat Kocaoglu$^{3}$ \quad Bruno Ribeiro$^{1}$} \\
  $^{1}$Purdue University \quad $^{2}$Ford Motor Company \quad $^{3}$Johns Hopkins University \\
  \texttt{\{zhou791,wang4887,ribeiro\}@purdue.edu}\\
  \texttt{\{AHE6,yzhou173,molya\}@ford.com}\\
  \texttt{mkocaoglu@jhu.edu}
}

\begin{document}

\maketitle

\begin{abstract}
Causal discovery from observational data is a fundamental task in artificial intelligence, with far-reaching implications for decision-making, predictions, and interventions. Despite significant advances, existing methods can be broadly categorized as constraint-based or score-based approaches.
Constraint-based methods offer rigorous causal discovery but are often hindered by small sample sizes, while 
\revision{score-based}
methods provide flexible optimization but typically forgo explicit conditional independence testing.
This work explores a third avenue: developing differentiable $d$-separation scores, obtained through a percolation theory using soft logic.
This enables the implementation of a new type of causal discovery method: gradient-based optimization of conditional independence constraints.
Empirical evaluations demonstrate the robust performance of our approach in low-sample regimes, surpassing traditional constraint-based and score-based baselines on a real-world dataset.
\revision{%
Code and data of the proposed method are publicly available at 
\url{https://github.com/PurdueMINDS/DAGPA}.
}%
\end{abstract}

\begin{bibunit} 

\section{Introduction}


%
Causal discovery---the task of inferring causal relationships from observational data---is a fundamental problem in machine learning and statistics with far-reaching implications across multiple disciplines, including biology, economics, social sciences, and medicine~\citep{spirtes1991algorithm,sachs2005causal,pearl2019causality,daphne2009probabilistic,zhang2013integrated}. Directed Acyclic Graphs (DAGs) provide a powerful framework for representing causal relationships, enabling the estimation of the effects of interventions or actions. However, in many complex systems, the underlying causal graphs remain unknown, and conducting randomized controlled trials (RCTs) to establish causal relationships can be prohibitively expensive or impractical. This has significant implications in various domains, including planning, explainability, and fairness~\citep{zhang2013domain,Pearl1995ProbabilisticEO,zhang2018fairness}, highlighting the need for reliable methods to infer causal structures from observational data alone.

The task of causal discovery is inherently challenging, specifically under small sample sizes. 
Considering Markov, acyclicity,  and faithfulness
assumptions~\citep{pearl2019causality,spirtes2000causation}, some recent works have focused on two popular avenues~\citep{glymour2019review,vowels2022d,zanga2022survey}: differentiable score-based methods, which leverage differentiability to optimize objective functions, and constraint-based approaches, which rely on conditional independence tests to infer causal structures.
Common methods for constraint-based causal discovery, such as the PC algorithm~\citep{spirtes2000causation}, are vulnerable to errors in datasets with small sample sizes, due to uncertainty in conditional independence tests (CI tests). 
More recent constraint-based methods, such as LOCI and $k$-PC~\citep{wienobst2020recovering,kocaoglu2023characterization} among others, have ameliorated these challenges with smaller conditioning sets.


In recent years, significant progress has been made on addressing the challenges of small sample sizes in causal discovery via 
\revision{differentiable score-based methods,} 
casting the combinatorial graph search problem as a continuous optimization problem on weighted adjacency matrices representing directed graphs~\citep{zheng2018dags,zheng2020learning,sun2023ntsnotears,bello2022dagma}.
These differentiable methods rely either on linear models or increasingly complex neural networks to model the underlying functional relationship between variables.

{\bf Contributions.}
A promising avenue for advancing causal discovery, complementing existing approaches, is the development of hybrid methods~\citep{glymour2019review,vowels2022d,tsamardinos2006max}. Our work tackles a key challenge in this domain by introducing 
\revision{%
\emph{differentiable functions of $d$-separation and $d$-connection},
}%
capable of scoring conditional independencies within a 
\revision{probabilistic}
causal graph, bridging constraint-based and gradient-based methods. To achieve this, we propose a novel framework grounded in a {\em percolation measure} applied to a continuous relaxation of the causal graph structure. This new measure effectively captures the inherent dependencies among paths in a causal graph, addressing a significant limitation of conventional diffusion-based methods, such as matrix powers, which would be unable to adequately model $d$-separation.

The distinction between diffusion and percolation lies at the heart of deriving a differentiable metric for $d$-separation, which essentially boils down to measuring reachability over random graphs~\citep{hughes1996random}. To illustrate this concept, consider a typical setup in gradient-based graph optimization, where a matrix $\mA \in [0,1]^{n \times n}$ represents edge probabilities~\citep{zheng2018dags,wei2020dags}, with each entry $\mA_{XY}$ encoding the likelihood of an edge from node $X$ to node $Y$. A straightforward approach to assess connectivity between $X$ and $Y$ might involve computing $\mA^n_{XY}$, which represents the sum of all path probabilities of length $n$ from $X$ to $Y$. However, this method implicitly assumes independence between paths, akin to a diffusion process. For instance, in the paths $X\to Z\to Y$ and $X\to Z\to W\to Y$, the simultaneous absence of the sampled $X \to Z$ edge would invalidate both paths, introducing a probabilistic dependence that is not captured by matrix powers or diffusion in general.


In contrast, a $d$-separation measure over $\mA$ is a type of percolation measure, which bounds the probability of sampling valid paths between $X$ and $Y$, accounting for edge dependencies. Although characterizing percolation involves  combinatorial complexity, we derive a differentiable bound for $d$-separation using soft logic, 
enabling gradient-based optimization of causal structures.
%
We instantiate this differentiable $d$-separation framework in an algorithm (\ourmethod, for \ourmethodcomplete), where we combine CI-based scores, state-of-the-art techniques in multi-task learning, and Bayesian sampling. While \ourmethod instantiates this novel framework, we believe differentiable $d$-separation is of independent interest and can be integrated into existing gradient-based methods.

Empirically, \ourmethod shows that differentiable $d$-separation yields strong results in the small-sample regime, demonstrating good robustness and accuracy compared to popular constraint-based and differentiable score-based baselines. Moreover, on the real-world Sachs dataset \citep{sachs2005causal}, \ourmethod shows that differentiable $d$-separation offers accurate modeling of the independence patterns in the data, outperforming baselines in our metrics.
\section{Notation and background}

\paragraph{Notation.} 
We denote the integer set $\{ 1, \dots, d \}$ as $[d]$ and a dataset of $n$ samples and $d$ variables as $\data$. 
We represent a weighted directed graph with $d$ nodes and edge weights in the range $[0, 1]$ by a real square matrix $\mW \in [0, 1]^{d \times d}$.
\revision{%
Given a node $z \in [d]$, we denote $\mW\subg{z}$ the submatrix obtained by removing the $z$-th row and column of $\mW$, which is equivalent to removing the node $z$ and their connecting edges from the graph.
}%
When the square matrix is binary, e.g. $\mA \in \{0, 1\}^{d \times d}$, we interpret it as an unweighted directed graph.
Here, we write $x \to_{\mA} y$ to say ``$y$ is connected from $x$ via a directed edge
\revision{in $\mA$,}
'' and $x \leadsto_{\mA} y$ to say ``$y$ is reachable from $x$ via a directed path (a path where every edge has direction from $x$ to $y$)
\revision{in $\mA$}
.''
Conversely, $x \nto_{\mA} y$ and $x \nleadsto_{\mA} y$ stand for negations of these statements.
Finally, we use $\indep_{\data}$ for (conditional) independence statements in the dataset $\data$
\footnote{
\revision{%
Assuming infinite samples in $\data$, $x \indep_{\data} y \mid z$ means $P(x \mid y, z) = P(x \mid z)$ in the true data distribution.
}}%
, 
and if $\mA$ is acyclic, $\indep_{\mA}$ for d-separation statements in the DAG $\mA$.

\vspace{-8pt}
\paragraph{Constraint-based causal discovery.}  \label{sec:background-constraint-based}
Methods in this category
aim to identify a collection of DAGs from a dataset $\data$ such that each DAG $\mA$ 
in it
is an I-Map of $\data$, meaning all d-separations in $\mA$ imply conditional independencies in $\data$ with a sufficiently large dataset. Namely, $\forall x, y \in [d], \gZ \subseteq [d]$,
\begin{align} \label{eq:imap-constraint}
    \left( x \indep_{\mA} y \mid \gZ \right) \implies \left( x \indep_{\data} y \mid \gZ \right) .
\end{align}
The collection of all DAGs satisfying this criterion constitutes the Markov Equivalence Class (MEC). 
Our method, as most methods for learning DAGs, relies on the Causal Faithfulness Assumption, which posits the unique existence of a DAG $\mA^*$ such that the observed data $\data$ exhibits only the conditional independencies represented by the d-separations in $\mA$. This assumption is equivalent to the converse direction of \Cref{eq:imap-constraint}.
However, challenges emerge when dealing with large conditioning sets $\gZ$, as statistical tests for conditional independence become increasingly unreliable. To mitigate this issue, recent approaches like LOCI~\citep{wienobst2020recovering} and $k$-PC~\citep{kocaoglu2023characterization} restrict their search to DAGs with low-order conditioning sets (typically limited to a cardinality of 2 or less). As a result, their solution space is not the entire MEC, but rather the $k$-equivalence class (or $k$-essential graphs)~\citep{kocaoglu2023characterization}, which comprises DAGs whose low-order d-separation statements are reflected as low-order conditional independencies in $\gD$. In a similar vein, our approach focuses on conditioning sets with a cardinality of 1 or less ($|\gZ| \leq 1$), allowing for more reliable and efficient causal discovery.

\vspace{-8pt}
\paragraph{Continuous DAG Acyclicity Constraint.}
Pioneered by NOTEARS~\citep{sun2023ntsnotears}, recent score-based methods provide an interesting alternative in DAG search by reformulating it as a continuous optimization problem over weighted adjacency matrices.  
This transformation relies on the development of DAG acyclicity regularizers, which quantify the degree of cyclicity in weighted adjacency matrices. Our work follows this practice and uses the log-determinant DAG regularizer introduced by DAGMA~\citep{bello2022dagma}, defined as:
{
\setlength{\abovedisplayskip}{5pt}
\setlength{\belowdisplayskip}{5pt}
\begin{align*}
    \loss_{\text{DAG}}(\mW, s) = - \log \det (s\mI - \mW) + d\log s,
\end{align*}
}
where $s$ is a hyperparameter that controls the size of the valid domain of this function, denoted as $\sW^s = \{ \mW \in \sR^{d \times d} \mid s \geq \rho(\mW) \}$, with $\rho(\mW)$ representing the spectral radius of $\mW$. As established by \citet{bello2022dagma}, for $\mW \in \sW^s$, the condition $\loss_{\text{DAG}}(\mW, s) = 0$ implies that $\mW$ maps to a DAG. 

\section{Differentiable d-Separation Framework}  \label{sec:dsep-framework}

Our core contribution lies in the transformation of $d$-separation, which is discrete and binary computed via combinatorial algorithms~\citep{daphne2009probabilistic,darwiche2009modeling,shachter1998bayes}, into a continuously differentiable function over $\mW \in [0, 1]^{d \times d}$.
This transformation satisfies three central properties: (1) it exactly recovers conventional d-separation statements when evaluated on discrete DAGs $\mathbf{A} \in \{0, 1\}^{d \times d}$, (2) it provides a principled probabilistic interpretation when $\mW$ is viewed as a parameterization of a distribution over graphs, and (3) it is a differentiable function admitting gradient-based optimization on $\mW$.

The key insight is grounding the score in a percolation-based graph connectivity measure. Specifically,
we quantify how well a probabilistic graph structure (parameterized by $\mW$) aligns with conditional independence (CI) patterns observed in data, such as those measured through statistical tests yielding p-values, which pave the way to a comprehensive set of differentiable objective functions that measure violations of low-order I-Mapness, Faithfulness, and Acyclicity constraints. 
%

We achieve this differentiable framework through three key steps: (1) recasting $d$-separation as first-order logic (FOL) formulae operating on graph reachability (\Cref{sec:dsep-fol-formulae}), (2) applying soft logic operators to transform these discrete statements into continuously differentiable functions with valid probabilistic interpretations (\Cref{sec:diff-dsep}), and (3) combining these soft d-separation measures from the graph with conditional independence measures from data to construct objective functions that enforce low-order I-Mapness and Faithfulness constraints while maintaining acyclicity (\Cref{sec:ci-loss}).

\subsection{FOL Formulation of d-Separation}  \label{sec:dsep-fol-formulae}

\revision{%
Conventionally, d-separations are checked by discrete, combinatorial algorithms using data structures such as hash sets and double-ended queues~\citep{daphne2009probabilistic,darwiche2009modeling,shachter1998bayes}. The inherently sequential and combinatorial nature of these algorithms presents significant challenges in developing continuous relaxations and enabling gradient-based optimization. To address this limitation, we introduce a First-Order Logic (FOL) framework that provides an equivalent characterization of d-separation computations in discrete settings 
while maintaining differentiability properties. 
This reformulation bridges discrete reasoning and continuous optimization paradigms,
as it allows us
to leverage established techniques from probabilistic soft logic and fuzzy logic systems for end-to-end differentiation of the computation.
}%

Specifically, we observe that the conventional notion of (low-order) d-separation and d-connection on any given discrete DAG $\mA \in \{0, 1\}^{d \times d}$ can be precisely captured by 
\revision{FOL}
formul\ae{} based \emph{solely} on graph reachability information. 
That is,
given the
graph reachability logical predicate $R_{\mA}: [d]^2 \to \{ \false, \true \}$ defined as
\begin{align*}
    R_{\mA} (x, y) = \true \text{~~if and only if $x \leadsto y$ in $\mA$.}
\end{align*}
We can define the FOL formul\ae{} expressing the 0th-order (i.e. unconditional) and 1st-order (i.e. conditioning on one variable) d-separation and d-connection statements as follows:
\begin{restatable}{definition}{DefFolDSep}(Low-order d-separation and d-connection FOL {formul\ae})
\label{def:fol-d-sep}
    Given a discrete directed graph $\mA \in \{ 0, 1 \}^{d \times d}$, 
    the 0th-order d-separation formula $\dsep_{\mA}^{(0)}: [d]^2 \to \{ \false, \true \}$ and the 1st-order d-separation formula $\dsep_{\mA}^{(1)}: [d]^3 \to \{ \false, \true \}$ are defined as:
\begin{gather}
    \dsep_{\mA}^{(0)}(x, y) \defas \forall a \in [d], \lnot \left( R_{\mA}(a, x) \land R_{\mA}(a, y) \right) , \label{eq:d-sep-fol-0} \\
    \begin{split}
        \dsep_{\mA}^{(1)}(x, y \mid z) \defas S^{(0)}_{\mA_{-z}}(x, y) \land 
        \bigg( &\Big( \forall a \in [d] \setminus \{z\}, \dsep^{(0)}_{\mA_{-z}}(x, a) \lor \lnot R_{\mA}(a, z) \Big) \\
        \lor &\Big( \forall b \in [d] \setminus \{z\}, \dsep^{(0)}_{\mA_{-z}}(y, b) \lor \lnot R_{\mA}(b, z) \Big) \bigg) .
    \end{split} \label{eq:d-sep-fol-1}
\end{gather}
Equivalently, the 0-th and 1-st order d-connection statements $\dcon_{\mA}^{(0)}$ and $\dcon_{\mA}^{(1)}$ are:
\begin{gather}
    \dcon_{\mA}^{(0)}(x, y) \defas \exists a \in [d], R_{\mA}(a, x) \land R_{\mA}(a, y) , \label{eq:d-con-fol-0} \\
    \begin{split}
        \dcon_{\mA}^{(1)}(x, y \mid z) \defas \dcon^{(0)}_{\mA_{-z}}(x, y) \lor 
        \bigg( &\Big( \exists a \in [d] \setminus \{z\}, \dcon^{(0)}_{\mA_{-z}}(x, a) \land  R_{\mA}(a, z) \Big) \\
        \land &\Big( \exists b \in [d] \setminus \{z\}, \dcon^{(0)}_{\mA_{-z}}(y, b) \land R_{\mA}(b, z) \Big) \bigg) .
    \end{split} \label{eq:d-con-fol-1}
\end{gather}
\end{restatable}

\begin{figure*}[t]
    \centering
    \begin{tabular}{@{}c@{\hspace{0.3cm}}c@{\hspace{0.3cm}}c@{}}
        \begin{subfigure}[t]{0.24\textwidth} 
            \centering
            \includegraphics[width=\textwidth]{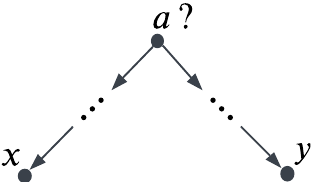}
            \caption{0th-order d-connection pattern: checking existence of common ancestor}
             \label{fig:dsep-illustration-0}
        \end{subfigure} &
        \begin{subfigure}[t]{0.33\textwidth} 
            \centering
            \includegraphics[width=\textwidth]{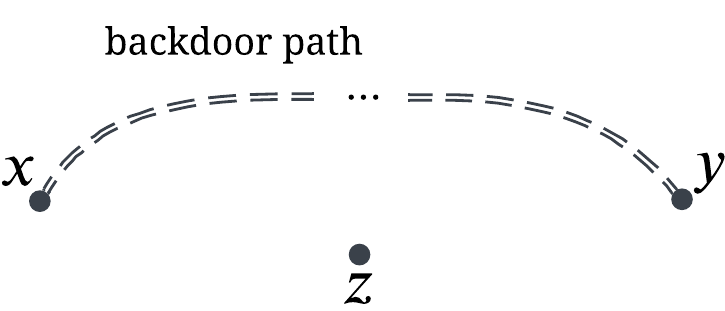}
            \caption{1st-order d-connection pattern A: checking existence of backdoor path that does not involve $z$}
            \label{fig:dsep-illustration-1-backdoor}
        \end{subfigure} &
        \begin{subfigure}[t]{0.33\textwidth} 
            \centering
            \includegraphics[width=\textwidth]{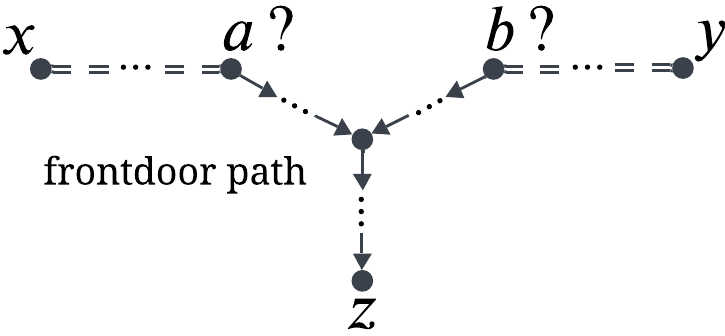}
            \caption{1st-order d-connection pattern B: checking existence of frontdoor path that involves $z$}
            \label{fig:dsep-illustration-1-frontdoor}
        \end{subfigure} \\
    \end{tabular}
    \caption{
    \revision{%
    Illustration of the graph connectivity patterns checked by the low-order d-separation/d-connection FOL \formula (\Cref{def:fol-d-sep}) to determine whether the given query ($(x, y)$ or $(x, y \mid z)$) is d-separated or d-connected. The double-dashed line $= = \cdots = =$ refers to a 0th-order d-connecting path, and the arrowed line $\rightarrow \cdots \rightarrow$ refers to a directed path. In either pattern, the variable nodes $a$ and $b$ could be the query node $x$ or $y$ themselves.
    }%
    }
    \label{fig:dsep-illustration}
    \vspace{-10pt}
\end{figure*}

To understand why these formul\ae{} correctly capture d-separation and d-connection when $\mA$ is a DAG, consider first the 0th-order formulae $\dsep_{\mA}^{(0)}(x, y)$ and $\dcon_{\mA}^{(0)}(x, y)$. These formul\ae{} essentially determine whether any node (including possibly $x$ or $y$ themselves) serves as a common ancestor to both $x$ and $y$
\revision{(e.g. the node $a?$ in \Cref{fig:dsep-illustration-0})}.
If such a common ancestor exists, it creates either a chain structure 
\revision{%
(when $a = x$ or $a = y$, meaning $x$ is the ancestor of $y$ or vice versa)
}%
or a fork structure 
\revision{%
(when $a \neq x$ and $a \neq y$, and a third node is the common ancestor).
}%
In either case, $x$ and $y$ are unconditionally d-connected, yielding $\dcon_{\mA}^{(0)}(x, y) = 1$ as expressed in \Cref{eq:d-con-fol-0}.

The 1st-order formul\ae{} $\dsep_{\mA}^{(1)}(x, y \mid z)$ and $\dcon_{\mA}^{(1)}(x, y \mid z)$ involve more intricate logic but follow an intuitive two-step process. 
\revision{%
Take $\dcon_{\mA}^{(1)}(x, y \mid z)$ for instance.
}%
It first examines,
\revision{%
via the first term $\dcon_{\mA_{-z}}^{(0)}(x, y)$,
}%
whether $x$ and $y$ remain $d$-connected in the subgraph $\mA_{-z}$ where conditioning node $z$ is removed
— effectively checking for any \emph{backdoor} path that bypasses $z$
\revision{%
(\Cref{fig:dsep-illustration-1-backdoor}).
}%
If no such backdoor path exists, the formul\ae{} then evaluate whether there exists a valid \emph{frontdoor} path through $z$.
For this frontdoor path to d-connect $x$ and $y$ when conditioning on $z$, $z$ must be a collider or a descendant of a collider. 
This translates to checking whether there is any 0th-order d-connecting path that does not involve $z$ and that connects $x$ to 
\revision{%
\emph{any ancestor of $z$ or $z$ itself,}
}%
and similarly whether there is such a path connecting $y$ to any ancestor of $z$
\revision{%
or $z$ itself.
}%
\revision{%
As illustrated in \Cref{fig:dsep-illustration-1-frontdoor}, this is checking if the 0th-order d-connecting path $x == \dots == a?$ and the directed path $a? \to \dots \to z$ both exist at the same time.
}%
The following theorem formalizes the correctness of these logical characterizations.
%

\begin{restatable}{theorem}{ThmFolDSepCorrectness}
\label{thm:fol-d-sep-correctness}
    For any DAG $\mA$ with $d$ nodes and any three nodes $x, y, z \in [d]$ that are distinct, $x \indep_{\mA} y$ if and only if $\dsep_{\mA}^{(0)}(x, y) = \true$, and $x \indep_{\mA} y \mid z$ if and only if $\dsep_{\mA}^{(1)}(x, y \mid z) = \true$. Similarly, $x \dep_{\mA} y$ if and only if $\dcon_{\mA}^{(0)}(x, y) = \true$, and $x \dep_{\mA} y \mid z$ if and only if $\dcon_{\mA}^{(1)}(x, y \mid z) = \true$. \footnote{The proofs of \Cref{thm:fol-d-sep-correctness} and all following theorems and lemmas can be found in \Cref{appdx:proofs}.}
\end{restatable}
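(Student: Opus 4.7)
The plan is to establish the two d-connection equivalences; the d-separation statements then follow by syntactic De Morgan negation of the quantifiers and connectives in \Cref{eq:d-sep-fol-0}--\Cref{eq:d-con-fol-1}, using that $\dcon^{(i)}_\mA$ is the negation of $\dsep^{(i)}_\mA$.

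For the 0th-order claim I would invoke the classical fact that in a DAG, $x \dep_\mA y$ unconditionally iff $x$ and $y$ share a common ancestor. Forward: any unconditionally d-connecting path has no colliders, so tracing edge orientations along it yields a ``tent'' shape $x \leftarrow \cdots \leftarrow a \to \cdots \to y$ whose apex $a$ (possibly $x$ or $y$ itself) is a common ancestor, matching the existential $\exists a\colon R_\mA(a,x) \land R_\mA(a,y)$ in \Cref{eq:d-con-fol-0}. Backward: concatenating the directed paths $a \leadsto x$ and $a \leadsto y$ from any common ancestor gives a collider-free, hence d-connecting, path between $x$ and $y$.

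For the 1st-order claim I would prove both directions of $\dcon^{(1)}_\mA(x,y \mid z) \iff x \dep_\mA y \mid z$. For $(\Leftarrow)$: if the backdoor disjunct $\dcon^{(0)}_{\mA_{-z}}(x,y)$ holds, the 0th-order result applied inside $\mA_{-z}$ produces a collider-free $x$--$y$ path that avoids $z$ and therefore stays d-connecting given $z$ in $\mA$. If instead the frontdoor conjunct $A \land B$ holds with witnesses $a,b \in [d]\setminus\{z\}$ and corresponding common ancestors $a^*$ of $(x,a)$ and $b^*$ of $(y,b)$ in $\mA_{-z}$, splice the walk
\[
x \leftarrow \cdots \leftarrow a^* \to \cdots \to a \to \cdots \to z \leftarrow \cdots \leftarrow b \leftarrow \cdots \leftarrow b^* \to \cdots \to y,
\]
whose only collider is $z$ (activated by conditioning) while every other vertex is a non-collider distinct from $z$ (the $\mA_{-z}$ segments avoid $z$ by construction, and $a \leadsto z$, $b \leadsto z$ touch $z$ only at the endpoint). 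A standard walk-to-simple-path shortcut argument converts this walk into an actual d-connecting path. For $(\Rightarrow)$: let $p$ be a simple d-connecting path from $x$ to $y$ given $\{z\}$. If $p$ is collider-free and avoids $z$, the 0th-order argument in $\mA_{-z}$ yields the backdoor disjunct directly. Otherwise, traversing $p$ from $x$, define the witness $a$ for $A$ to be the first collider on $p$ if that collider is not $z$, and to be the immediate predecessor of $z$ on $p$ if $z$ itself is the first collider (so $a \to z$ is an edge of $p$, since $z$ as a collider has both incident edges on $p$ oriented into it). In either case $a \neq z$; the prefix of $p$ from $x$ to $a$ is collider-free and avoids $z$ (non-colliders on $p$ are all distinct from $z$), hence by the 0th-order case it witnesses $\dcon^{(0)}_{\mA_{-z}}(x,a)$; and $a$ reaches $z$ in $\mA$, either directly in the predecessor case or because every collider on a d-connecting path given $\{z\}$ must be an ancestor of $z$. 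A symmetric choice on the $y$-side yields the witness $b$ for $B$.

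The main obstacle I anticipate is the careful case split in $(\Rightarrow)$ when $z$ itself appears as a collider on $p$: the first collider from $x$'s side is then $z$ and cannot serve as $a$, so one must fall back to the immediate predecessor of $z$ to ensure the existential witness lies in $[d]\setminus\{z\}$. The walk-to-path reduction in $(\Leftarrow)$ is routine but requires verifying that shortcutting repeated vertices does not introduce new colliders; this is safe here because repetitions can only happen inside the collider-free $\mA_{-z}$ segments, where any shortcut preserves non-collider status at every vertex it touches.
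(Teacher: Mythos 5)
Your proof is correct and follows the same overall skeleton as the paper's (reduce to the d-connection statements via De Morgan, characterize 0th-order d-connection by a common ancestor, split the 1st-order case into backdoor and frontdoor), but it diverges in the two technically substantive sub-steps, and in both places your route is arguably cleaner. In the forward 1st-order direction the paper first performs an iterative rerouting argument to convert an arbitrary active path into a canonical path with exactly one collider (replacing each extra collider $z'$ by the directed path $z' \leadsto z$), and only then extracts the witness $a$ as the rightmost fork on the $x$-side; you instead read the witness directly off an arbitrary active path as the first collider from $x$ (or the immediate predecessor of $z$ when $z$ itself is that collider), using only the facts that non-colliders on an active path are not in the conditioning set and that every collider must be $z$ or an ancestor of $z$. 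This avoids the rerouting machinery entirely while proving the same existential. In the backward frontdoor direction the paper concatenates the three active segments and appeals to ``transitivity of d-connection,'' which is not a valid rule in general; your splice-the-walk-with-$z$-as-sole-collider argument, followed by a walk-to-path reduction, is the rigorous version of what the paper intends. One small inaccuracy in your final remark: vertex repetitions in the spliced walk need not be confined to the collider-free $\mA_{-z}$ segments (e.g.\ the tent $a^* \leadsto x$ may share vertices with the directed path $a \leadsto z$), but this does not matter because the standard lemma that an active walk between $x$ and $y$ given $Z$ implies an active path between them given $Z$ applies regardless of where the repeats occur.
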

%

The d-separation and d-connection formul\ae{} presented in Theorem 3.2 rely fundamentally on the graph reachability predicate $R_{\mathbf{A}}$.
Crucially, we observe 
that $R_{\mathbf{A}}$
itself can be expressed through FOL formul\ae{} that operate directly on the adjacency matrix $\mathbf{A}$, by taking a recursive form analogous to the generalized Bellman-Ford algorithm~\citep{Baras2010PathPI}. 

\begin{restatable}{definition}{DefFolReachability}(Graph Reachability FOL formul\ae{})
\label{def:fol-reachability}
Given a discrete directed graph $\mA \in \{0, 1\}^{d \times d}$, we define the reachability formul\ae{} $R_{\mA}^{(l)}(x, y): [d]^2 \to \{0, 1\}$ for path lengths no greater than $l$ recursively as follows:
\begin{gather} 
    R_{\mA}^{(0)}(x, y) = \Ind(x = y) , \label{eq:graph-reachability-bellman-ford-1}\\
    R_{\mA}^{(l)}(x, y) = \left( \bigvee_{u \in [d]} \left( R_\mA^{(l - 1)}(x, u) \land \mA_{u, y} \right) \right) 
    \lor R_{\mA}^{(l-1)}(x, y) . \label{eq:graph-reachability-bellman-ford-2}
\end{gather}
\end{restatable}

In practice, we take $R_{\mA}(x, y) \defas R_{\mA}^{(d-1)}(x, y)$ where $d$ is the number of nodes, since the longest directed path in a DAG with $d$ nodes has length $d - 1$. The correctness of these formul\ae{} is articulated in the following lemma:

\begin{restatable}{lemma}{ThmFolReachabilityCorrectness}
\label{thm:fol-reachability-correctness}
    For any discrete graph $\mA \in \{0, 1\}^{d \times d}$ with maximum directed path length $l$ and for all pair of nodes $x, y \in [d]$, $x \leadsto_{\mA} y$ if and only if $R_{\mA}^{(l)}(x, y) = 1$. 
\end{restatable}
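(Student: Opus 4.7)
The plan is to proceed by induction on the superscript $l$ on a slightly strengthened intermediate claim: for every $k \ge 0$ and every pair $x, y \in [d]$, $R_{\mA}^{(k)}(x, y) = 1$ if and only if there exists a directed walk from $x$ to $y$ in $\mA$ of length at most $k$. This indexing by $k$ (rather than by the graph's overall maximum path length $l$) is what makes the induction go through cleanly.

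For the base case $k = 0$, the definition in \Cref{eq:graph-reachability-bellman-ford-1} gives $R_{\mA}^{(0)}(x, y) = \Ind(x = y)$, which is $1$ exactly when $x = y$, corresponding to the trivial walk of length $0$. For the inductive step, assuming the claim for $k - 1$, I would unfold \Cref{eq:graph-reachability-bellman-ford-2}: $R_{\mA}^{(k)}(x, y) = 1$ iff either $R_{\mA}^{(k-1)}(x, y) = 1$ or there exists $u \in [d]$ with $R_{\mA}^{(k-1)}(x, u) = 1$ and $\mA_{u, y} = 1$. By the inductive hypothesis, the first disjunct is equivalent to the existence of a walk of length $\le k - 1$ from $x$ to $y$, and the second disjunct is equivalent to the existence of a walk of length $\le k - 1$ from $x$ to some $u$ which can be prepended to the edge $u \to y$, yielding a walk of length $\le k$ from $x$ to $y$. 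Together, the disjunction is equivalent to the existence of a walk of length $\le k$ from $x$ to $y$, which closes the induction.

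To derive the stated lemma from this claim, I would instantiate $k = l$ and connect walks to the $\leadsto_{\mA}$ relation. Specifically, $x \leadsto_{\mA} y$ holds iff there is a directed path, and any directed walk can be shortened (by removing revisited vertices) to a simple directed path with the same endpoints. Since $l$ is the maximum directed path length in $\mA$, any such simple path has length at most $l$ and is itself a walk of length $\le l$. Hence existence of a walk of length $\le l$ from $x$ to $y$ coincides with $x \leadsto_{\mA} y$, and by the intermediate claim this is equivalent to $R_{\mA}^{(l)}(x, y) = 1$.

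The induction itself is straightforward, so there is no real obstacle; the only subtle point worth flagging carefully is the walk-versus-path distinction in the final step, i.e., noting that one inductively computes reachability by walks while the semantics of $\leadsto_{\mA}$ is given by paths, and that the bound $l$ on simple paths reconciles the two. No additional assumptions on $\mA$ (such as acyclicity) are needed for the induction itself.
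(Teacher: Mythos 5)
Your proof is correct and follows essentially the same inductive strategy as the paper's, which likewise unfolds the Bellman--Ford recursion in \Cref{eq:graph-reachability-bellman-ford-2} and argues the two directions of the biconditional at each step. Your formulation is in fact slightly cleaner: by inducting on the formula index $k$ with an explicit intermediate claim about walks of length at most $k$, and only reconciling walks with simple paths at the very end, you avoid the paper's somewhat awkward framing of inducting on the graph's maximum directed path length while still applying the hypothesis $R_{\mA}^{(l-1)}$ to the same graph $\mA$.
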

%

By integrating the formul\ae{} in \Cref{def:fol-d-sep} with those in \Cref{def:fol-reachability}, we establish a complete framework that can systematically determine d-separation and d-connection relationships in DAGs using purely FOL operations, laying the foundation for the differentiable relaxation that follows.

\subsection{Continuous Relaxation via Soft Logic}  \label{sec:diff-dsep}

Having established d-separation and d-connection as FOL formul\ae, we can now leverage the rich literature on probabilistic soft logic (PSL) and fuzzy logic to transform these discrete statements into continuously differentiable functions. These soft logic frameworks systematically relax Boolean operations (conjunction, disjunction, existential and universal quantification) into continuous functions that operate over the interval [0,1] via various t-norms and t-conorms—continuous analogs of logical AND and OR operations~\citep{klir1995fuzzy,brocheler2010probabilistic,van2022analyzing,Badreddine2023logLTNDF}. In doing so, the logical semantics necessary to defined reachability (a percolation metric) are preserved while enabling gradient-based optimization and sampling.

In this work, we adopt the LogLTN framework proposed by~\citet{Badreddine2023logLTNDF}, which implements the product t-norm (for logical AND) and max t-conorm (for logical OR) in the logarithmic space. Given a set of $m$ values $\{ x_i \}_{i \in [m]}$, each $x_i \in [0, 1]$, and their logarithmic representations $\{ x'_i \}_{i \in [m]}$, each $x'_i = \log(x_i)$, 
%
%
the standard product t-norm and max t-conorm $\tnorm{m}, \tconorm{m}: [0, 1]^m \to [0, 1]$, and LogLTN's logarithmic versions $\ourtnorm{m}, \ourtconorm{m}: \sR_{-}^m \to \sR_{-}$, where $\sR_{-} \defas \{ x \in \sR \mid x \leq 0 \}$, are:
\begin{gather}  \label{eq:logical-operators}
    \begin{aligned}
    \tnorm{m} (\{ x_i \}_{i \in [m]}) &\defas \prod_{i=1}^m x_i \,\,\,, &
    \tconorm{m} (\{ x_i \}_{i \in [m]}) &\defas \max \{ x_i \}_{i \in [m]} \,\,\,, \\
    \ourtnorm{m} (\{ x'_i \}_{i \in [m]}) &\defas \sum_{i=1}^m x'_i \,\,\,, &
    \ourtconorm{m} (\{ x'_i \}_{i \in [m]}) &\defas \alpha \left( C + \log \left( \frac{\sum_{i=1}^n e^{x'_i / \alpha - C}}{m} \right) \right) \,\,\,.
\end{aligned}
\end{gather}

where 
\revision{$C = \max(x'_1 / \alpha, \dots, x'_m / \alpha)$ }
and $\alpha \in (0, 1]$ is the temperature controlling the approximation accuracy
\revision{(the closer $\alpha$ is to 0 the more accurate)}.
Notably, the logarithmic t-norm 
\revision{$\ourtnorm{m}$}
exactly represents its standard counterpart, i.e., 
\revision{$\ourtnorm{m}(\cdot) = \log (\tnorm{m}(\cdot))$}
, whereas the logarithmic t-conorm 
\revision{$\ourtconorm{m}$}
implements the Log-Mean-Exponential operation that provides a lower bound approximation of 
\revision{$\log (\tconorm{m}(\cdot))$}
with bounded error~\citep{Badreddine2023logLTNDF}:
%
\revision{%
\begin{align*}
    \log (\tconorm{m}(\rvx)) - \alpha \log(m) \leq \ourtconorm{m} (\rvx') \leq \log(\tconorm{m}(\rvx)).
\end{align*}
\vspace{-10pt}
}%

Our choice of the LogLTN operators is motivated by two key considerations. First, a comparative analysis by~\citet{van2022analyzing} shows that the product t-norm provides the most stable and informative gradients for differentiable learning, while~\citet{Badreddine2023logLTNDF} shows that the LogMeanSum approximation of the max t-conorm in the logarithmic space further enhances gradient stability. Second, as we will demonstrate next, this specific combination of max-product operators consistently yields principled lower bounds on the quantities being estimated and allows us to derive continuously differentiable functions, which we name the {\em differentiable d-separation/d-connection}, that compute lower bounds on \emph{expected d-separation/d-connection statements} over a distribution of DAGs parameters by the weighted adjacency matrix $\mW$. Specifically, we now transform the FOL formul\ae{} in \Cref{def:fol-d-sep} and \Cref{def:fol-reachability} into their continuous counterparts that operate on the weighted adjacency matrices $\mW \in [0, 1]^{d \times d}$, using the LogLTN logical operators.
\begin{restatable}{definition}{DefDiffReachability}(Differentiable Graph Reachability and Unreachability Percolations)
\label{def:diff-reachability}
Given a weighted adjacency matrix $\mW \in [0, 1]^{d \times d}$, we define the relaxed reachability and unreachability functions, $\Tilde{R}_{\mW}^{(l)}(x, y), \Tilde{U}_{\mW}^{(l)}(x, y): [d]^2 \to \sR_{-}$, for path lengths no greater than $l$ recursively as follows:
\begin{equation*}
\adjustbox{max width=\columnwidth}{$
\begin{aligned}
    \tilde{R}_{\mW}^{(0)}(x, y) &\defas \log(\Ind(x = y)) &
    \tilde{R}_{\mW}^{(l)}(x, y) &\defas \ourtconorm{d+1}\Big(\{\ourtnorm{2}(\tilde{R}_{\mW}^{(l-1)}(x, u), \log(W_{uy}))\}_{u \in [d]} \cup \{\tilde{R}_{\mW}^{(l-1)}(x, y)\}\Big) ,\\
    \tilde{U}_{\mW}^{(0)}(x, y) &\defas \log(\Ind(x \neq y)) &
    \tilde{U}_{\mW}^{(l)}(x, y) &\defas \ourtnorm{d+1}\Big(\{\ourtconorm{2}(\tilde{U}_{\mW}^{(l-1)}(x, u), \log(1 - W_{uy}))\}_{u \in [d]} \cup \{\tilde{U}_{\mW}^{(l-1)}(x, y)\}\Big) ,
\end{aligned}
$}
\end{equation*}
where we take $\log(0) = -\infty$.
\end{restatable}

\begin{restatable}{definition}{DefDiffDSep}(Differentiable d-Separation and d-Connection)
\label{def:diff-dsep}
Given a weighted adjacency matrix $\mW \in [0, 1]^{d \times d}$, we define 
the 0th-order differentiable d-separation and d-connection function $\tilde{S}_{\mW}^{(0)}(x, y), \tilde{C}_{\mW}^{(0)}(x, y): [d]^2 \to \sR_{-}$ and the 1st-order differentiable d-separation and d-connection function $\tilde{S}_{\mW}^{(1)}(x, y | z), \tilde{C}_{\mW}^{(1)}(x, y | z): [d]^3 \to \sR_{-}$ as follows:
\begin{equation*}
\adjustbox{max width=\columnwidth}{$
\begin{aligned}
    \tilde{\dsep}_{\mW}^{(0)}(x, y) &\defas \ourtnorm{d}\left(\left\{\ourtconorm{2}\left(\tilde{U}_{\mW}^{(d)}(a, x), \tilde{U}_{\mW}^{(d)}(a, y)\right)\right\}_{a \in [d]}\right), &
    \tilde{\dcon}_{\mW}^{(0)}(x, y) &\defas \ourtconorm{d}\left(\left\{\ourtnorm{2}\left(\tilde{R}_{\mW}^{(d)}(a, x), \tilde{R}_{\mW}^{(d)}(a, y)\right)\right\}_{a \in [d]}\right), 
\end{aligned}
$}
\end{equation*}
\begin{equation*}
\adjustbox{max width=\columnwidth}{$
\begin{aligned}
    \tilde{\dsep}_{\mW}^{(1)}(x, y | z) &\defas \ourtnorm{2}\Bigg(
    \tilde{\dsep}_{\mW_{-z}}^{(0)}(x, y), 
    \ourtconorm{2}\Bigg(
    \ourtnorm{d-1}\left(\left\{\ourtconorm{2}\left(\tilde{\dsep}_{\mW_{-z}}^{(0)}(x, a), \tilde{U}_{\mW}^{(d)}(a, z)\right)\right\}_{a \in [d] \setminus \{z\}}\right), 
    \ourtnorm{d-1}\left(\left\{\ourtconorm{2}\left(\tilde{\dsep}_{\mW_{-z}}^{(0)}(y, b), \tilde{U}_{\mW}^{(d)}(b, z)\right)\right\}_{b \in [d] \setminus \{z\}}\right)
    \Bigg)
    \Bigg), \\
    \tilde{\dcon}_{\mW}^{(1)}(x, y | z) &\defas \ourtconorm{2}\Bigg(
    \tilde{\dcon}_{\mW_{-z}}^{(0)}(x, y), 
    \ourtnorm{2}\Bigg(
    \ourtconorm{d-1}\left(\left\{\ourtnorm{2}\left(\tilde{\dcon}_{\mW_{-z}}^{(0)}(x, a), \tilde{R}_{\mW}^{(d)}(a, z)\right)\right\}_{a \in [d] \setminus \{z\}}\right), 
    \ourtconorm{d-1}\left(\left\{\ourtnorm{2}\left(\tilde{\dcon}_{\mW_{-z}}^{(0)}(y, b), \tilde{R}_{\mW}^{(d)}(b, z)\right)\right\}_{b \in [d] \setminus \{z\}}\right)
    \Bigg)
    \Bigg).
\end{aligned}
$}
\end{equation*}
\end{restatable}

Notably, for the d-separation functions relying on graph \emph{un}reachability, we directly derive unreachability from the negation of reachability FOL formul\ae{} rather than computing $\log(1 - \exp(\tilde{R}_{\mathbf{W}}^{(l)}(x, y)))$. This is essential for preserving the correct probabilistic interpretation, as we now demonstrate.

Consider $\mW$ as representing a parametric distribution over discrete graphs $\mA$. Namely, to sample $\mA$, we sample each edge $x \to y$ independently via $\mA_{xy} \sim \text{Bern}(\mW_{xy})$ (i.e. Bernoulli distribution). We denote the entire graph's sampling as $\mA \sim \text{Bern}(\mW)$. We first notice that the differentiable graph reachability and graph unreachability functions on $\mW$ given in \Cref{def:diff-reachability} always yield a \emph{lower bound} to the expected reachability and unreachability on $\mA$ respectively, when $\mA \sim \text{Bern}(\mW)$. 
Intuitively, this is because the max operator provides a lower bound on the probability of a union of events (logical OR), while the product operator correctly computes the probability of an intersection of events (logical AND) when said events are mutually independent or provides a lower bound when they are positively correlated, which is precisely the case for overlapping paths in graphs needed in reachability (a percolation metric). As these operators compose recursively in our reachability computation, they preserve the lower bound properties, leading to the following lemma.
\begin{restatable}[Reachability Percolation Lower Bound]{lemma}{LemmaLowerBoundReachability}
\label{lemma:lower-bound-reachability}
Given a weighted adjacency matrix $\mW \in [0, 1]^{d \times d}$, for any $0 \leq l < d$, and for any pair of nodes $x, y \in [d]$, we have
\begin{align*}
    \tilde{R}_{\mW}^{(l)}(x, y) \leq \log \E_{\mA \sim \text{Bern}(\mW)} \left[ R_{\mA}^{(l)}(x, y) \right] 
    \text{~~and~~} 
    \tilde{U}_{\mW}^{(l)}(x, y) \leq \log \E_{\mA \sim \text{Bern}(\mW)} \left[ U_{\mA}^{(l)}(x, y) \right] .
\end{align*}
\end{restatable}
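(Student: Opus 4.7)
The plan is to prove each inequality by induction on the path length $l$. The base case $l = 0$ is immediate: $\tilde{R}_{\mW}^{(0)}(x,y) = \log \Ind(x=y)$ and $\tilde{U}_{\mW}^{(0)}(x,y) = \log \Ind(x \neq y)$ do not depend on $\mW$ and coincide exactly with $\log \E[R_{\mA}^{(0)}(x,y)]$ and $\log \E[U_{\mA}^{(0)}(x,y)]$.

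For the inductive step on reachability, I would rely on two structural facts about the LogLTN operators: $\ourtnorm{m}$ equals $\log$ of a product exactly, while $\ourtconorm{m}$ lower bounds $\log$ of the max. Unrolling one step of the recursion in \Cref{def:diff-reachability} and invoking $\ourtconorm{m}(\cdot) \leq \max(\cdot)$ yields
\[
\tilde{R}_{\mW}^{(l)}(x,y) \;\leq\; \max\Bigl( \max_{u \in [d]} \bigl(\tilde{R}_{\mW}^{(l-1)}(x,u) + \log W_{uy}\bigr),\; \tilde{R}_{\mW}^{(l-1)}(x,y) \Bigr).
\]
Applying the inductive hypothesis and noting $W_{uy} = \E[\mA_{uy}]$, each inner term is at most $\log$ of a product of two expectations, e.g.\ $\log\bigl(W_{uy}\,\E[R_{\mA}^{(l-1)}(x,u)]\bigr)$. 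Here the central obstacle appears: the events $\{R_{\mA}^{(l-1)}(x,u) = 1\}$ and $\{\mA_{uy} = 1\}$ are generally \emph{not} independent under $\mA \sim \text{Bern}(\mW)$, since a walk from $x$ to $u$ may traverse edges involving $y$ --- precisely the percolation-versus-diffusion gap flagged in the introduction. The fix is the FKG inequality: both $R_{\mA}^{(l-1)}(x,u)$ and $\mA_{uy}$ are monotone nondecreasing boolean functions of the independent Bernoulli edge variables, hence positively associated, giving $W_{uy}\,\E[R_{\mA}^{(l-1)}(x,u)] \leq \E[R_{\mA}^{(l-1)}(x,u)\,\mA_{uy}]$. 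Combining with $\max_i \E[X_i] \leq \E[\max_i X_i]$ (to pull the outer max inside the expectation) closes the induction, since $\max$ coincides with OR on booleans.

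The unreachability bound follows by a dual argument. De Morgan applied to the reachability recursion gives $U_{\mA}^{(l)}(x,y) = \bigwedge_u \bigl(U_{\mA}^{(l-1)}(x,u) \lor (1-\mA_{uy})\bigr) \land U_{\mA}^{(l-1)}(x,y)$, matching the AND-of-OR structure of $\tilde{U}_{\mW}^{(l)}$. Using $\ourtnorm{m} = \log\prod$ exactly and $\ourtconorm{2}(a,b) \leq \max(a,b)$, together with the inductive hypothesis and $\max \E \leq \E \max$, I would derive $\tilde{U}_{\mW}^{(l)}(x,y) \leq \log \prod_u \E[c_u]$, where each $c_u$ is the corresponding boolean OR term. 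Each $c_u$ is monotone nonincreasing in the edge variables, so FKG for decreasing functions yields $\prod_u \E[c_u] \leq \E[\prod_u c_u] = \E[U_{\mA}^{(l)}(x,y)]$, completing the induction.

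The main technical hurdle throughout is the FKG step: without it one would only recover the over-optimistic independence-style estimates that characterize diffusion and matrix-power methods, which overestimate reachability and underestimate unreachability. FKG is what converts the max-product soft-logic semantics into a valid one-sided bound on the true percolation probabilities over $\text{Bern}(\mW)$.
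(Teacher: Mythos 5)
Your proof is correct and follows the same overall strategy as the paper's: induction on $l$, with the exact product t-norm handling conjunctions and the LogMeanExp t-conorm lower-bounding the max, which in turn lower-bounds the probability of a union. The one place where you genuinely sharpen the argument is the justification of the crux step, namely why $\E[R_{\mA}^{(l-1)}(x,u)\,\mA_{uy}] \geq \E[R_{\mA}^{(l-1)}(x,u)]\,\E[\mA_{uy}]$ despite the two events not being independent. The paper's auxiliary lemma (\Cref{lemma:lower-bound-logltn}) \emph{assumes} the random variables are ``mutually independent or positively correlated,'' proves the product bound only pairwise via a covariance identity, and then argues positive correlation informally (``there is no negation and $\mA_{u,y}$ makes a possible contribution''). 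Your appeal to the FKG/Harris inequality --- both $R_{\mA}^{(l-1)}(x,u)$ and $\mA_{uy}$ are monotone nondecreasing Boolean functions of the independent edge indicators, hence positively associated --- is the rigorous form of exactly that claim, and it also cleanly covers the $m$-ary products arising in the unreachability recursion (a product of nonnegative monotone functions is monotone, so Harris iterates to give $\prod_u \E[c_u] \leq \E[\prod_u c_u]$), which the paper's pairwise covariance computation does not literally address. Everything else --- the base case, the use of $\max_i \E[X_i] \leq \E[\max_i X_i]$ to pull the max inside the expectation, and the De Morgan dual with FKG for decreasing functions in the unreachability bound --- matches the paper's argument step for step.
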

%

The lower bound property of both our reachability and unreachability functions is critical—note that naively computing unreachability as $\log(1 - \exp(\tilde{R}_{\mathbf{W}}^{(l)}(x, y)))$ would yield an upper bound instead. This consistency in providing lower bounds is essential for extending our theoretical guarantees to the differentiable d-separation and d-connection functions. As the following lemma demonstrates, these functions also preserve the lower bound property relative to their expected values:
\begin{restatable}[Lower Bound on Expected $d$-Separation Statements]{theorem}{ThmLowerBoundDSep}
\label{thm:lower-bound-dsep}
Given a weighted adjacency matrix $\mW \in [0, 1]^{d \times d}$, for any three nodes $x, y, z \in [d]$,
\begin{equation*}
\adjustbox{max width=\columnwidth}{$
\begin{aligned}
    &\tilde{\dsep}_{\mW}^{(0)}(x, y) \leq \log \E_{\mA \sim \text{Bern}(\mW)} \left[ \dsep_{\mA}^{(0)}(x, y) \right] , &
    &\tilde{\dsep}_{\mW}^{(1)}(x, y \mid z) \leq \log \E_{\mA \sim \text{Bern}(\mW)} \left[ \dsep_{\mA}^{(1)}(x, y \mid z) \right] , \\
    &\tilde{\dcon}_{\mW}^{(0)}(x, y) \leq \log \E_{\mA \sim \text{Bern}(\mW)} \left[ \dcon_{\mA}^{(0)}(x, y) \right] , &
    &\tilde{\dcon}_{\mW}^{(1)}(x, y \mid z) \leq \log \E_{\mA \sim \text{Bern}(\mW)} \left[ \dcon_{\mA}^{(1)}(x, y \mid z) \right] .
\end{aligned}
$}
\end{equation*}
\end{restatable}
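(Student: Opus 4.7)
The plan is to derive each of the four inequalities by combining \Cref{lemma:lower-bound-reachability} with two generic bounds on the LogLTN operators and a single monotonicity observation. Specifically: (i) $\ourtnorm{m}$ equals $\log\prod$, so by the FKG inequality it lower-bounds the log-probability of the intersection of positively correlated events on a product Bernoulli space; and (ii) $\ourtconorm{m}$ lower-bounds $\log\max$ by the LogLTN estimate quoted just above the theorem, while $\log\max_i P(A_i)\leq \log P(\bigcup_i A_i)$ trivially. Because the edges of $\mA\sim\mathrm{Bern}(\mW)$ are independent, applying FKG at each step only requires checking that the events being intersected are monotone in the same direction.

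The key structural observation is that every event appearing in the argument is monotone in the edge set of $\mA$: $R_{\mA}(\cdot,\cdot)$, $\dcon_{\mA}^{(0)}$, and $\dcon_{\mA}^{(1)}(\cdot,\cdot\mid z)$ are increasing (adding edges can only create more paths and more ancestors of $z$), whereas $U_{\mA}(\cdot,\cdot)$, $\dsep_{\mA}^{(0)}$, $\dsep_{\mA}^{(1)}(\cdot,\cdot\mid z)$, and negated edge indicators are decreasing; events defined on the subgraph $\mA\subg{z}$ are also monotone in the full edge set of $\mA$ (they merely ignore edges touching $z$). Consequently every conjunction appearing inside $\tilde{\dsep}_{\mW}^{(0)}$, $\tilde{\dsep}_{\mW}^{(1)}$, $\tilde{\dcon}_{\mW}^{(0)}$, $\tilde{\dcon}_{\mW}^{(1)}$ is a conjunction of co-directionally monotone events, and FKG yields the required lower bound on the intersection probability.

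For the 0th-order bounds, I would expand $\tilde{\dcon}_{\mW}^{(0)}(x,y)$, apply \Cref{lemma:lower-bound-reachability} termwise to obtain $\tilde{R}_{\mW}^{(d)}(a,x)+\tilde{R}_{\mW}^{(d)}(a,y)\leq \log\E[R_{\mA}(a,x)]+\log\E[R_{\mA}(a,y)]\leq \log P(a\leadsto x\land a\leadsto y)$ by FKG on the two increasing events, and then push through the outer $\ourtconorm{d}$ using monotonicity together with the LogLTN bound $\ourtconorm{m}(\{\log p_i\})\leq \log\max_i p_i\leq \log P(\bigcup_i A_i)$, noting that the union over $a$ is exactly the event $\{\dcon_{\mA}^{(0)}(x,y)=1\}$. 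A symmetric argument with $\tilde{U}$ in place of $\tilde{R}$ handles $\tilde{\dsep}_{\mW}^{(0)}$: the inner $\ourtconorm{2}$ bounds $\log P(a\not\leadsto x\lor a\not\leadsto y)$, and the outer $\ourtnorm{d}$, which equals $\sum_a\log P(\cdot)$, is at most $\log P(\bigcap_a\cdot)=\log\E[\dsep_{\mA}^{(0)}(x,y)]$ by FKG on decreasing events.

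For the 1st-order bounds I would first apply the 0th-order results (relative to the subgraph $\mW\subg{z}$) to each term $\tilde{\dsep}_{\mW\subg{z}}^{(0)}$ or $\tilde{\dcon}_{\mW\subg{z}}^{(0)}$ appearing in \Cref{def:diff-dsep}, yielding bounds in terms of $\E[\dsep_{\mA\subg{z}}^{(0)}]$ or $\E[\dcon_{\mA\subg{z}}^{(0)}]$. The outer unions and intersections in \Cref{def:diff-dsep} then mirror exactly the FOL structure of \Cref{def:fol-d-sep}, so combining them with \Cref{lemma:lower-bound-reachability} for $\tilde{R}(\cdot,z)$ or $\tilde{U}(\cdot,z)$ and the two operator inequalities reproduces $\log\E[\dsep_{\mA}^{(1)}(x,y\mid z)]$ or $\log\E[\dcon_{\mA}^{(1)}(x,y\mid z)]$ on the right-hand side. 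The main obstacle is the bookkeeping: at each recursive layer one has to verify that the events being combined are monotone in a compatible direction---in particular that the 1st-order d-separation formula combines the decreasing events $\dsep_{\mA\subg{z}}^{(0)}(\cdot,\cdot)$ and $U_{\mA}(\cdot,z)$, whereas the 1st-order d-connection formula combines the increasing events $\dcon_{\mA\subg{z}}^{(0)}(\cdot,\cdot)$ and $R_{\mA}(\cdot,z)$---so that FKG fires in the correct direction at every step.
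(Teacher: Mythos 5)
Your proposal is correct and follows essentially the same route as the paper: apply \Cref{lemma:lower-bound-reachability} termwise, then propagate the lower bounds through the formulae of \Cref{def:diff-dsep} using the monotonicity of the LogLTN operators together with the facts that the product lower-bounds the probability of an intersection of positively associated events and that LogMeanExp lower-bounds $\log\max$, which in turn lower-bounds the log-probability of a union. The one substantive difference is your justification of the ``positively correlated'' step: the paper's auxiliary \Cref{lemma:lower-bound-logltn} only verifies pairwise nonnegative covariance, which for $m>2$ does not by itself imply $\prod_i p_i \leq \prob(\bigcap_i A_i)$, whereas your observation that every event in sight (reachability, unreachability, the $\mA\subg{z}$-subgraph events, and the low-order d-separation/d-connection events) is monotone in the edge configuration of the product Bernoulli measure lets FKG supply exactly the needed positive association at every conjunction. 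In that respect your write-up is a more rigorous version of the paper's argument rather than a different one; the bookkeeping of monotonicity directions you describe (decreasing events throughout $\tilde{\dsep}$, increasing throughout $\tilde{\dcon}$) is the correct and complete check.
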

%
%
These lemmas establish that our differentiable functions provide lower bounds on the logarithm of expected d-connection and d-separation statements when viewing $\mathbf{W}$ as parameterizing a distribution over discrete graphs. This mathematical guarantee opens a principled pathway for gradient-based optimization by maximizing these lower bounds. 


\section{\ourmethod (\ourmethodcomplete)} \label{sec:ci-loss}  \label{sec:our-method}


We now introduce \ourmethod, a practical instantiation to demonstrate the validity of our differentiable $d$-separation scores (\Cref{sec:diff-dsep}). This implementation employs certain heuristics and simplifications to facilitate optimization. The primary goal is not to claim  \ourmethod's optimality, but rather to provide empirical validation of the framework's potential for causal discovery and open avenues for future research exploring more sophisticated realizations of the core principles.
In \ourmethod, we parameterize the weighted adjacency matrix $\mathbf{W}$ with parameters $\theta \in \mathbb{R}^{d \times d}$ via the sigmoid transformation $\mathbf{W} = \sigma(\theta)$. We develop objective functions that encourage alignment between the d-separation/d-connection statements implied by the graph structure $\mathbf{W}$ and conditional independence patterns in data. Rather than using binary decisions based on thresholded p-values, \ourmethod directly incorporates raw p-values from statistical independence tests as soft CI labels from data. This approach makes the model naturally robust to uncertainty in CI testing, as gradient-based optimization can prioritize clear cases (p-values near 0 indicating dependence or near 1 indicating independence) over borderline ones. 

Our objective functions consists of three components: Acyclicity, "True Positive," and "True Negative" losses\footnote{We use these terms loosely, as p-values in practice may not perfectly reflect true CI statements.}. A true positive occurs when the model predicts a high probability of d-separation for variable pairs showing high independence p-values in the data. Conversely, a true negative occurs when the model predicts a high probability of d-connection for pairs showing low p-values (strong dependence).


\begin{definition}[Multi-task Constraint-based CI Statement Losses of \ourmethod]   \label{def:multitask-loss}
    Given a dataset $\data$ with $n$ samples and $d$ variables, and the p-values $\pvals(\cdot)$ of some statistical conditional independence tests on all the 0th-order and 1st-order statements.
    Let $\sI_0 = \{(x, y) \mid x, y \in [d], x > y\}$ and $\sI_1 = \{ (x, y, z) \mid x, y, z \in [d], x > y, x \neq z, y \neq z \}$.
    The CI Statement ``true positive (TP)'' losses $\loss_{\text{TP-0}}$ and $\loss_{\text{TP-1}}$, the ``true negative (TN)'' losses $\loss_{\text{TN-0}}$ and $\loss_{\text{TN-1}}$, as well as the log-determinant DAG acyclicity loss $\loss_{\text{DAG}}$ from DAGMA~\citep{bello2022dagma}, defined on the model parameters $\theta \in \sR^{d \times d}$, are as follows:
    \begin{equation*}
    \adjustbox{max width=\columnwidth}{$
    \begin{aligned}
        &\loss_{\text{TP-0}}(\theta, \data) = - \sum_{(x, y) \in \sI_0}  \tilde{\dsep}_{\sigma(\theta)}^{(0)}(x, y) \pvals(x, y) , &
        &\loss_{\text{TP-1}}(\theta, \data) = - \sum_{(x, y, z) \in \sI_1}  \tilde{\dsep}_{\sigma(\theta)}^{(1)}(x, y \mid z) \pvals(x, y \mid z) , \\
        &\loss_{\text{TN-0}}(\theta, \data) = - \sum_{(x, y) \in \sI_0}  \tilde{\dcon}_{\sigma(\theta)}^{(0)}(x, y) (M_0 - \pvals(x, y)) , &
        &\loss_{\text{TN-1}}(\theta, \data) = - \sum_{(x, y, z) \in \sI_1}  \tilde{\dcon}_{\sigma(\theta)}^{(1)}(x, y \mid z) (M_1 - \pvals(x, y \mid z)) , \\
        &\loss_{\text{DAG}}(\theta, s) = - \log \det (s\mI - \sigma(\theta)) + d\log s .
    \end{aligned}
    $}
    \end{equation*}
    where $M_0 = \max_{\sI_0} \pvals(x, y)$, $M_1 = \max_{\sI_1} \pvals(x, y \mid z)$, $\sigma$ is the sigmoid function, and $s$ is the hyperparameter controlling the domain of the log-determinant DAG loss of DAGMA~\citep{bello2022dagma}.
\end{definition}
We note that, assuming the p-values accurately reflect the true CI patterns, minimizing the ``TP'' and ``TN'' losses maximizes $\tilde{\dsep}$ and $\tilde{\dcon}$ on the correct queries. Furthermore, since $\tilde{\dsep}$ and $\tilde{\dcon}$ are lower bounds of the expected d-separation and d-connection statements over $\text{Bern}(\mW)$ (\Cref{thm:lower-bound-dsep}), we are maximizing lower bounds - the correct optimization direction.
The following lemmas show the consistency of the losses and the validity of taking sum aggregation over individual CI queries.

\begin{restatable}[Consistency of Multi-Task CI Losses]{lemma}{LemmaLossConsistency}
\label{lemma:loss-consistency}
Given a faithful data $\data$, as the sample size $n \to \infty$ and the LogMeanExp temperature $\alpha \to 0^+$ (\Cref{eq:logical-operators}), the optimal DAG $\mA^{*}$ achieves the minimum value on each of $\loss_{\text{TP-0}}, \loss_{\text{TP-1}}, \loss_{\text{TN-0}}, \loss_{\text{TN-1}}$, and $\loss_{\text{DAG}}$.
\end{restatable}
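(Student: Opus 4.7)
The plan is to show that each of the five losses is bounded below by $0$ over the entire parameter space and attains $0$ at $\mW = \mA^{*}$ (viewed as the limit $\sigma(\theta)\to\mA^{*}$) in the asymptotic regime, so that $\mA^{*}$ is a global minimizer of each loss simultaneously.

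First, I would formalize the two limits. Under faithfulness, as $n\to\infty$ any consistent CI test yields $\pvals(x,y\mid\gZ)\to 1$ whenever $x\indep_{\data} y\mid\gZ$ (equivalently $x\indep_{\mA^{*}} y\mid\gZ$) and $\pvals(x,y\mid\gZ)\to 0$ otherwise. As $\alpha\to 0^{+}$, the LogLTN sandwich bound stated after \Cref{eq:logical-operators} forces $\ourtconorm{m}\to\log(\tconorm{m}(\cdot))$ exactly, so the soft operators coincide with Boolean AND/OR on $\{0,1\}^{m}$ inputs. Evaluating \Cref{def:diff-reachability} and \Cref{def:diff-dsep} at the binary matrix $\mA^{*}$ therefore reduces exactly to the logarithms of the FOL predicates of \Cref{def:fol-reachability} and \Cref{def:fol-d-sep}, which by \Cref{thm:fol-reachability-correctness} and \Cref{thm:fol-d-sep-correctness} recover d-separation and d-connection in $\mA^{*}$. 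Consequently $\tilde{\dsep}^{(k)}_{\mA^{*}}$ equals $0$ on the true d-separation queries and $-\infty$ on the d-connection queries, with the opposite pattern for $\tilde{\dcon}^{(k)}_{\mA^{*}}$.

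Second, I would evaluate each loss at $\mA^{*}$. Every summand of $\loss_{\text{TP-0}}(\mA^{*},\data)$ takes the form $-\tilde{\dsep}^{(0)}_{\mA^{*}}(x,y)\,\pvals(x,y)$, and by faithfulness $\pvals(x,y)\to 1$ exactly where $\tilde{\dsep}^{(0)}_{\mA^{*}}(x,y)=0$ while $\pvals(x,y)\to 0$ exactly where $\tilde{\dsep}^{(0)}_{\mA^{*}}(x,y)=-\infty$, so each summand vanishes. The same pattern-matching argument handles $\loss_{\text{TP-1}},\loss_{\text{TN-0}},\loss_{\text{TN-1}}$, where the TN coefficients $M_{k}-\pvals$ play the role of the TP coefficients against $\tilde{\dcon}^{(k)}_{\mA^{*}}$. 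For the acyclicity term, \citet{bello2022dagma} proved $\loss_{\text{DAG}}(\mA^{*},s)=0$ whenever $\mA^{*}\in\sW^{s}$ is a DAG. For the lower bound, \Cref{thm:lower-bound-dsep} gives $\tilde{\dsep}^{(k)}_{\mW},\tilde{\dcon}^{(k)}_{\mW}\leq 0$ globally (they lower-bound log-expectations of $\{0,1\}$-valued quantities), and because $\pvals\geq 0$ and $M_{k}-\pvals\geq 0$ by construction, each TP/TN summand is non-negative on all of parameter space; DAGMA's log-determinant inequality gives $\loss_{\text{DAG}}(\mW,s)\geq 0$ on $\sW^{s}$. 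Thus the infimum of each loss is $0$ and is attained by $\mA^{*}$ (non-uniquely in general, together with other members of its $1$-equivalence class).

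The main obstacle is the indeterminate form $0\cdot(-\infty)$ appearing at $\mA^{*}$ on genuinely d-connected pairs, where $\pvals\to 0$ while $\tilde{\dsep}^{(k)}_{\mA^{*}}\to -\infty$ (and symmetrically for $\tilde{\dcon}$). I would resolve it by taking the limits in a controlled order: restrict the domain to $\sigma(\theta)\in(0,1)^{d\times d}$ so $\tilde{\dsep}^{(k)}$ stays finite, and show that along any trajectory $\sigma(\theta_{t})\to\mA^{*}$ the offending summands converge to $0$ by using the (standard) exponential concentration of consistent CI-test p-values, which dominates the at-most-linear divergence of $\tilde{\dsep}^{(k)}$ in $\log\min_{ij}\sigma(\theta_{ij})$. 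Everything else is bookkeeping with the monotonicity of the LogLTN operators, the correctness theorems already cited, and the DAGMA acyclicity bound.
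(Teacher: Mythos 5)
Your proposal follows essentially the same route as the paper's proof: faithfulness plus $n\to\infty$ pins the p-values to $\{0,1\}$, $\alpha\to 0^+$ makes the LogLTN operators exact so that on binary inputs they reduce to Boolean logic, the correctness results (\Cref{thm:fol-reachability-correctness}, \Cref{thm:fol-d-sep-correctness}) identify the soft scores at $\mA^{*}$ with true d-separation/d-connection, the losses are then matched term by term, and DAGMA's result disposes of $\loss_{\text{DAG}}$. The differences are in the bookkeeping, and they are worth noting. You read $\tilde{\dsep},\tilde{\dcon}$ literally as log-probabilities in $\sR_{-}$ (consistent with the codomain in \Cref{def:diff-dsep}), which makes every TP/TN summand nonnegative and yields a clean global infimum of $0$; the paper's proof instead evaluates the scores in probability space, concluding they tend to $1$ on matched queries and $0$ otherwise, and that the minimum is $-\sum\Ind[\cdot]$, a nonpositive quantity. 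The two readings are not interchangeable, and yours is the one faithful to the definitions; the paper's probability-space reading is precisely what lets it sidestep the $0\cdot(-\infty)$ form you correctly flag. Your proposed resolution of that form is, however, the weak link: you compare the concentration rate of p-values in $n$ against the divergence of $\tilde{\dsep}$ along a trajectory $\sigma(\theta_t)\to\mA^{*}$, but these are limits in two independent variables, so ``domination'' is undefined without specifying how $n$ and $t$ are coupled. An iterated limit (send $n\to\infty$ first so the offending coefficients are exactly $0$, then restrict the sums to summands with strictly positive coefficients, or equivalently adopt the convention $0\cdot\infty=0$) is the cleaner repair. With that fix your argument is correct and, on the indeterminate-form point, more careful than the paper's own proof.
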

%

\begin{restatable}[Mutual Independence of Low-order CI Statements]{lemma}{LemmaLossGraphoidIndependence} 
\label{lemma:loss-graphoid}
All 0th- and 1st-order CI statements over the variable sets $\sI_0 = \{(x, y) \mid x, y \in [d], x > y\}$ and $\sI_1 = \{ (x, y, z) \mid x, y, z \in [d], x > y, x \neq z, y \neq z \}$ are mutually independent. In other words, none of these CI statements can be implied from any other of these CI statements via the graphoid axioms~\citep{pearl1988probabilistic}.
\end{restatable}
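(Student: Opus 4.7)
The plan is to prove the lemma by enumerating the graphoid axioms of~\citet{pearl1988probabilistic} and showing that, starting from any single CI statement $\psi \in \sI_0 \cup \sI_1$, no sequence of graphoid derivations can yield a distinct statement $\phi \in \sI_0 \cup \sI_1$. The key structural observation is that every statement in $\sI_0 \cup \sI_1$ takes the form $x \indep y \mid \gZ$ with $|\gZ| \leq 1$ and singleton left-hand-side variables $x$ and $y$, and this singleton-LHS property severely restricts which graphoid axioms can be applied.

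First I would classify the five graphoid axioms by their applicability to singleton-LHS premises. Contraction and Intersection each require two distinct premises, so they cannot be applied when deriving from a single statement. Decomposition and Weak Union each require a premise of the form $A \indep (B \cup D) \mid C$ with $B, D$ disjoint nonempty (hence $|B \cup D| \geq 2$); no statement in $\sI_0 \cup \sI_1$ satisfies this nontrivially, so their applications either return the premise itself or yield a tautology. The only axiom applicable to $\psi \in \sI_0 \cup \sI_1$ in a nontrivial way is Symmetry, which maps $x \indep y \mid \gZ$ to $y \indep x \mid \gZ$. Because $\sI_0$ and $\sI_1$ are defined under the canonical ordering $x > y$, the symmetric counterpart $\mathrm{Sym}(\psi)$ lies outside $\sI_0 \cup \sI_1$; and since Symmetry is involutive, the closure of $\{\psi\}$ under graphoid derivations equals $\{\psi, \mathrm{Sym}(\psi)\}$, whose intersection with $\sI_0 \cup \sI_1$ is the singleton $\{\psi\}$.

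One subtlety I would address is whether a multi-step derivation using both $\psi$ and $\mathrm{Sym}(\psi)$ as ``two'' distinct premises in Contraction or Intersection can yield a new statement in $\sI_0 \cup \sI_1$. Since $\mathrm{Sym}(\psi)$ is logically equivalent to $\psi$, any such derivation cannot produce more than what is already entailed by $\psi$ alone. A direct case-by-case check of the variable role assignments in Contraction ($A \indep B \mid C$ with $A \indep D \mid B \cup C$) and Intersection ($A \indep B \mid C \cup D$ with $A \indep D \mid C \cup B$) confirms this: matching $\psi$ and $\mathrm{Sym}(\psi)$ into the two templates would force the common role $A$ to equal the two distinct LHS variables $x \neq y$ of $\psi$ simultaneously, which fails; the remaining matchings leave no valid assignment for the $D$ role that produces a new singleton-LHS conclusion. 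The main obstacle is completing this case analysis rigorously across all variable role permutations; the core insight that streamlines it is that the singleton-LHS structure of $\sI_0 \cup \sI_1$ is closed only under Symmetry, and the ordering convention $x > y$ canonicalizes each unordered pair and triple so that Symmetry cannot produce a second element of $\sI_0 \cup \sI_1$ distinct from $\psi$.
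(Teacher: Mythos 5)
Your proposal is correct and follows essentially the same route as the paper: a rule-by-rule case analysis of the five graphoid axioms, using the $x>y$ canonical ordering to dispose of Symmetry and the fact that the remaining axioms force a variable set of cardinality at least two (in a premise or a conclusion) to dispose of Decomposition, Weak Union, Contraction, and Intersection. The only minor difference is that for Contraction and Intersection the paper argues the \emph{conclusion} $\gX \indep \gY \cup \gW \mid \gZ$ falls outside $\sI_0 \cup \sI_1$, whereas you argue the two required \emph{premises} cannot be nontrivially instantiated from a single statement and its symmetric copy; both observations are valid and close the same cases.
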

%

In \Cref{appdx:model-details}, we provide the complete algorithm for \ourmethod along with detailed descriptions of additional heuristics employed to address optimization challenges, including: PCGrad~\citep{yu2020gradient} for resolving conflicting gradients in multi-task learning, Discrete Langevin Proposal (DLP)~\citep{zhang2022langevin} for efficient exploration of the weight space, and an optimization-time DAG selection score based on CI statement measures from the data.

\section{Experiments}  \label{sec:experiments}
%
%
This section presents an empirical evaluation of \ourmethod's ability to discover DAGs whose d-separation statements are consistent with those derived from the underlying causal structure that generated the data. Our objective is not to test superiority over baselines in terms of exact structural recovery (that is left to \Cref{appdx:exp-results}), but rather to demonstrate that \ourmethod successfully achieves its intended purpose: learning DAGs that entail the same low-order conditional independence statements as the ground truth. The experimental results indicate that our differentiable d-separation framework offers a promising direction for causal discovery.

\begin{figure}
    \centering
    \begin{minipage}{0.58\textwidth}
        \centering
        \includegraphics[width=\linewidth]{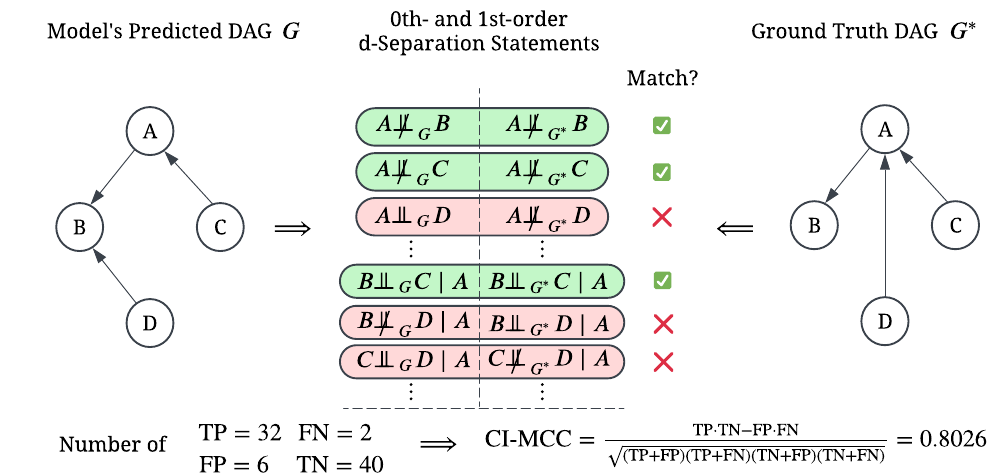}
        \caption{The Low-order CI Statement Matthews Correlation Coefficient (CI-MCC) metric measures alignment between the low-order d-separation statements predicted by the causal discovery model, and those implied from the ground-truth DAG (symmetric statements are considered).}
        \label{fig:ci-mcc-metric}
    \end{minipage}
    \hfill
    \begin{minipage}{0.36\textwidth}
        \centering
        \includegraphics[width=\linewidth]{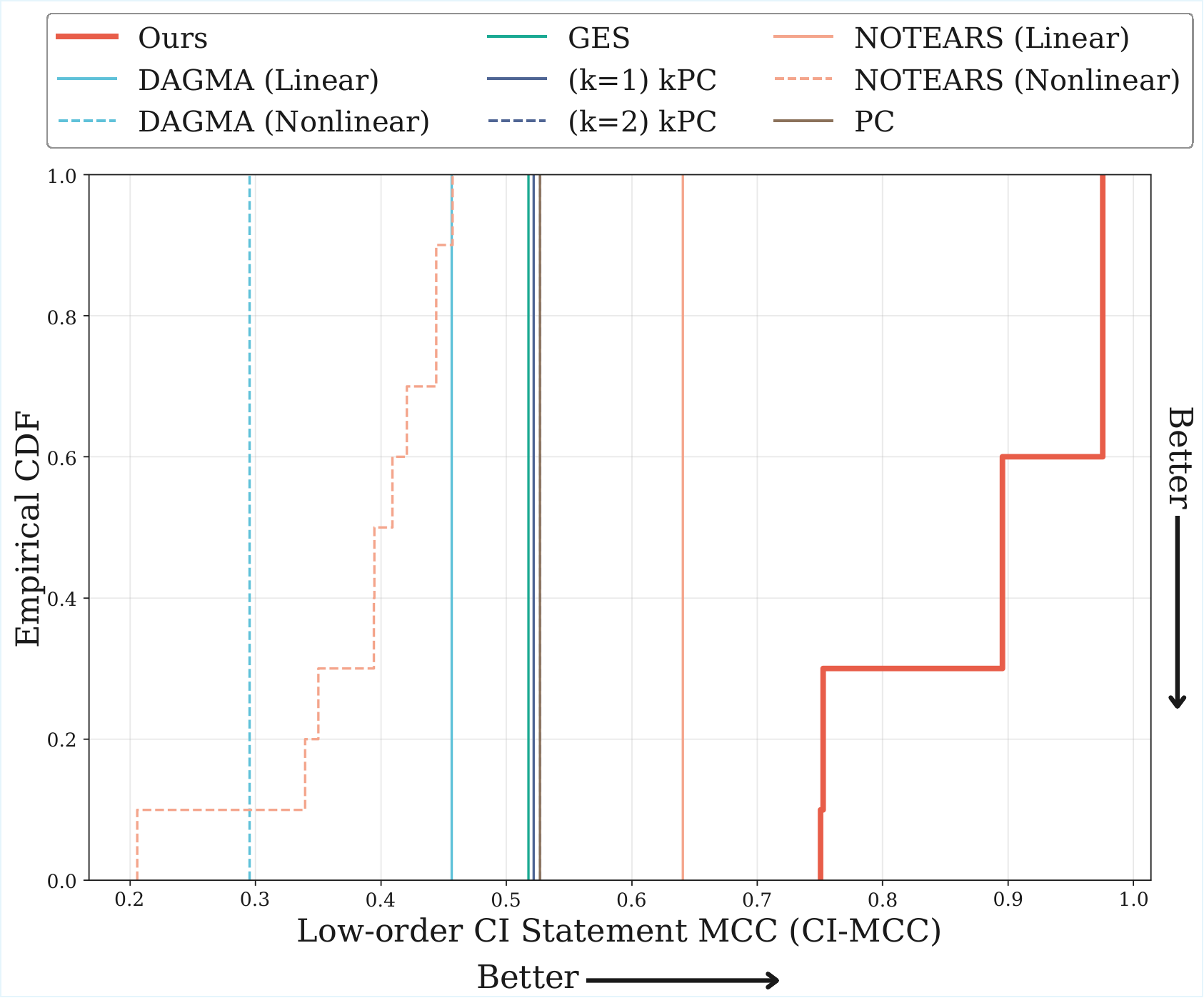}
        \caption{The empirical CDF of CI-MCC on Sachs~\citep{sachs2005causal}. 
        \revision{The closer the curves to the bottom-right ($\searrow$), the better the performance.}
        \textbf{\ourmethod achieves the best performance compared to all baseline methods.}}
        \label{fig:exp-sachs}
    \end{minipage}
    \vspace{-10pt}
\end{figure}

\paragraph{Evaluation.}
To evaluate causal discovery from a CI statement alignment perspective, we propose a  metric that aligns with this objective: the \textbf{Low-order CI Statement Matthews Correlation Coefficient (CI-MCC)}, illustrated in \Cref{fig:ci-mcc-metric}. Given a model's predicted causal graph (which may be a DAG, PDAG, CPDAG, or k-essential graph), we derive the set of all 0th- and 1st-order d-separation statements implied by the structure. We then compare these against the corresponding d-separation statements derived from the ground-truth DAG that generated the data. Treating this as a binary classification task, we compute the Matthews Correlation Coefficient, which ranges from $[-1, 1]$ with higher values indicating better alignment between model-predicted and ground-truth CI statements. We select MCC over alternatives like F1 score due to its robustness to class imbalance—a critical consideration since the true ratio of independence to dependence relationships varies widely across different causal structures and is typically unknown in real-world settings. For completeness, results using conventional causal discovery metrics, such as graph structure F1 and Structural Hamming Distance (SHD) scores, are provided in \Cref{appdx:exp-results}.

\vspace{-8pt}
\paragraph{Baselines.}
We compare \ourmethod with representative causal discovery methods spanning different paradigms: constraint-based approaches including PC~\citep{spirtes2000causation} and k-PC~\citep{kocaoglu2023characterization}; score-based methods such as GES~\citep{GESBIC}; and continuous optimization approaches including NOTEARS (linear and nonlinear)~\citep{sun2023ntsnotears} and DAGMA (linear and nonlinear)~\citep{bello2022dagma}. This selection provides a comprehensive comparison landscape across methodological families. Detailed baseline configurations are provided in \Cref{appdx:baseline-details}.

\vspace{-8pt}
\paragraph{Experimental Settings.}
We evaluate methods on both synthetic and real-world datasets. For synthetic data\revision{, we generate binary data requiring no normalization.} We follow a protocol similar to \citet{kocaoglu2023characterization}, generating binary datasets from randomly constructed DAGs. We use both Erdős–Rényi (ER) and Scale-Free (SF) graph models via the pyAgrum package~\citep{pyAgrum}, with varying complexity parameters: nodes $d \in \{10, 50\}$, edge-to-node ratio $r \in \{2, 4\}$, and sample sizes $n \in \{100, 1000, 10000, 100000\}$. For each configuration, we generate 10 datasets. All constraint-based methods employ Fisher-z tests~\citep{fisher1915frequency} (with potential of other CI tests as alternative; details in appendix), for computing conditional independence p-values, ensuring a fair comparison.
For real-world validation, we use the Sachs dataset~\citep{sachs2005causal}, a benchmark protein signaling network with 11 variables and a known ground-truth causal structure derived from experimental interventions. This dataset represents a challenging real-world case where the underlying causal mechanisms are likely nonlinear and complex. \revision{We employ the standard discretized version preprocessed via the Hartemink discretization method, which converts continuous protein concentrations into 3-level categorical variables (low, average, high) while preserving dependence structure.}

\revision{We note that our CI-constraint-based approach offers inherent robustness to preprocessing artifacts that can affect score-based methods: standard CI tests (e.g. Chi-squared for discrete data, Fisher-Z for continuous data) are invariant to scaling transformations, whereas likelihood-based objectives can exploit variance differences as spurious causal signals~\cite{reisach2021beware}. This makes our framework particularly robust across different preprocessing pipelines.}

\begin{figure*}[t]
    \centering
    \begin{tabular}{@{}c@{\hspace{0.3cm}}c@{}}
        \begin{subfigure}[b]{0.4\textwidth} 
            \centering
            \includegraphics[width=\textwidth]{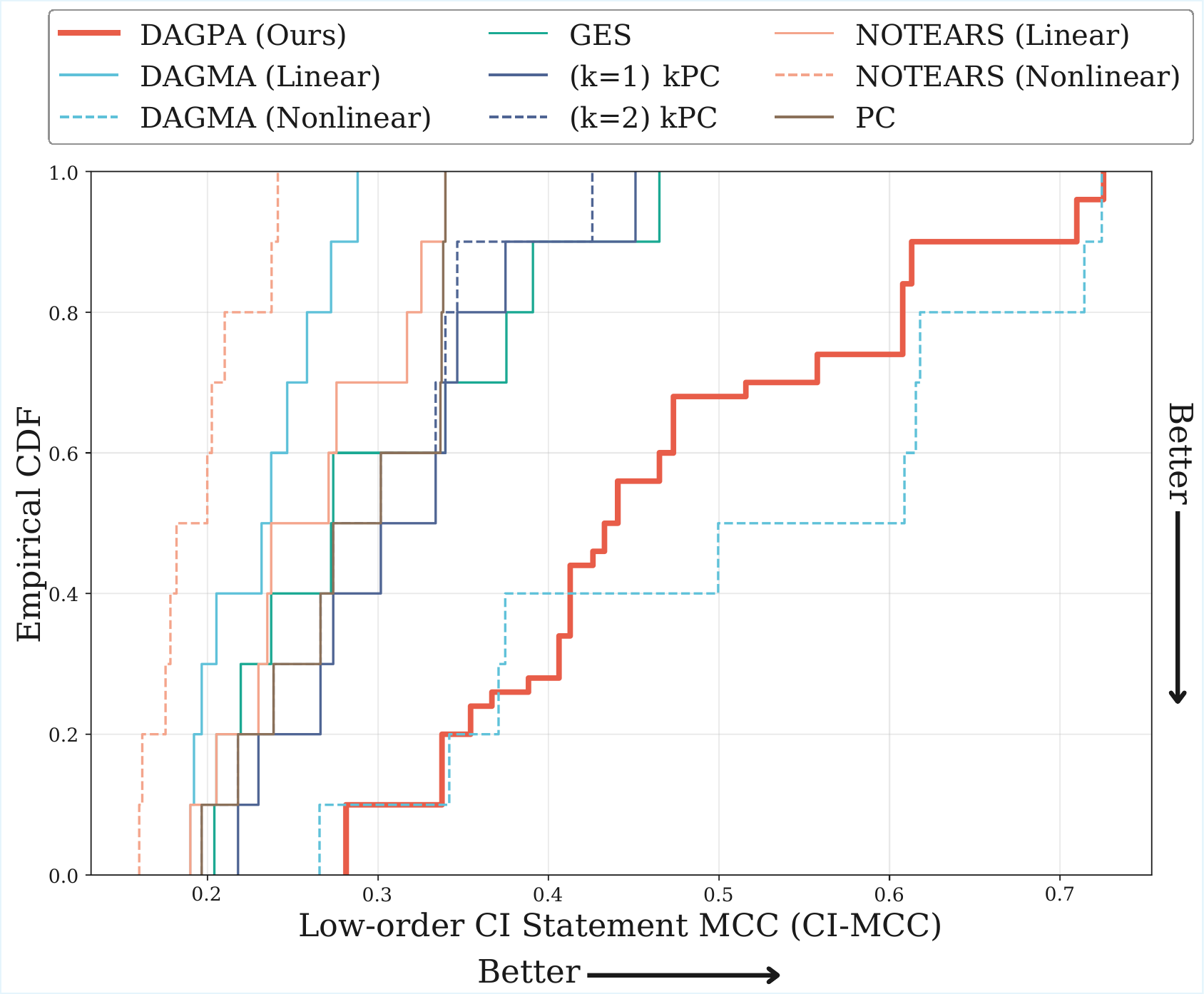}
            \caption{ER $d=10, r=2, n=100$}
             \label{fig:exp-synthetic-binary-a}
        \end{subfigure} &
        \begin{subfigure}[b]{0.4\textwidth} 
            \centering
            \includegraphics[width=\textwidth]{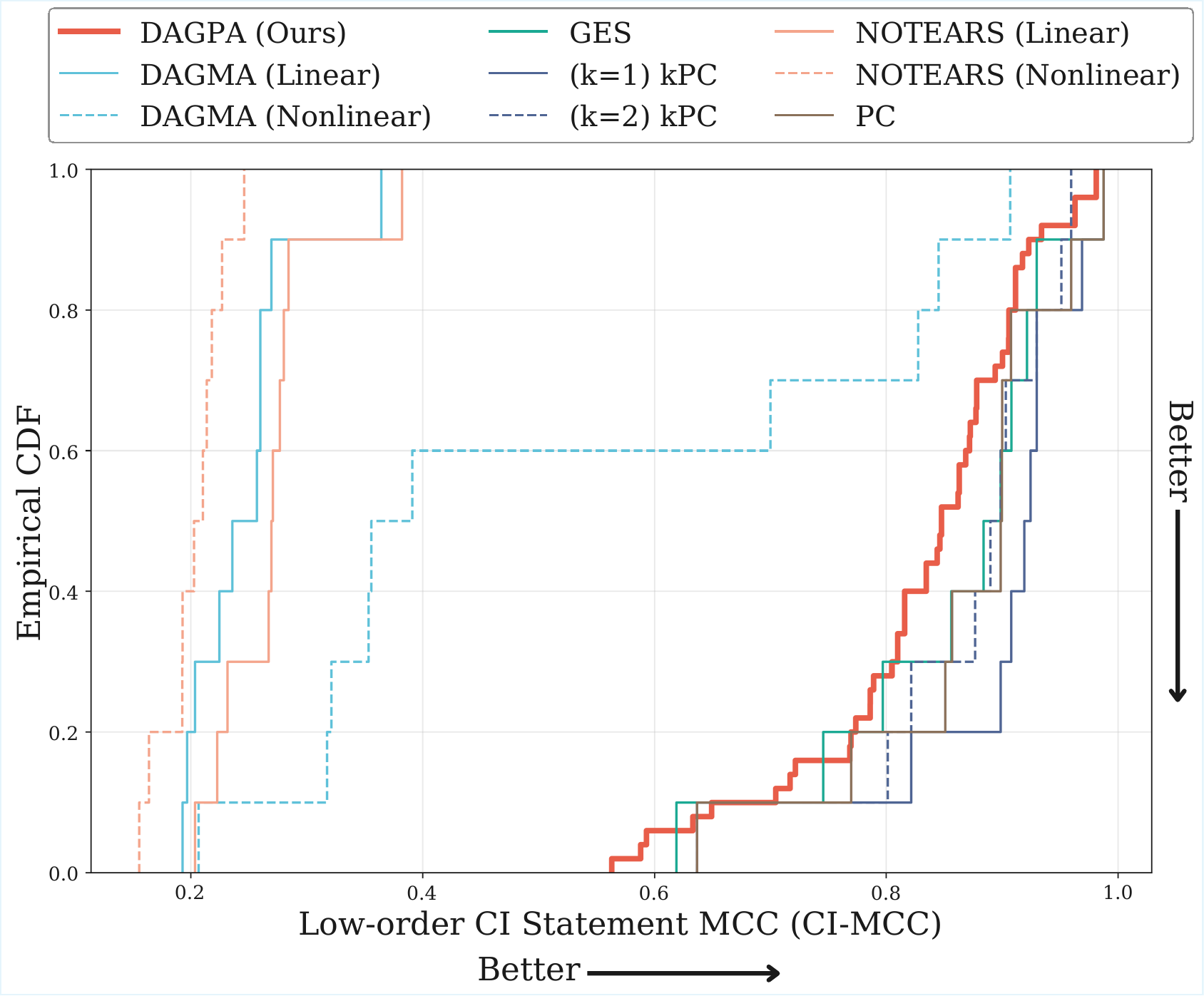}
            \caption{ER $d=10, r=2, n=10000$}
            \label{fig:exp-synthetic-binary-b}
        \end{subfigure} \\
    \end{tabular}
    \caption{The empirical cumulative distribution functions (CDFs) of the low-order CI Statement Matthews Correlation Coefficient (CI-MCC) on two synthetic binary settings.
    The closer the curves to the bottom-right ($\searrow$), the better the performance. 
    {\bf Our method achieves on-par performance to the constraint-based baselines in high-sample regime (\Cref{fig:exp-synthetic-binary-b}) while demonstrating much better robustness in low-sample regime (\Cref{fig:exp-synthetic-binary-a}).}
    }
    \label{fig:exp-synthetic-binary}
    \vspace{-10pt}
\end{figure*}

\vspace{-8pt}
\paragraph{Results on Sachs.}
To evaluate performance on real-world data with potentially complex underlying mechanisms, we test \ourmethod on the Sachs dataset~\citep{sachs2005causal}, a benchmark protein signaling network with 11 variables and 853 samples. \Cref{fig:exp-sachs} displays the empirical CDF of CI-MCC scores across methods. \ourmethod demonstrates dramatically superior performance, achieving CI-MCC scores of 0.75-0.98, while all baseline methods cluster between 0.3-0.65. For our method, we select the 10 DAGs with highest DAG selection scores to reflect the uncertainty inherent in causal discovery from real data. For baselines that depend on random seeds, we run each method with 10 different initializations to ensure fair comparison. The substantial performance gap highlights \ourmethod's ability to effectively discover causal structures in real-world data where the underlying data-generating process is complex and potentially nonlinear. This confirms that our differentiable d-separation framework, combined with gradient-informed discrete sampling, provides robust causal discovery capabilities that generalize well beyond synthetic settings to real-world applications where ground truth is available but unknown during learning.

\vspace{-8pt}
\paragraph{Results on synthetic binary data.}
\Cref{fig:exp-synthetic-binary} presents the empirical cumulative distribution functions (CDFs) of CI-MCC scores across methods for different sample sizes. For each baseline, we include one result per data instance in the batch (10 results total), while for \ourmethod, we select the 5 discovered DAGs with the highest DAG selection score (\Cref{sec:dag-selection-score}) per instance, yielding 50 total evaluations. 

In low-sample settings (n=100, \Cref{fig:exp-synthetic-binary-a}), \ourmethod demonstrates superior performance compared to most baselines and performs on-par with DAGMA-Nonlinear~\citep{bello2022dagma}, with CDF curves positioned toward the bottom-right indicating consistently higher CI-MCC scores. The advantage of \ourmethod is particularly pronounced when compared to constraint-based methods (PC, k-PC), which struggle in low-sample regimes due to their reliance on hard conditional independence statements derived from uncertain p-values. Most score-based methods, including DAGMA-linear and both NOTEARS variants~\citep{sun2023ntsnotears}, also underperform in this setting, likely due to overfitting to limited data.

As sample size increases to n=10000 (\Cref{fig:exp-synthetic-binary-b}), most methods improve as expected, with \ourmethod maintaining competitive performance alongside constraint-based approaches. This pattern holds consistently across different graph structures and dimensions (see \Cref{appdx:exp-results}), confirming that \ourmethod effectively addresses RQ1 by exhibiting strong robustness in low-sample settings while maintaining competitive performance when data is abundant.

\vspace{-5pt}
\section{Related work}
\vspace{-5pt}

\textbf{Constraint-based Methods.} These methods discover causal structures through conditional independence (CI) testing. The PC algorithm~\citep{spirtes1991algorithm,spirtes2000causation} produces CPDAGs representing Markov equivalence classes but struggles with small samples due to unreliable high-order CI tests. Recent advances like LOCI~\citep{wienobst2020recovering} and k-PC~\citep{kocaoglu2023characterization} demonstrate that low-order CI statements can sufficiently identify causal structures in many practical settings. Our work builds on these insights but uniquely transforms discrete CI decisions into a differentiable learning framework, further enhancing robustness in the small sample regime.

\textbf{Score-based Methods.} These methods optimize a score function over the DAG space. Traditional approaches like GES~\citep{GESBIC} use greedy search with information-theoretic criteria but face scalability challenges due to the discrete DAG space. 
NOTEARS~\citep{zheng2018dags} introduced a breakthrough with differentiable acyclicity characterization, enabling gradient-based optimization on weighted adjacency matrices and spurring extensions like nonlinear variants~\citep{zheng2020learning} and improved acyclicity constraints in DAGMA~\citep{bello2022dagma}.  
Our framework builds upon these continuous optimization advances but redirects the objective toward maximizing agreement with CI constraints rather than fitting functional models. This creates a hybrid approach combining strengths from both paradigms and opening a new avenue for causal discovery.

A more comprehensive discussion on related work can be found in \Cref{appdx:related_work}.

\vspace{-5pt}
\section{Conclusions}  \label{sec:conclusion}
\vspace{-5pt}

We introduced a novel causal discovery framework that bridges constraint-based and 
\revision{gradient-based continuous-optimization methods}
through differentiable d-separation. 
By proposing percolation-based differentiable d-separation scores, we enabled gradient-based learning of causal DAG structures
\revision{from conditional independence patterns in finite datasets.}
Our model instantiation, \ourmethod, demonstrates competitive performance across synthetic and real-world settings, with strong robustness in low-sample regimes where traditional methods often struggle\revision{, while maintaining competitive performance as data becomes abundant}.

\vspace{-8pt}
\paragraph{Limitations and Future Work.}  \label{sec:limitations}
Several limitations in our current approach present opportunities for future research.
\revision{%
The reliance of \ourmethod on p-values to measure both conditional dependence and independence from data 
can be improved,
as using a constant minus the p-value to assess dependence strength is a simple heuristic.
Fortunately, our framework is measurement-agnostic and supports alternatives, which we recommend future work to investigate.
}%
In addition, our method focuses on only low-order conditioning sets and assumes no confounding variables in the data. Future directions include extending the framework to handle latent confounders, efficiently incorporating higher-order CI statements, \revision{and simultaneously improving scalability}. 
\revision{%
Finally, as suggested by a reviewer, a particularly promising avenue is integrating our differentiable d-separation framework with score-based methods like NOTEARS and DAGMA, enabling causal structure learning from both data likelihood signals and CI statement fitness signals
(case studies in Appendix~\ref{app:integration_notears}).
}%
These advancements would further bridge constraint-based and score-based paradigms in causal discovery.

\revision{
\vspace{-5pt}
\section*{Acknowledgment}
The authors thank Ruqi Zhang and Patrick Pynadath for the insightful discussions and valuable feedback on Bayesian sampling methods.
This work was funded in part by the Ford Motor Company under the Purdue-Ford Alliance,
the National Science Foundation (NSF) awards CAREER IIS-1943364, CNS-2212160 and CAREER 2239375, IIS-2348717, Amazon Research Award, AnalytiXIN, Adobe Research, Intuit, the Wabash Heartland Innovation Network (WHIN), Walther Cancer Foundation and Purdue Institute for Cancer Research (PICR) (Grant P30CA023168). 
Computing infrastructure supported in part by AMD under the AMD HPC Fund program and PICR. Any opinions, findings and conclusions or recommendations expressed in this material are those of the authors and do not necessarily reflect the views of the sponsors.


}

\putbib



\newpage
\appendix


\section{Proofs}  \label{appdx:proofs}

\DefFolDSep*

\ThmFolDSepCorrectness*
\begin{proof} 

First, we note that the d-separation \formula $\dsep_{\mA}^{(0)}, \dsep_{\mA}^{(1)}$ and the d-connection \formula $\dcon_{\mA}^{(0)}, \dcon_{\mA}^{(1)}$ are the negation of each other and can be obtained from each other via De Morgan's law. Thus, the statement ``$x \indep_{\mA} y$ if and only if $\dsep_{\mA}^{(0)}(x, y) = \true$'' is equivalent to ``$x \dep_{\mA} y$ if and only if $\dcon_{\mA}^{(0)}(x, y) = \true$'', and vice versa, the statement ``$x \indep_{\mA} y \mid z$ if and only if $\dsep_{\mA}^{(1)}(x, y \mid z) = \true$'' is equivalent to ``$x \dep_{\mA} y \mid z$ if and only if $\dcon_{\mA}^{(1)}(x, y \mid z) = \true$.'' Thus, we proceed to prove the statements involving the d-connection \formula $\dcon_{\mA}^{(0)}, \dcon_{\mA}^{(1)}$.


\textbf{Part 1: $x \dep_{\mA} y$ if and only if $\dcon_{\mA}^{(0)}(x, y) = 1$.}

\begin{itemize}
    \item We show the direction $x \dep_{\mA} y \implies \dcon_{\mA}^{(0)}(x, y) = 1$.

    If $x \dep_{\mA} y$, i.e., $x$ is d-connected to $y$ with an empty conditioning set, then there exists a path between $x$ and $y$ that does not contain any colliders, because otherwise the path would be blocked. 
    Denote the sequence of nodes in this path $\path = (p_1, \dots, p_m)$ with $p_1 = x$ and $p_m = y$. In other words, for any three consecutive nodes $a, b, c \in \path$, we have either the chain structure $a \to b \to c$ or $a \leftarrow b \leftarrow c$, or the fork structure $a \leftarrow b \to c$. 

    Because there is no collider structure in $\path$, we will show that there must be {\em at most one} fork structure in $\path$. To see this by contradiction, consider if $\path$ contains more than one fork, and let $p_{l-2} \leftarrow p_{l-1} \to p_l$ and $p_{r} \leftarrow p_{r+1} \to p_{r+2}$ be two forks ($l < r$) in $\path$ such that there is no other fork between $p_l$ and $p_r$. Then, either there are some colliders between $p_l$ and $p_r$, or $p_l$ and $p_r$ are connected by some directed paths, i.e., either $p_l \leadsto p_r$ or $p_r \leadsto p_l$. If $p_l \leadsto p_r$, then $p_r$ is a collider, and vice versa if $p_r \leadsto p_l$, then $p_l$ is a collider. Hence, in all scenarios, $\path$ will have at least one collider, contradicting the fact that $\path$ should not have any collider.

    Now, we discuss case by case based on whether $\path$ has no fork, or it has one fork. \Cref{fig:dsep-illustration-0} illustrates the structure such a path $\path$.

    {\bf Case 1: Suppose there is no fork on $\path$:} If there is no fork at all, then $\path$ is a directed path. Without loss of generality, assume $\path = x \to \dots \to y$. Then, let $a = x$, we have $a \leadsto x$ (a node is always naively reachable from itself) and $a \leadsto y$. Hence, $(R_{\mA}(a, x) \land R_{\mA}(a, y)) = 1$, and therefore the \formula $\dcon_{\mA}^{(0)}(x, y) = \exists a \in [d], R_{\mA}(a, x) \land R_{\mA}(a, y)$ is 1.

    {\bf Case 2: Suppose there is one fork on $p$} When there is one and only one fork, it means the center node of the fork is the common ancestor of both $x$ and $y$. In other words, let $a$ be this center node, then we have $a \leadsto x$ and $a \leadsto y$. Thus, similarly, we have $(R_{\mA}(a, x) \land R_{\mA}(a, y)) = 1$, and therefore the \formula $\dcon_{\mA}^{(0)}(x, y) = \exists a \in [d], R_{\mA}(a, x) \land R_{\mA}(a, y)$ is 1.

    Thus, we have shown that $x \dep_{\mA} y \implies \dcon_{\mA}^{(0)}(x, y) = 1$

    \item Now we show the other direction $\dcon_{\mA}^{(0)}(x, y) = 1 \implies x \dep_{\mA} y$.

    Given $\dcon_{\mA}^{(0)}(x, y) = 1$, we have $\exists a \in [d]$ such that $a \leadsto x$ and $a \leadsto y$. Let $a$ be such a node. Thus, $a*$ is a common ancestor of $x$ and $y$. Let $\path_x = a \to \dots \to x$ be the directed path that starts at $a$ and ends at $x$, and similarly let $\path_y = a \to \dots \to y$. Then, the path form by the sequence of nodes in $\path_x$ and $\path_y$, i.e., $\path = (x, \dots, a, \dots, y)$, is clearly a path consisting of at most one fork structure with other wise chain structures. When $a = x$ or $a = y$, $\path$ is a directed path without fork, and otherwise has one fork. In either case, $\path$ is a d-connecting path between $x$ and $y$, making $x \dep_{\mA} y$. 
\end{itemize}

\textbf{Part 2: $x \dep_{\mA} y \mid z$ if and only if $\dcon_{\mA}^{(1)}(x, y \mid z) = 1$.}

\begin{itemize}
    \item We show the direction $x \dep_{\mA} y \mid z \implies \dcon_{\mA}^{(1)}(x, y \mid z) = 1$.

    We are given $x \dep_{\mA} y \mid z$, i.e., $x$ and $y$ are d-connecting when conditioning on $z$. Thus, there must exist a non-empty set of d-connecting paths $\gP$ between $x$ and $y$ 
    that is not blocked given $z$. 
    We proceed with a case-by-case discussion depending on whether all of these paths contain colliders.
    
    \begin{enumerate}
        \item First, we consider the case where there is some path in $\gP$ that does not contain any collider. Let $\path \in \gP$ be such a path. 
        Then, $z$ must {\em not} be on $\path$, because otherwise, since $z$ is not a collider and it is being conditioned, $\path$ would be blocked and therefore $\path \not \in \gP$.

        Since $\path$ does not include $z$, it remains unchanged in the subgraph $\mA_{-z}$, in which node $z$ is removed from the graph $\mA$, and thus it is still a d-connecting path in $\mA_{-z}$. Hence, because of $\path$, we have $x$ and $y$ are d-connecting in the subgraph $\mA_{-z}$. In other words, we have $\dcon_{\mA_{-z}}^{(0)} (x, y) = 1$. Since $\dcon_{\mA_{-z}}^{(0)} (x, y)$ is the one of two terms from the outmost disjunction operator $\lor$ in the formula for $\dcon_{\mA}^{(1)}(x, y \mid z)$, we have $\dcon_{\mA_{-z}}^{(0)} (x, y) = 1$ renders $\dcon_{\mA}^{(1)}(x, y \mid z) = 1$.

        In \Cref{fig:dsep-illustration-1-backdoor}, this pattern is illustrated via the ``backdoor path'' structure.
    
        \item Otherwise, all paths in $\gP$ contain colliders. We first show that this implies that we can find a path $\path^*$ in $\gP$ {\em that has only one collider}.

        To see this, pick any path $\path \in \gP$. If $\path$ indeed has only one collider, then we are done.
        Hence, we consider the non-trivial case where $\path$ has more than one collider. We further divide the cases depending on whether node $z$ is included in path $\path$.

        \begin{enumerate}
            \item If $z$ is included in $\path$, then $z$ must itself be a collider, because otherwise $\path$ would be blocked by $z$. 
            
            Now, let $z'$ be another collider in $\path$.
            Then, $z$ must be a descendant of $z'$, because otherwise $\path$ would be blocked by $z'$ even though $z$ is in the conditioning set. In other words, $z$ is reachable from $z'$ with a directed path, i.e., $z' \leadsto z$.
           
            Using this knowledge, we can construct the following alternative path $\path_{z'}$ that still has $z$ as a collider but converts $z'$ to a non-collider.
            Specifically, without loss of generality, we assume the path $\path$ consists of the following sequence of nodes: $\path = (x, \dots, p_{l'}, z', p_{r'}, \dots, p_{l}, z, p_{r}, \dots, y)$ for some indices $l, r, l', r'$ with collider structures $p_{l'} \to z' \leftarrow p_{r'}$ and $p_{l} \to z \leftarrow p_{r}$. We have shown that $z' \leadsto z$, and denote the corresponding directed path $z' \to q_1 \to \dots \to q_m \to z$. We can obtain the new path as follows:
            \[
                \path_{z'} = (x, \dots, p_{l'}, z', q_1, \dots, q_m, z, p_{r}, \dots, y) ,
            \]
            where we replace the middle section sub-path $(z', p_{r'}, \dots, p_{l}, z)$ in $\path$ to $(z', q_1, \dots, q_m, z)$ to form $\path_{z'}$.
            
            Now, note that in $\path_{z'}$, $z'$ is no longer a collider but forms a chain structure, because $p_{l'} \to z' \to q_1$. Nevertheless, $z$ is still a collider, because $q_m \to z \leftarrow p_r$. Furthermore, $\path_{z'}$ is still a d-connecting path. This is because the sub-paths $(x, \dots, p_{l'}, z')$ and $(z, p_{r}, \dots, y)$ are d-connecting conditioning on $z$ because they are parts of the original $p$ which is a d-connecting path, and the ``new'' sub-path $(z', q_1, \dots, q_m, z)$ by definition is d-connecting conditioning on $z$ because it is a directed path. Hence, taken together, $\path_{z'}$ is d-connecting given $z$, so $\path_{z'} \in \gP$.
            
            Thus, by iteratively constructing these alternative paths $\path_{z'}$ for any colliders $z'$ in $\path$ that is not $z$, we can arrive at a path $\path^* \in \gP$ that has $z$ as the only collider. 

            \item Now, suppose $z$ is not included in $\path$ and $\path$ has more than one collider. Let $z'$ and $z''$ be any two such colliders, and without loss of generality, let the path consists of the sequence of nodes $p = (x, \dots, p_{l'}, z', p_{r'}, \dots, p_{l''}, z'', p_{r''}, \dots, y)$. Similarly, both $z'$ and $z''$ must be ancestors of $z$, because otherwise one of them would block the path. Hence, we know that $z' \leadsto z$ and $z'' \leadsto z$. Let the directed paths be $z' \to q_1 \to \dots \to q_m \to z$ and $z'' \to o_1 \to \dots \to o_s \to z$ respectively. We can construct the following new path:
            \[
                \path_{z', z''} = (x, \dots, p_{l'}, z', q_1, \dots, q_m, z, o_s, \dots, o_1, z'', p_{r}, \dots, y) .
            \]
            Here, note that both $z'$ and $z''$ are no longer a collider, because $p_{l'} \to z' \to q_1$ and $o_1 \leftarrow z'' \leftarrow p_r$. Nevertheless, $z$ is a collider since $q_m \to z \leftarrow o_s$. Furthermore, $\path_{z', z''}$ is still a d-connecting path conditioned on $z$. As we have established in the discussion of the previous case, that the sub-paths $(x, \dots, p_{l'}, z')$ and $(z'', p_r, \dots, y)$ are d-connecting. Since $(z', q_1, \dots, q_m, z)$ and $(z, o_s, \dots, o_1, z'')$ are directed paths, we have that the sub-paths $(x, \dots, q_m, z)$ and $(z, o_s, \dots, y)$ are also d-connecting. Taken together, the whole path is d-connecting given $z$, i.e., $\path_{z', z''} \in \gP$.
            
            In other words, we have converted both colliders $z'$ and $z''$ in the original $\path$ into non-colliders while introducing $z$ as a new collider. This means that the resulting path $\path_{z', z''}$ satisfies the initial condition of a path $\path$ discussed in the previous case. Hence, by further following the procedure described in the previous case, we can eventually arrive at a path $\path^* \in \gP$ that has $z$ as the only collider.

        \end{enumerate}

        Now that we know $\gP$ will necessarily include a path $\path^*$ with only one collider, denote this collider $z^*$, and we know either $z^* = z$ or $z^*$ is an ancestor of $z$. In other words, we have $z^* \leadsto z$. Hence, we can denote $\path^*$ as a sequence of nodes: $\path^* = (x, \dots, p^*_{l}, z^*, p^*_{r} \dots, y)$ for some indices $l, r$, where $p^*_l \to z^* \leftarrow p^*_r$.

        Now, let $a = x$ if there is no fork structure in the sub-path $(x, \dots, z^*)$ of $\path^*$, otherwise let $a$ be the {\em rightmost} fork in $(x, \dots, z^*)$ (i.e. there is no other fork in the sub-path $(a, \dots, z^*)$ of $\path^*$). Then, we have $a \leadsto z^*$ because there is neither any fork nor any collider in the sub-path $(a, \dots, z^*)$, and we know it has a constituent edge $p^*_l \to z^*$ with an edge direction pointing towards $z^*$. Moreover, because $z^* \leadsto z$, we have $a \leadsto z$, and equivalently $R_{\mA}(a, z)$. 
        
        On the other hand, the sub-path $(x, \dots, a)$ is a d-connecting path conditioning on $z$ because its super-path $\path^* \in \gP$ is a d-connecting path conditioning on $z$, does not have any collider because $z^*$ is the only collider in $\path^*$, and does not include $z$. Hence, in the subgraph $\mA_{-z}$ where $z$ is removed from the graph $\mA$, the sub-path $(x, \dots, a)$ remains unchanged and is still a d-connecting path, and in this case without conditioning on $z$. Therefore, $x$ and $a$ is d-connecting in $\mA_{-z}$, i.e., $x \dep_{\mA_{-z}} a$. Hence, we have found a node $a$ such that $(x \dep_{\mA_{-z}} a) \land (a \leadsto z)$, or equivalently, we have the following:
        \[
            \left( \exists a \in [d] \setminus \{z\}, (x \dep_{\mA_{-z}} a) \land R_{\mA}(a, z) \right) = 1 .
        \]
        
        Similarly, let $b = y$ if there is no fork structure in the sub-path $(z, \dots, y)$ of $\path^*$, otherwise let $b^*$ be the {\em leftmost} fork in $(z, \dots, y)$. We then have also have the following:
        \[
            \left( \exists b \in [d] \setminus \{z\}, (y \dep_{\mA_{-z}} b) \land R_{\mA}(b, z) \right) = 1 .
        \]
        
        Combining everything, we fulfill second term in the outermost disjunction operator in the FOL formula and thus have $\dcon_{\mA}^{(1)}(x, y \mid z) = 1$.

        In \cref{fig:dsep-illustration-1-frontdoor}, this pattern is illustrated as the ``frontdoor path'' structure.
    \end{enumerate}

    \item Now we show the other direction $\dcon_{\mA}^{(1)}(x, y \mid z) = 1 \implies x \dep_{\mA} y \mid z$.

    If $\dcon_{\mA}^{(1)}(x, y \mid z) = 1$, then we have
    \[
        \dcon^{(0)}_{\mA_{-z}}(x, y) = 1 ,
    \]
    or we have
    \[
        \left( \Big( \exists a \in [d] \setminus \{z\}, \dcon^{(0)}_{\mA_{-z}}(x, a) \land  R_{\mA}(a, z) \Big) 
        \land \Big( \exists b \in [d] \setminus \{z\}, \dcon^{(0)}_{\mA_{-z}}(y, b) \land R_{\mA}(b, z) \Big) \right)= 1.
    \]
    We discuss it case by case.

    \begin{enumerate}
        \item If it is the first case, $\dcon^{(0)}_{\mA_{-z}}(x, y) = 1$, then there exists a d-connecting path $\path$ between $x$ and $y$ in the subgraph $\mA_{-z}$. $\path$ then does not include $z$, and must not include any colliders. Hence, back in the original graph $\mA$, $\path$ is a ``backdoor'' path that does not involve $z$ that renders $x$ and $y$ d-connecting. Thus, we have $x \dep_{\mA} y$ and also $x \dep_{\mA} y \mid z$.

        \item We now consider the other case. Since the left-hand-side term is a conjunction ($\land$), both term in the conjunction must be true, meaning there exists some nodes $a$ and $b$ satisfying either term respectively. Taking a step further, this means that all of the four terms -- $\dcon^{(0)}_{\mA_{-z}}(x, a)$, $R_{\mA}(a, z)$, $\dcon^{(0)}_{\mA_{-z}}(y, b)$, and $R_{\mA}(b, z)$ -- are true.

        Using a similar reasoning as mentioned in the previous case, $\dcon^{(0)}_{\mA_{-z}}(x, a)$ implies that we have a d-connecting path between $x$ and $a$ and that $x \dep_{\mA} a \mid z$. Similarly, $\dcon^{(0)}_{\mA_{-z}}(y, b)$ implies that $y \dep_{\mA} b \mid z$. Furthermore, because $R_{\mA}(a, z) = 1$ and $R_{\mA}(b, z) = 1$, that is, $a \leadsto z$ and $b \leadsto z$, the path between $a$ and $b$, formed by the corresponding directed paths that renders $a \leadsto z$ and $b \leadsto z$ respectively, is a d-connecting path conditioning on $z$, as $z$ serves as the only collider. Hence, we have $a \dep_{\mA} b \mid z$.

        Hence, by transitivity of d-connection statements, we have that $x \dep_{\mA} y \mid z$, finishing the entire proof.
    \end{enumerate}

\end{itemize}
\end{proof}

\ThmFolReachabilityCorrectness*
\begin{proof}
We prove this lemma by induction on the maximum directed path length $l$.

\textbf{Basecase}: Suppose the maximum directed path length in $\mA$ is 0. This means there is no path and $\mA$ is an empty graph. Thus, for all $x \in [d], x \leadsto_{\mA} x$ and for all $x, y \in [d]$ with $x \neq z$, $x \not\leadsto_{\mA} y$.

\textbf{Induction}: Suppose the statement in the lemma holds true with a maximum directed path length of $l - 1$. We now prove the statement for $l$.

\begin{itemize}
    \item We prove the direction $x \leadsto_{\mA} y \implies R_{\mA}^{(l)}(x, y) = 1$.

    Given that $x \leadsto_{\mA} y$, let $\path$ be such a directed path starting from $x$ and ending at $y$. Let $u$ be the second to last node in $\path$ with an edge $u \to y$, or equivalently, $\mA_{u, y} = 1$. Then, we also have $x \leadsto_{\mA} u$.

    Since the path from $x$ to $u$ will have a length smaller than or equal to $l - 1$, using the induction hypothesis, we have $R_{\mA}^{(l - 1)}(x, u) = 1$.

    Thus, we have $R_{\mA}^{(l - 1)}(x, u) \land \mA_{u, y} = 1$, and consequently $\bigvee_{u \in d} R_{\mA}^{(l - 1)}(x, u) \land \mA_{u, y}  = 1$, satisfying the first term in the outmost disjunction in the recursive formula for $R_{\mA}^{(l)}$, rendering $R_{\mA}^{(l)}(x, y) = 1$.

    \item Now we prove the other direction $R_{\mA}^{(l)}(x, y) = 1 \implies x \leadsto_{\mA} y$.

    Given $R_{\mA}^{(l)}(x, y) = 1$, there are possible scenarios: that $\bigvee_{u \in d} R_{\mA}^{(l - 1)}(x, u) \land \mA_{u, y}  = 1$ or that $R_{\mA}^{(l - 1)}(x, y) = 1$. we discuss case by case.
    \begin{enumerate}
        \item Suppose $\bigvee_{u \in d} R_{\mA}^{(l - 1)}(x, u) \land \mA_{u, y}  = 1$, then there exists at least one node $u$ such that $R_{\mA}^{(l - 1)}(x, u) \land \mA_{u, y}  = 1$, meaning both $R_{\mA}^{(l - 1)}(x, u) = 1$ and $\mA_{u, y}  = 1$. This means that, by the induction hypothesis, $x \leadsto_{\mA} u$ with maximum path length $l - 1$. Furthermore, we have $u \to y$ is an directed edge. Thus, there exists a directed path from $x$ to $y$, i.e. $x \leadsto_{\mA} y$ with maximum path length $l$. 

        \item Suppose $R_{\mA}^{(l - 1)}(x, y) = 1$, then by the inductive hypothesis, $x \leadsto_{\mA} y$ with a maximum path length $l - 1$, which also makes the statement true under the setting of a maximum path length of $l$.
    \end{enumerate}
    Thus, in either case, we have $x \leadsto_{\mA} y$. 
\end{itemize}

\end{proof}

The proof of \Cref{lemma:lower-bound-reachability} and subsequent \Cref{thm:lower-bound-dsep} requires the following lemma showing the lower-bound nature of LogLTN's~\citep {Badreddine2023logLTNDF} t-norm and t-conorm logical operators. We state this lemma and its proof as follows.
\begin{lemma}   \label{lemma:lower-bound-logltn}
    Given the t-norm and t-conorm operators of LogLTN~\citep {Badreddine2023logLTNDF}, which we restate as follows,
    \begin{align*}
        \ourtnorm{m} (\{ x'_i \}_{i \in [m]}) \defas \sum_{i=1}^m x'_i \,\,\,, 
        \ourtconorm{m} (\{ x'_i \}_{i \in [m]}) \defas \alpha \left( C + \log \left( \frac{\sum_{i=1}^n e^{x'_i / \alpha - C}}{m} \right) \right) \,\,\,.
    \end{align*}
    Let $\rvx_1, \rvx_2, \dots, \rvx_m$ be $m$ Bernoulli random variables that are mutually independent or positively correlated. Denote their probabilities as $p_1, p_2, \dots, p_m$, respectively. Then, we have the following:
    \begin{align*}
        \ourtnorm{m} (\{ \log(p_1), \dots, \log(p_m) \}) &\leq \log \prob \left( \rvx_1 \land \dots \land \rvx_m \right) \\
        \ourtconorm{m} (\{ \log(p_1), \dots, \log(p_m) \}) &\leq \log \prob \left( \rvx_1 \lor \dots \lor \rvx_m \right) \\
    \end{align*}
\end{lemma}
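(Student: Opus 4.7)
The plan is to handle the two inequalities separately, since they rely on quite different mechanisms, and in both cases the right-hand side will be bounded below by a simpler quantity that the LogLTN operator can be compared to directly.

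For the conjunction bound, I would first observe that $\ourtnorm{m}(\{\log p_i\}_{i\in[m]}) = \sum_{i=1}^{m}\log p_i = \log\prod_{i=1}^{m}p_i$, so the claim reduces to $\prod_{i=1}^{m}p_i \leq \mathbb{P}(\rvx_1\land\cdots\land\rvx_m)$. When the $\rvx_i$ are mutually independent this holds with equality. When they are positively correlated I would invoke the standard inductive statement that $\mathbb{P}(\rvx_1\land\cdots\land\rvx_m) \geq \prod_{i=1}^{m}\mathbb{P}(\rvx_i)$, proved by induction on $m$: at each step the conditional probability $\mathbb{P}(\rvx_{k+1}\mid \rvx_1\land\cdots\land\rvx_k)$ is at least $\mathbb{P}(\rvx_{k+1})$ by positive correlation of $\rvx_{k+1}$ with the indicator of the conjunction of the earlier events. (If the paper wants to invoke a named tool, this is exactly the FKG / Harris inequality for Bernoulli variables.)

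For the disjunction bound, I would reduce through two inequalities that together are strictly weaker than the true union probability but still valid lower bounds. First, for any $\alpha\in(0,1]$, a direct simplification gives
\begin{equation*}
\ourtconorm{m}(\{\log p_i\}_{i\in[m]}) \;=\; \alpha\log\!\left(\tfrac{1}{m}\sum_{i=1}^{m} p_i^{1/\alpha}\right),
\end{equation*}
since the stabilization constant $C$ cancels. By the (generalized) power-mean inequality applied to the nonnegative reals $p_i$ with exponent $r=1/\alpha\geq 1$, one has $\bigl(\tfrac{1}{m}\sum_i p_i^{r}\bigr)^{1/r} \leq \max_i p_i$, so raising to the $\alpha$-th power and taking $\log$ yields $\ourtconorm{m}(\{\log p_i\}) \leq \log\max_{i} p_i$. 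Second, for any joint distribution whatsoever, $\{\rvx_i=1\}\subseteq\{\rvx_1\lor\cdots\lor\rvx_m\}$, so $\max_i p_i \leq \mathbb{P}(\rvx_1\lor\cdots\lor\rvx_m)$, and thus $\log\max_i p_i \leq \log\mathbb{P}(\rvx_1\lor\cdots\lor\rvx_m)$. Chaining the two inequalities gives the required bound. Note that the disjunction case does not actually need the independence/positive-correlation hypothesis; only the conjunction case uses it.

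The main obstacle is the conjunction case, specifically handling the positively correlated regime cleanly: the statement ``positively correlated'' needs to be pinned down to a form that supports the inductive argument (association of the Bernoulli family in the sense of Esary--Proschan / FKG), rather than mere pairwise positive correlation. Once that is clarified, both bounds reduce to short calculations: the conjunction is an FKG-style inductive argument, and the disjunction is the composition of the monotonicity of power means with the trivial union containment.
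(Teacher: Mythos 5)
Your proposal is correct and follows essentially the same two-step structure as the paper's proof: reduce the t-norm bound to $\prod_i p_i \leq \prob(\rvx_1\land\cdots\land\rvx_m)$, and chain the t-conorm through $\ourtconorm{m}(\cdot)\leq \log\max_i p_i \leq \log\prob(\rvx_1\lor\cdots\lor\rvx_m)$. Two points of difference are worth noting. First, for the LogMeanExp-versus-max step the paper simply cites the bounded-error property from LogLTN, whereas you derive it directly via cancellation of the stabilizer $C$ and the power-mean inequality with exponent $1/\alpha\geq 1$; your derivation is self-contained and correct. Second, and more substantively, for the conjunction bound the paper only verifies the two-variable case $\prob(\rvx_i\land\rvx_j)=p_ip_j+\Cov(\rvx_i,\rvx_j)\geq p_ip_j$ and then asserts the $m$-variable claim without argument; you correctly observe that the inductive step $\prob(\rvx_{k+1}\mid\rvx_1\land\cdots\land\rvx_k)\geq\prob(\rvx_{k+1})$ requires association of the family (FKG/Harris, or Esary--Proschan--Walkup) rather than mere pairwise positive correlation, which is a genuine gap in the paper's hypothesis and proof that your write-up makes explicit. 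You also rightly note that the disjunction bound needs no dependence assumption at all. In short, your route is the same as the paper's but is the more rigorous of the two.
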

\begin{proof}
We first show the inequality involving the t-norm $\ourtnorm{m}$. We first observe that $\ourtnorm{m}$ is the logarithm of the product t-norm, meaning
\[
    \exp( \ourtnorm{m} (\{ \log(p_1), \dots, \log(p_m) \}) ) = \prod_{i=1}^m p_i .
\]
Now, since $\rvx_i$'s are mutually independent or correlated, we naturally have $\prod_{i=1}^m p_i \leq \prob \left( \rvx_1 \land \dots \land \rvx_m \right)$. This is because for any two random variables $\rvx_i$ and $\rvx_j$, 
\[
    \prob (\rvx_i \land \rvx_j) = \E [\rvx_i \land \rvx_j] = \E[\rvx_i] \E[\rvx_j] + \text{Cov}(\rvx_i, \rvx_j) = p_i p_j + \text{Cov}(\rvx_i, \rvx_j) \geq p_i p_j,
\]
where the last inequality sign holds because $\text{Cov}(\rvx_i, \rvx_j) \geq 0$ for mutually positively correlated or independent random variables.

Thus, we have
\begin{align*}
    \exp( \ourtnorm{m} (\{ \log(p_1), \dots, \log(p_m) \}) ) &\leq \prod_{i=1}^m p_i \leq \prob \left( \rvx_1 \land \dots \land \rvx_m \right) \\
    \ourtnorm{m} (\{ \log(p_1), \dots, \log(p_m) \}) &\leq \log \prob \left( \rvx_1 \land \dots \land \rvx_m \right).
\end{align*}

For the t-conorm $\ourtconorm{m}$, the inequality holds because $\ourtconorm{m}$ is a lower bound of the logarithm of max, and max is then also a lower bound of the union of random variables. Specifically, as shown in the main text, $\ourtconorm{m}$ as the LogMeanExp operation has the following property:
\[
    \exp(\ourtconorm{m} (\{ \log(p_1), \dots, \log(p_m) \})) \leq \max \{ p_1, \dots, p_m \}.
\]
On the other hand, max is in general the lower bound of the probability of the union of random variables,
\[
    \max \{ \log(p_1), \dots, p_m \} \leq \prob \left( \rvx_1 \lor \dots \lor \rvx_m \right).
\]
Thus, taken together, we have
\begin{align*}
    \exp(\ourtconorm{m} (\{ \log(p_1), \dots, \log(p_m) \})) &\leq \max \{ p_1, \dots, p_m \} \leq \prob \left( \rvx_1 \lor \dots \lor \rvx_m \right) \\
    \ourtconorm{m} (\{ \log(p_1), \dots, \log(p_m) \}) &\leq \log \prob \left( \rvx_1 \lor \dots \lor \rvx_m \right).
\end{align*}

\end{proof}

\LemmaLowerBoundReachability*
\begin{proof}

We first show that $\tilde{R}_{\mW}^{(l)}(x, y) \leq \log \E_{\mA \sim \text{Bern}(\mW)} \left[ R_{\mA}^{(l)}(x, y) \right]$.

Intuitively, this inequality holds because in a probabilistic graph $\mW$, where each edge $x \to y$ is a Bernoulli random variable parameterized by $\mW_{x, y}$, the paths as random variables are either mutually independent, when they consist of distinct edges, or mutually positively correlated, when they share some common edges. Thus, using the results of \Cref{lemma:lower-bound-logltn}, the t-norm of the path probabilities yields a lower bound to their joint probability, and the t-conorm yields a lower bound to their union probability. Thus, the continuous reachability score $\tilde{R}_{\mW}^{(l)}$, consisting of disjunctions and conjunctions of the path and edge probabilities, also gives lower bounds.

More specifically, we can observe this fact via induction. In the base case, we obviously have
\[
    \tilde{R}_{\mW}^{(0)}(x, y) = \log ( \Ind(x = y) ) = \log \prob_{\mA \sim \text{Bern}(\mW)} (R_{\mA}^{(0)}(x, y)) = \prob_{\mW} (x \to y) = \log \mW_{x, y} .
\]
In addition, we can obviously see that the random variable $R_{\mA}^{(0)}(x, u)$ is independent of the random variable $\mA_{u, y}$ for any nodes $u \in [d]$.

Then, assuming the inequality holds for a graph with a maximum path length of $l - 1$, and that the random variable $R_{\mA}^{(l-1)}(x, u)$ is either independent of or positively correlated with the random variable $\mA_{u, y}$ for any nodes $u \in [d]$, we now aim to show that these two statements also hold for $l$. 

First, we can see that, for any node $u \in [d]$,
\[
    \ourtnorm{2}(\tilde{R}_{\mW}^{(l-1)}(x, u), \log(W_{uy})) \leq \log \E_{\mA \sim \text{Bern}(\mW)} \left[ R_\mA^{(l - 1)}(x, u) \land \mA_{u, y} \right] ,
\]
from results in \Cref{lemma:lower-bound-logltn} regarding the t-norm $\ourtnorm{2}$ and the induction hypothesis that $R_\mA^{(l - 1)}(x, u)$ is either independent of or positively correlated with $\mA_{u, y}$.

Then, again using the result in \Cref{lemma:lower-bound-logltn} regarding the t-conorm $\ourtconorm{m}$, we have
\begin{align*}
    \tilde{R}_{\mW}^{(l)}(x, y) 
    &= \ourtconorm{d+1}\Big(\{\ourtnorm{2}(\tilde{R}_{\mW}^{(l-1)}(x, u), \log(W_{uy}))\}_{u \in [d]} \cup \{\tilde{R}_{\mW}^{(l-1)}(x, y)\}\Big) \\
    &\leq \log \E_{\mA \sim \text{Bern}(\mW)} \left[ \left( \bigvee_{u \in [d]} \left( R_\mA^{(l - 1)}(x, u) \land \mA_{u, y} \right) \right) 
    \lor R_{\mA}^{(l-1)}(x, y) \right] \\
    &= \log \E_{\mA \sim \text{Bern}(\mW)} \left[ R_{\mA}^{(l)}(x, y) \right] .
\end{align*}

Regarding the relationship between $R_\mA^{(l)}(x, u)$ and $\mA_{u, y}$, we can see that the probability of $R_\mA^{(l)}(x, u)$ increases if the probability of $\mA_{u, y}$ also increases. This is because in the formula, there is no negation and $\mA_{u, y}$ makes a possible contribution to the value of $R_\mA^{(l)}(x, u)$. Thus, $R_\mA^{(l)}(x, u)$ and $\mA_{u, y}$ are positively correlated.
    
Now, we show for the unreachability inequality, $\tilde{U}_{\mW}^{(l)}(x, y) \leq \log \E_{\mA \sim \text{Bern}(\mW)} \left[ U_{\mA}^{(l)}(x, y) \right]$.

Similarly, we show by induction. In the base case, we have
\[
\tilde{U}_{\mW}^{(0)}(x, y) = \log ( \Ind(x \neq y) ) = \log \prob_{\mA \sim \text{Bern}(\mW)} (\lnot R_{\mA}^{(0)}(x, y)) = \log (1 - \mW_{x, y}) .
\]
In addition, we can see that the random variable $U_{\mA}^{(0)}(x, u)$ is independent of the random variable $\lnot R_{\mA}^{(0)}(x, y)$.

Then, assuming the inequality holds for a graph with a maximum path length of $l - 1$, and that the random variable $U_{\mA}^{(l - 1)}(x, u)$ is either independent of or positively correlated with the random variable $\lnot R_{\mA}^{(0)}(x, y)$. We aim to show that these two properties still hold for $l$.

First, using the result in \Cref{lemma:lower-bound-logltn} regarding the t-conorm $\ourtconorm{m}$, we have, for all nodes $u \in [d]$,
\[
    \ourtconorm{2}(\tilde{U}_{\mW}^{(l-1)}(x, u), \log(1 - W_{uy})) \leq \E_{\mA \sim \text{Bern}(\mW)} \left[ U_{\mA}^{(l - 1)}(x, u) \lor \lnot R_{\mA}^{(0)}(u, y) \right].
\]
Then, we note that the random variable $(U_{\mA}^{(l - 1)}(x, u) \lor \lnot R_{\mA}^{(0)}(u, y))$ for different $u$ are mutually independent or positively correlated. They are also independent of or positively correlated with $U_{\mA}^{(l - 1)}(x, y)$. This is similar to the mutual independence or positive correlation among paths, since for these ``non-paths'', if they consist of distinct ``non-edges'', then they are independent. Otherwise, they are positively correlated, since increasing the probability of the shared ``non-edge'' (or decreasing the probability of the shared edge) would simultaneously increase the probabilities of said ``non-paths.'' Thus, again using the result in \Cref{lemma:lower-bound-logltn} regarding the t-norm $\ourtnorm{d+1}$, we have
\begin{align*}
    \tilde{U}_{\mW}^{(l)}(x, y) 
    &= \ourtnorm{d+1}\Big(\{\ourtconorm{2}(\tilde{U}_{\mW}^{(l-1)}(x, u), \log(1 - W_{uy}))\}_{u \in [d]} \cup \{\tilde{U}_{\mW}^{(l-1)}(x, y)\}\Big) \\
    &\leq \log \E_{\mA \sim \text{Bern}(\mW)} \left[ \left( \bigwedge_{u \in [d]} \left( U_{\mA}^{(l-1)}(x, u) \lor \lnot \mA_{u, y} \right) \right) \land U_{\mA}^{(l-1)}(x, y)  \right] \\
    &= \log \E_{\mA \sim \text{Bern}(\mW)} \left[ U_{\mA}^{(l)}(x, y) \right] .
\end{align*}
    
\end{proof}

\ThmLowerBoundDSep*
\begin{proof}
    First, we reiterate that, in the computation of differentiable d-separation $\tilde{\dsep}_{\mW}^{(0)}$ and $\tilde{\dsep}_{\mW}^{(1)}$, we use the continuous unreachability score $\tilde{U}_{\mW}^{(d)}$ directly, rather than taking the negation of the continuous reachability score. 

    Using a similar reasoning as in the proof of the previous lemma, we first observe that, when treating $\dsep_{\mA}^{(0)}$, $\dsep_{\mA}^{(1)}$, $\dcon_{\mA}^{(0)}$, and $\dcon_{\mA}^{(1)}$ as random variables when $\mA \sim \text{Bern}(\mW)$, all terms in the d-separation/d-connection \formula (\Cref{def:fol-d-sep}) are either mutually independent or positively correlated. In particular, this holds true for $\dsep^{(0)}_{\mA_{-z}}(x, a)$ and $\lnot R_{\mA}(a, z)$ for any $a \in [d]$, as well as for $\dcon^{(0)}_{\mA_{-z}}(y, b)$ and $R_{\mA}(b, z)$ for any $b \in [d]$. Thus, using the results from \Cref{lemma:lower-bound-logltn}, we have the lower bounds as stated in the theorem.
\end{proof}

\LemmaLossConsistency*
\begin{proof}
    Let $\theta^* \in \sR^{d \times d}$ be the parameter that approximates the optimal binary DAG $\mA*$ of the given data $\data$ via $\sigma(\theta^*) \to \mA^*$.

    When the sample size $n \to \infty$ in a faithful data $\data$, we have $p_{\data}(x, y) \to 0$ and $p_{\data}(x, y \mid z) \to 0$ when $x \dep_{\mA^*} y$ and $x \dep_{\mA^*} y \mid z$, and conversely $p_{\data}(x, y) \to 1$ and $p_{\data}(x, y \mid z) \to 1$ when $x \indep_{\mA^*} y$ and $x \indep_{\mA^*} y \mid z$ for any $x, y, z \in [d]$, where $\mA^*$ is the optimal causal graph of $\data$.
    When the LogMeanExp temperature $\alpha \to 0^+$, we have $\log \max \{ x_i \}_{i=1}^m - \ourtconorm{m}(\{\log(x_i)\}_{i=1}^m) \to 0^+$.

    In addition, we note that when given a sequence of binary random variables $\rvx_1, \dots, \rvx_m$, whose Bernoulli probabilities approach in the limit to either 0 or 1, the product t-norm $\tnorm{m}$ and max t-conorm $\tconorm{m}$ over $\rvx_i$'s probabilities approach in the limit to the probability of $\prob (\rvx_1 \land \dots \land \rvx_m)$ and $\prob (\rvx_1 \lor \dots \lor \rvx_m)$ respectively. This is because in the limit case, the random variables reduce to constant 0's and constant 1's, in which case the product t-norm and t-conorm reduce to the vanilla Boolean operations of conjunction and disjunction. Thus, combining the fact that LogLTN's~\citep{Badreddine2023logLTNDF} t-norm $\ourtnorm{m}$ is equivalent to $\tnorm{m}$ in the logarithm, and its t-conorm $\ourtconorm{m}$ approaches in limit to $\tconorm{m}$ in the logarithm, we have the following:
    \begin{align*}
        & \log \prob \left( \rvx_1 \land \dots \land \rvx_m \right) - \ourtnorm{m}(\{\log\prob(\rvx_i)\}_{i=1}^m) \to 0^+ \\
        & \log \prob \left( \rvx_1 \lor \dots \lor \rvx_m \right) - \ourtconorm{m}(\{\log\prob(\rvx_i)\}_{i=1}^m) \to 0^+ .
    \end{align*}
        
    Using this property, we can see that for the continuous reachability and unreachability, for any given maximum path length $l$ and any nodes $x, y \in [d]$, they satisfy
    \begin{align*}
        &R_{\mA^*}^{(l)} (x, y) - R_{\sigma(\theta^*)}^{(l)} (x, y) \to 0^+ \\
        &U_{\mA^*}^{(l)} (x, y) - U_{\sigma(\theta^*)}^{(l)} (x, y) \to 0^+ .
    \end{align*}
    Consequently, for the differentiable d-separation and d-connection scores, for any nodes $x, y, z \in [d]$, they satisfy
    \begin{align*}
        &\dsep_{\mA^*}^{(0)}(x, y) - \tilde{\dsep}_{\sigma(\theta^*)}^{(0)}(x, y) \to 0^+ , &
        &\dsep_{\mA^*}^{(1)}(x, y \mid z) - \tilde{\dsep}_{\sigma(\theta^*)}^{(1)}(x, y \mid z) \to 0^+ , \\
        &\dcon_{\mA^*}^{(0)}(x, y) - \tilde{\dcon}_{\sigma(\theta^*)}^{(0)}(x, y) \to 0^+ , &
        &\dcon_{\mA^*}^{(1)}(x, y \mid z) - \tilde{\dcon}_{\sigma(\theta^*)}^{(1)}(x, y \mid z) \to 0^+ .
    \end{align*}

    Furthermore, recall from \Cref{thm:fol-d-sep-correctness}, since $\mA^*$ is a discrete DAG, $\dsep_{\mA^*}^{(0)}$, $\dsep_{\mA^*}^{(1)}$, $\dcon_{\mA^*}^{(0)}$, and $\dcon_{\mA^*}^{(1)}$ reduces exactly to the ground-truth d-separation and d-connection statement values in $\mA^*$. Thus, combining with the result above, we have that 
    \begin{align*}
        &\tilde{\dsep}_{\sigma(\theta^*)}^{(0)}(x, y) \to 
            \begin{cases}
                1 \text{~~when $x \indep_{\mA^*} y$} \\
                0 \text{~~otherwise}
            \end{cases} , &
        &\tilde{\dsep}_{\sigma(\theta^*)}^{(1)}(x, y \mid z) \to 
            \begin{cases}
                1 \text{~~when $x \indep_{\mA^*} y \mid z$} \\
                0 \text{~~otherwise}
            \end{cases} , \\
        &\tilde{\dcon}_{\sigma(\theta^*)}^{(0)}(x, y) \to 
            \begin{cases}
                1 \text{~~when $x \dep_{\mA^*} y$} \\
                0 \text{~~otherwise}
            \end{cases} , &
        &\tilde{\dcon}_{\sigma(\theta^*)}^{(1)}(x, y \mid z) \to 
            \begin{cases}
                1 \text{~~when $x \dep_{\mA^*} y \mid z$} \\
                0 \text{~~otherwise}
            \end{cases} , \\
    \end{align*}

    Therefore, combining with the values of the p-values $p_{\data}$, we have the following value matching between the model's predicted d-separation and d-connection score and the p-values. For all $(x, y) \in \sI_0$ ($\sI_0 = \{(x, y) \mid x, y \in [d], x > y\}$) and for all $(x, y, z) \in \sI_1$ ($\sI_1 = \{ (x, y, z) \mid x, y, z \in [d], x > y, x \neq z, y \neq z \}$),
    \begin{equation*}
    \adjustbox{max width=\columnwidth}{$
    \begin{aligned}
        &\tilde{\dsep}_{\sigma(\theta^*)}^{(0)}(x, y) \to 
            \begin{cases}
                1 \text{~~when $p_{\data}(x, y) \to 1$} \\
                0 \text{~~when $p_{\data}(x, y) \to 0$}
            \end{cases} , &
        &\tilde{\dsep}_{\sigma(\theta^*)}^{(1)}(x, y \mid z) \to 
            \begin{cases}
                1 \text{~~when $p_{\data}(x, y \mid z) \to 1$} \\
                0 \text{~~when $p_{\data}(x, y \mid z) \to 0$}
            \end{cases} , \\
        &\tilde{\dcon}_{\sigma(\theta^*)}^{(0)}(x, y) \to 
            \begin{cases}
                1 \text{~~when $M_0 - p_{\data}(x, y) \to 1$} \\
                0 \text{~~when $M_0 - p_{\data}(x, y) \to 0$}
            \end{cases} , &
        &\tilde{\dcon}_{\sigma(\theta^*)}^{(1)}(x, y \mid z) \to 
            \begin{cases}
                1 \text{~~when $M_1 - p_{\data}(x, y \mid z) \to 1$} \\
                0 \text{~~when $M_1 - p_{\data}(x, y \mid z) \to 0$}
            \end{cases} , \\
    \end{aligned}
    $}
    \end{equation*}
    where $M_0 = \max_{\sI_0} \pvals(x, y) \to 1$, and similarly $M_1 = \max_{\sI_1} \pvals(x, y \mid z) \to 1$. In other words, the differentiable d-separation/d-connection scores over the optimal parameter $\theta^*)$ and the p-values from the data have zero mismatch.

    Thus, the TP and TN losses have the following values:
    \begin{align*}
        &\loss_{\text{TP-0}}(\theta, \data) \to - \sum_{(x, y) \in \sI_0} \Ind[x \indep_{\mA^*} y] , &
        &\loss_{\text{TP-1}}(\theta, \data) \to - \sum_{(x, y, z) \in \sI_1} \Ind[x \indep_{\mA^*} y \mid z] , \\
        &\loss_{\text{TN-0}}(\theta, \data) \to - \sum_{(x, y) \in \sI_0} \Ind[x \dep_{\mA^*} y] , &
        &\loss_{\text{TN-1}}(\theta, \data) \to - \sum_{(x, y, z) \in \sI_1} \Ind[x \dep_{\mA^*} y \mid z] ,
    \end{align*}
    which are the lowest possible values that these loss functions can achieve, because any other configuration of the values for the differentiable d-separation/d-connection scores would result in a larger loss value due to possible mismatches with the p-values. Thus, the TP and TN losses are consistent.

    Finally, since $\mA^*$ is a DAG, the log-det acyclicity loss $\loss_{\text{DAG}}$ also achieves the minimum value, for any value of the hyperparameter $s$. This property is proved in~\citet{bello2022dagma}. Thus, all loss functions in \Cref{def:multitask-loss} are consistent.
\end{proof}

\LemmaLossGraphoidIndependence*
\begin{proof}
    Let $\gX, \gY \subseteq [d]$ be any non-empty node sets, and  $\gZ, \gW \subseteq [d]$ be node sets that could be empty. The graphoid axiom~\citep{pearl1988probabilistic} states the following five rules describing the relationship and dependencies between different CI statements in a graphoid dependency model:
    \begin{enumerate}
        \item Symmetry: $\gX \indep \gY \mid \gZ  \iff  \gY \indep \gX \mid \gZ$
        \item Decomposition: $\gX \indep \gY \cup \gW \mid \gZ  \implies  (\gX \indep \gY \mid \gZ) \land (\gX \indep \gW \mid \gZ)$
        \item Weak Union: $\gX \indep \gY \cup \gW \mid \gZ  \implies  (\gX \indep \gY \mid \gZ \cup \gW) \land (\gX \indep \gW \mid \gZ \cup \gY)$
        \item Contraction: $(\gX \indep \gY \mid \gZ) \land (\gX \indep \gW \mid \gZ \cup \gY)  \implies  \gX \indep \gY \cup \gW \mid \gZ$
        \item Intersection: $(\gX \indep \gY \mid \gZ \cup \gW) \land (\gX \indep \gW \mid \gZ \cup \gY)  \implies  \gX \indep \gY \cup \gW \mid \gZ$
    \end{enumerate}

    We check rule by rule whether any of the low-order CI statements considered in $\sI_0$ and $\sI_1$ can be implied by any others.
    \begin{enumerate}
        \item Symmetry: No CI statement in $\sI_0$ and $\sI_1$ can be inferred from others via the symmetry rule. This is because $\sI_0$ and $\sI_1$ consists of only asymmetric statements where always $x > y$.
        \item Decomposition: In the non-trivial case when $|\gY| \geq 1$ and $|\gW| \geq 1$, this rule does not apply because its left-hand side (LHS) statement involves nodes set $\gY \cup \gW$ with cardinality of at least 2. This type of statement is not considered in $\sI_0$ or $\sI_1$.
        \item Weak Union: Similarly, in the non-trivial case when $|\gY| \geq 1$ and $|\gW| \geq 1$, this rule does not apply because its LHS statement is not included in $\sI_0$ or $\sI_1$.
        \item In the non-trivial case when $|\gY| \geq 1$ and $|\gW| \geq 1$, the LHS applies, but the right-hand-side (RHS) involve node set $\gY \cup \gW$ with cardinality of at least 2, which is not included in $\sI_0$ or $\sI_1$.
        \item Intersection: Similarly, in the non-trivial case when $|\gY| \geq 1$ and $|\gW| \geq 1$, the RHS involve node set $\gY \cup \gW$ with cardinality of at least 2, which is not included in $\sI_0$ or $\sI_1$.
    \end{enumerate}
    Thus, no CI statements in $\sI_0$ and $\sI_1$ can be inferred from other statements in $\sI_0$ and $\sI_1$ via the graphoid axioms. Thus, the CI statements we considered in our approach are mutually independent from the perspective of the graphoid dependency structure, justifying the use of the sum aggregation over these CI statements in the TP and TN losses.
\end{proof}

\section{Further Discussions on Percolation versus Diffusion}  \label{appdx:percolation-vs-diffusion}

The distinction between graph percolation and graph diffusion is fundamental to understanding the probabilistic interpretation of our differentiable d-separation framework. Percolation theory studies graph properties (such as connectivity or reachability) when the graph structure itself is randomized~\citep{hughes1996random,frisch1963percolation}. In contrast, diffusion theory typically assumes a fixed graph structure, with randomness arising from particles making probabilistic traversal decisions across this deterministic environment.

In the context of our weighted adjacency matrix $\mathbf{W} \in \sR^{d \times d}$, these perspectives offer different interpretations. From a percolation standpoint, each entry $\mathbf{W}_{x,y}$ represents the probability that edge $x \to y$ exists in the graph. The reachability question then becomes: what is the probability that a directed path exists from one node to another, considering all possible graph configurations? From a diffusion perspective, $\mathbf{W}$ would instead represent transition probabilities on a complete graph, where at node $x$, the weights $\{\mathbf{W}_{x,u}\}_{u \in [d]}$ (possibly normalized) determine the probability distribution of a particle's next position. The key distinction is that percolation randomizes the environment (the graph structure), while diffusion randomizes the trajectory through a fixed environment.

It is therefore most appropriate to frame our problem as a graph percolation problem. Given a weighted adjacency matrix $\mathbf{W}$, we interpret it as parametrizing a distribution $\text{Bern}(\mathbf{W})$ from which random discrete graphs are sampled, $\mathbf{A} \sim \text{Bern}(\mathbf{W})$. Our goal is to estimate the expected reachability (with path lengths up to $d$), $\mathbb{E}_{\mathbf{A} \sim \text{Bern}(\mathbf{W})} \left[ R_{\mathbf{A}}^{(d)}(x, y) \right]$, where the randomness comes from the graph structure itself.

This insight guides our estimation approach for the percolation-based reachability. We observe that our generalized Bellman-Ford-based differentiable reachability score (\Cref{def:diff-reachability}) provides a lower bound on this expectation, while graph diffusion methods like Random Walk algorithms typically yield upper bounds. This distinction becomes clear when considering two potential paths $\path_1$ and $\path_2$ from node $x$ to node $y$ that share some edges. In percolation theory, reachability requires only that at least one path forms completely. Our Bellman-Ford approach with max operators captures this intuition—when one path's formation probability approaches 1, additional paths contribute minimally to the overall reachability probability. The lower bound property stems from the approximation gap between product-max t-norms/t-conorms and the true intersection/union of events, combined with the approximation in LogLTN's~\citep{Badreddine2023logLTNDF} LogMeanExp operator (\Cref{eq:logical-operators}). Formal proof of this lower bound property can be found in the proof for \Cref{lemma:lower-bound-reachability} in \Cref{appdx:proofs}. Consequently, our method will systematically \emph{underestimate} true percolation-based reachability.

Conversely, in diffusion models, particles make independent decisions at each node. When two paths overlap, the probability of a particle reaching the target via either path is treated additively rather than as a union probability. This independence assumption fails to account for the correlation between overlapping paths in the percolation setting, causing diffusion methods to \emph{overestimate} the true percolation-based reachability. 

\begin{figure}[t]
    \centering
    \includegraphics[width=1.0\linewidth]{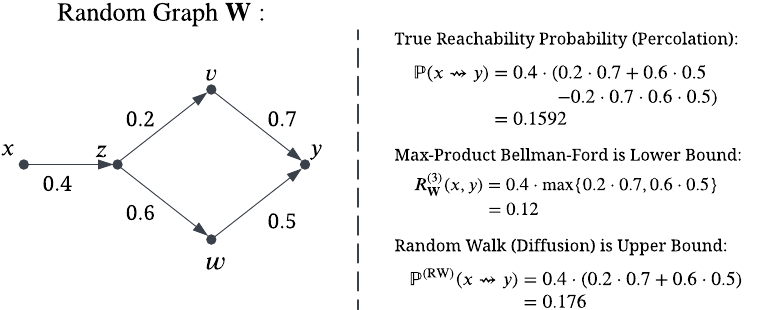}
    \caption{Example random graph to illustrate that the Max-Product Bellman-Ford reachability computes a lower bound of the true percolation-based reachability probability, whereas diffusion-based random walk computes an upper bound. Marked numbers are the edge probabilities. For the random walk computation, we assume a model with self-loop probabilities, e.g. $P(x \to x) = 0.6$ so that the transition probabilities to other nodes and the self-loop probabilities sum to one.}
    \label{fig:percolation-vs-diffusion}
\end{figure}

\Cref{fig:percolation-vs-diffusion} provides a concrete example illustrating the distinction between our max-product Bellman-Ford reachability and graph diffusion scores. The example shows a weighted adjacency matrix $\mathbf{W}$ with 5 nodes, where edge weights are as indicated in the figure (with zeros elsewhere). For the diffusion-based method, we choose the typical Random Walk algorithm, where we assume self-loops at each node so that the sum of self-loop probability and outgoing edge probabilities equals 1 (e.g., node $x$ has a self-loop probability of 0.6).

Consider two possible paths from $x$ to $y$: path $\path_1 = x \to z \to v \to y$ and path $\path_2 = x \to z \to w \to y$. $\path_1$ and $\path_2$ are two random variables under the graph distribution $\text{Bern}(\mW)$. The true percolation-based reachability probability, denoted by $\prob(x \leadsto y)$, is computed as the union of the events of these two paths forming: $\prob(\path_1 \lor \path_2)$. Since these paths share the edge $x \to z$ but diverge afterward, we must carefully apply probability theory on union of events. For the divergent segments, the probability is:
\begin{equation*}
\adjustbox{max width=\columnwidth}{$
\begin{aligned}
    \prob((z \to v \to y) \lor (z \to w \to y)) = \prob(z \to v \to y) + \prob(z \to w \to y) - \prob((z \to v \to y) \land (z \to w \to y))
\end{aligned}
$}
\end{equation*}

Because these two sub-paths share no edges, they are independent, giving $\prob((z \to v \to y) \land (z \to w \to y)) = \prob(z \to v \to y) \cdot \prob(z \to w \to y)$. Calculating the full expression yields $\prob(x \leadsto y) = 0.1592$.

Our max-product Bellman-Ford approach approximates this union using the max operator: for the divergent sub-paths from $z$ to $y$, we compute $R_{\mW}^{(2)}(z, y) = \max\{\prob(z \to v \to y), \prob(z \to w \to y)\}$. Since the max operator always provides a lower bound for the union of events, our approach yields the final value $R_{\mathbf{W}}^{(3)}(x, y) = 0.12$, which underestimates the true reachability probability.
In contrast, the diffusion-based Random Walk method, denoted by $\prob^{(\text{RW})}(x \leadsto y)$, adds probabilities additively across possible paths: $\prob^{(\text{RW})}(x \leadsto y) = \prob(\path_1) + \prob(\path_2)$. This leads to an overestimation of the true reachability probability, giving $\prob^{(\text{RW})}(x \leadsto y) = 0.176$.
This example clearly demonstrates why our max-product approach provides a principled lower bound on percolation-based reachability, while diffusion-based methods typically yield upper bounds.


Our max-product Bellman-Ford approach is thus particularly well-suited for our proposed model instantiation, \ourmethod, as it provides consistent lower bounds throughout computation, ensuring a coherent probabilistic interpretation. While we focus on lower-bound estimation, an alternative approach could theoretically estimate upper bounds on expected d-separation/d-connection statements. For such a method, diffusion-based approaches like Random Walk would be more appropriate for estimating graph reachability. However, this would require identifying a different pair of t-norm/t-conorm differentiable logical operators that yield upper bounds instead of lower bounds. Based on \citet{van2022analyzing}'s comprehensive analysis of t-norm and t-conorm suitability for differentiable learning, we have not identified a configuration that simultaneously provides upper bounds and maintains gradient stability comparable to our max-product operators. This represents an interesting direction for future research, potentially expanding the theoretical foundations of differentiable d-separation frameworks, and we encourage future work to explore this avenue.

\section{Full Algorithm and Implementation Details of \ourmethod}  \label{appdx:model-details} \label{appdx:full-algorithm}

\ourmethod, our proposed instantiation of the differentiable d-separation framework, employs multiple techniques to address optimization challenges and result validation. This section provides comprehensive implementation details for all techniques used in our approach. We first discuss the multi-task gradient conflict resolution technique (Section~\ref{appdx:model-detail-pcgrad}) and the Bayesian sampling method for enhanced weight space exploration (Section~\ref{appdx:model-detail-dlp}). Since \ourmethod uses Bayesian sampling, each optimization step generates a new candidate causal graph. Consequently, we require a principled procedure to select the best DAGs from those sampled, using only information available from the input data. We present this DAG selection heuristic in Section~\ref{appdx:model-detail-dag-selection}. Finally, we provide the complete algorithm pseudocode in Section~\ref{appdx:model-detail-full-algorithm}.

\subsection{Multitask optimizer to address conflicting gradients}  \label{appdx:model-detail-pcgrad}

\begin{wrapfigure}{r}{0.52\textwidth}
\vspace{-25pt}
\begin{minipage}{\linewidth}
\begin{algorithm}[H]
\caption{PCGrad~\citep{yu2020gradient} Gradient Projection}  \label{alg:pcgrad}
\begin{algorithmic}[1] 
\Require Model parameters $\theta$, number of tasks $K$, loss functions $\{\loss_k\}_{k=1}^K$
\State $\vg_k \leftarrow \nabla_{\theta} \loss_k(\theta) \,\,\,\, \forall k \in [K]$
\State $\vg_k^{\text{PC}} \leftarrow \vg_k \,\,\,\, \forall k \in [K]$
\For{$i = 1 : K$}
    \For{$j \stackrel{\text{uniformly}}{\sim} [K] \setminus i$ in random order}
        \If{$\vg_i^{\text{PC}} \cdot \vg_j < 0$}
            \State \Comment{Subtract projection of $\vg_i^{\text{PC}}$ onto $\vg_j$}
            \State $\vg_i^{\text{PC}} \leftarrow \vg_i^{\text{PC}} - \frac{\vg_i^{\text{PC}}  \cdot \vg_j}{\| \vg_j \|^2} \vg_j$
        \EndIf
    \EndFor
\EndFor
\State \Return update $\Delta \theta^{(\text{PC})} = \vg^{\text{PC}} = \sum_{i=1}^K \vg_i^{\text{PC}}$
\end{algorithmic}
\end{algorithm}
\end{minipage}
\vspace{-30pt}
\end{wrapfigure}

The five distinct loss functions—0th and 1st-order TP/TN losses plus the DAG acyclicity constraint (\Cref{def:multitask-loss})—render our optimization a multi-objective problem. Recent research in multi-task learning has demonstrated that naively summing these loss functions leads to theoretically suboptimal convergence, as contradictory gradient directions among tasks can impede optimization progress~\citep{Sener2018MultiTaskLA,yu2020gradient,wang2021gradient,liu2021conflict,navon2022multi}. This challenge is particularly pronounced in our framework, where the TP losses encouraging d-separation naturally conflict with the TN losses that require d-connection. To address these inherent gradient conflicts, we adopt PCGrad~\citep{yu2020gradient}, a gradient projection technique that resolves conflicting directions by projecting one gradient onto the normal plane of the other when they conflict. 
We reproduce the PCGrad algorithm~\citep{yu2020gradient} adopted to our setting in \Cref{alg:pcgrad}.

\subsection{\texorpdfstring{Gradient-informed discrete Bayesian sampling of $\mW$}{\text{Gradient-informed discrete Bayesian sampling of W}}}  \label{appdx:model-detail-dlp}

Rather than using conventional stochastic gradient descent (SGD), we adopt Discrete Langevin Proposal (DLP)~\citep{zhang2022langevin}, a gradient-informed discrete Bayesian sampling technique, to sample parameters $\theta$ that yield low-loss configurations\footnote{Note that this differs from the discrete DAG sampling $\mathbf{A} \sim \text{Bern}(\mathbf{W})$ in Section~\ref{sec:diff-dsep}, which serves to theoretically establish the probabilistic interpretation of our continuous d-separation and d-connection values.}. 
Our choice of DLP is motivated by several key factors. First, our empirical observations show that SGD often becomes trapped in local minima, possibly due to the highly non-convex loss landscape, in which case Bayesian sampling can naturally escape local minima and explores the weight space more effectively. Second, we favor discrete over continuous sampling because the most likely DAG $\mathbf{A}$ from a weighted adjacency matrix $\mathbf{W}$ is obtained via thresholding at 0.5. This threshold divides the parameter space into discrete regions, and continuous optimization or sampling approaches spend significant computational effort exploring within a single region, repeatedly yielding the same causal graph structure. In contrast, DLP's discrete support enables efficient exploration across different regions while leveraging gradient information from the continuous space to guide proposals. 

To adopt DLP for \ourmethod, we first restrict the parameter space from the real space $\sR^{d \times d}$ to a space of discrete supports, $\sD^{d \times d}$, where $\sD = \{D_1, \dots, D_m \mid D_i < D_j, \forall i < j\}$ is a finite set of values. Furthermore, we require $D_1 < 0$ and $D_m > 0$, so that for any pair of nodes $(x, y)$, if $\theta_{x, y} = D_1$, then $\mW_{x, y} = \sigma(\theta_{x, y}) < 0.5$, which allows us to interpret $x, y$ as more likely to not having an edge. Similarly, if $\theta_{x, y} = D_m$, then $\mW_{x, y} = \sigma(\theta_{x, y}) > 0.5$, which allows us to interpret $x, y$ as more likely to have an edge $x \to y$.

In reality through empirical experiments, we found that $\sD = \{-2.0, 0.0, 2.0\}$ works quite well. We hypothesize this could be due to $-2.0$ and $2.0$ as sigmoid logits are not too extreme so that they enable smooth and meaningful gradients, while also sufficiently far apart for the differentiable d-separation \formula. Furthermore, the ``middle point'' $0.0$ allows \ourmethod to model uncertain edges, whose final values can only to be decided after other edges are settled. 

\begin{wrapfigure}{r}{0.52\textwidth}
\vspace{-10pt}
\begin{minipage}{\linewidth}
{
\algblockdefx{Loop}{EndLoop}
    {\textbf{loop}}
    {\textbf{end loop}}
\algnewcommand{\Construct}{\textbf{construct}\xspace}
\algnewcommand{\Sample}{\textbf{sample}\xspace}
\algnewcommand{\Compute}{\textbf{compute}\xspace}
\algnewcommand{\Set}{\textbf{set}\xspace}
\begin{algorithm}[H]
\caption{DLP (DMALA)~\citep{zhang2022langevin} Sampling Step}  \label{alg:dlp}
\begin{algorithmic}[1] 
\Require Model parameters $\theta$, step size $\beta$, gradient $\Delta\theta^{(\text{PC})}$, current energy $U(\theta)$
\State // Proposal Step
\For{i = 1 : d} \Comment{Can be done in parallel}
    \State \Construct $q_i(\cdot \mid \theta)$ as in \Cref{eq:dlp-proposal}
    \State \Sample $\theta'_i \sim q_i(\cdot \mid \theta)$ 
\EndFor
\State // MH Acceptance Step
\State \Compute $U(\theta')$
\State \Compute $\Delta\theta'^{(\text{PC})}$ via PCGrad (\Cref{alg:pcgrad})
\State \Compute $q(\theta' \mid \theta) = \prod_i q_i(\theta'_i \mid \theta)$  
\State \Compute $q(\theta \mid \theta') = \prod_i q_i(\theta_i \mid \theta')$
\State \Set $\theta \leftarrow \theta'$ with probability in \Cref{eq:dlp-acceptance}
\State \Return new sample $\theta'$
\end{algorithmic}
\end{algorithm}
}
\end{minipage}
\end{wrapfigure}

The next step to adopt DLP~\citep{zhang2022langevin} is to define the proposal sampling distribution and the Metrpolis-Hastings(MH) acceptance-rejection step~\citep{metropolis1953equation,hastings1970monte}. This is the discrete Metropolis-adjusted Langevin algorithm (DMALA) variant of DLP algorithm~\citep{zhang2022langevin}. One of DLP's novel innovations is the parallel sampling of the parameters $\theta$ when its proposal distribution can be factorized along the dimensions, meaning at every step, multiple $\theta_i$ for different $i$'s may have their values updated. This technique significantly enhances DLP's efficiency compared to other discrete sampling methods. \ourmethod fits this requirement as we can formulate a factorizable proposal distribution. Specifically, we adopt Equation (2) from \citet{zhang2022langevin}, but change the gradients to those obtained via PCGrad~\citep{yu2020gradient} from \Cref{alg:pcgrad}. That is, let $q(\theta' \mid \theta)$ be the proposal distribution. It can be factorized along the $d$ dimensions, $q(\theta' \mid \theta) = \prod_{i=1}^d q_i(\theta'_i \mid \theta)$, where $q_i(\theta'_i \mid \theta)$ is a categorical distribution of the form:
\begin{align}  \label{eq:dlp-proposal}
    q_i(\theta'_i \mid \theta) \defas \text{Categorical}\left( \text{Softmax}\left( \frac{1}{2} \Delta\theta^{(\text{PC})} (\theta_i - \theta'_i) - \frac{(\theta_i - \theta'_i)^2}{2\beta} \right) \right) ,
\end{align}
where $\Delta\theta^{(\text{PC})}$ is the projected gradients given by the PCGrad algorithm (\Cref{alg:pcgrad}), and $\beta$ is a hyperparameter controlling the DLP step size.

After a proposal $\theta'$ is sampled from $\theta$ according to \Cref{eq:dlp-proposal}, we perform an MH acceptance-rejection step. Intuitively, this step is to ensure that we are not taking a step too far and landing into a ``bad region'' in the parameter space. Mathematically, this step is to ensure the Markov chain is reversible. Here, we adopt Equation (3) from \citet{zhang2022langevin}, but for the (negative) energy function, we simply adopt the heuristic of summing the multi-task losses (\Cref{def:multitask-loss}). Thus, for \ourmethod, we accepts the proposal $\theta'$ with probability
\begin{align}  \label{eq:dlp-acceptance}
    \min \left( 1, \exp( U(\theta) - U(\theta')) \frac{q(\theta \mid \theta')}{q(\theta' \mid \theta)} \right) ,
\end{align}
where $U(\theta) = \loss_{\text{TP-0}}(\theta, \data) + \loss_{\text{TP-1}}(\theta, \data) + \loss_{\text{TN-0}}(\theta, \data) + \loss_{\text{TN-1}}(\theta, \data) + \loss_{\text{DAG}}(\theta, s)$ with $\data$ the dataset. 

We reproduce the DLP sampling algorithm adopted to \ourmethod in \Cref{alg:dlp}. We note, however, that the combination of PCGrad\citep{yu2020gradient} projected gradients $\Delta\theta^{(\text{PC})}$ and the energy function $U(\theta)$ as the sum of multi-task losses may not form a mathematically well-defined and reversible Markov chain for the sampling, because the gradients are not directly derived from $U(\theta)$ but have been post-hoc modified via gradient projections. Nevertheless, we argue both theoretically and from empirical observations that the gradient modification step with either PCGrad or some alternative multi-task learning method is essential, as it addresses gradient conflicts and navigates the gradient landscape much more efficiently. Furthermore, we empirically found that the MH acceptance step using the energy function $U(\theta)$ obtained via summing the multi-task losses is sufficiently performative and, more importantly, provides the acceptance rate as an important indicator for adjusting the DLP step size $\beta$, which is one of the most important hyperparameters in \ourmethod. We detail the hyperparameter choice in \Cref{appdx:our-method-details}. We invite future research to tackle this challenging of integrating multi-task gradients with MH acceptance step in a more principled way.

\subsection{Training-time DAG Selection}  \label{sec:dag-selection-score} \label{appdx:model-detail-dag-selection}

Finally, \ourmethod employs a heuristic score to evaluate the quality of causal graphs obtained from sampled model parameters. This evaluation step is essential because \ourmethod adopts DLP~\citep{zhang2022langevin}, and each DLP sampling iteration generates new model parameters and corresponding causal graphs, making it necessary to identify the highest-quality DAGs for final output. Importantly, this score is not a traditional "validation score" since it requires neither a separate validation dataset nor knowledge of the ground-truth causal structure. Instead, it operates solely on the input dataset $\data$ that is already used during optimization and sampling. This approach offers a practical advantage over model-based methods like NOTEARS~\citep{zheng2018dags} and DAGMA~\citep{bello2022dagma}, which typically require splitting the dataset to create separate validation sets, thereby reducing the amount of data available for model training. Our score makes full use of all available data while providing a principled mechanism for DAG selection.

We name this score the ``TPTN Ratio score'', as it essentially checks the ratio of weighted true positive (TP) and true negative (TN) d-separation / CI statements predicted by the model, while treating the p-values from the dataset as the soft ground-truth labels. Specifically, given the current sampled model parameter $\theta$, we obtain the weighted adjacency matrix via $\mW = \sigma (\theta)$, and then threshold it to convert it to a binary graph $\hat{\mA}$, $\hat{\mA}_{x, y} = \Ind[\mW_{x, y} > 0.5], \forall x, y \in [d]$. Additionally, we would like to ensure that at every step of sampling, we are evaluating a DAG, as the graph $\hat{\mA}$ converted from the model parameters $\theta$ found by the DLP sampling algorithm may not fully satisfy the acyclicity constraint. This requirement can be achieved by pruning $\hat{\mA}$ to form an acyclic $\mA$. In \ourmethod, we proceed by finding a feedback arc set (ARC) from $\hat{\mA}$ and remove all the edges in it to construct $\mA$. We use the ARC method provided in the igraph package~\citep{igraph}.

Then, setting the LogMeanExp temperature $\alpha$ (\Cref{eq:logical-operators}) to a very small value (e.g. $\alpha = 1e-5$), we use the differentiable d-separation scores (\Cref{def:diff-dsep}) on $\mA$ to obtain $\Tilde{\dsep}_{\mA}^{(0)}$ and $\Tilde{\dsep}_{\mA}^{(1)}$. Since $\alpha$ is small but not exactly 0, $\exp(\Tilde{\dsep}_{\mA}^{(0)})$ and $\exp(\Tilde{\dsep}_{\mA}^{(0)})$ are close to but not exactly 0's or 1's. Thus, we threshold again to obtain the binary d-separation statements, $\dsep_{\mA}^{(0)}(x, y) = \Ind[\exp(\Tilde{\dsep}_{\mA}^{(0)})(x, y) > 0.5]$ and $\dsep_{\mA}^{(1)}(x, y \mid z) = \Ind[\exp(\Tilde{\dsep}_{\mA}^{(1)})(x, y \mid z) > 0.5]$. We treat these as the binary d-separation statements predicted by the model.

Then, given the p-values $\pvals$ from the data, we compute the true positives (TP), true negatives (TN), false positives (FP), and false negative (FN) scores as:
\begin{equation*}
\adjustbox{max width=\columnwidth}{$
\begin{aligned}
    \text{TP}(\mA, \data) &= \sum_{x, y \in [d]} \dsep_{\mA}^{(0)}(x, y \mid z) \pvals(x, y) +\sum_{x, y, z \in [d]} \dsep_{\mA}^{(1)}(x, y \mid z) \pvals(x, y \mid z) \\
    \text{TN}(\mA, \data) &= \sum_{x, y \in [d]} (1 - \dsep_{\mA}^{(0)}(x, y \mid z)) (1 - \pvals(x, y)) +\sum_{x, y, z \in [d]} (1 - \dsep_{\mA}^{(1)}(x, y \mid z)) (1 - \pvals(x, y \mid z)) \\
    \text{FP}(\mA, \data) &= \sum_{x, y \in [d]} \dsep_{\mA}^{(0)}(x, y \mid z) (1 - \pvals(x, y)) +\sum_{x, y, z \in [d]} \dsep_{\mA}^{(1)}(x, y \mid z) (1 - \pvals(x, y \mid z)) \\
    \text{FN}(\mA, \data) &= \sum_{x, y \in [d]} (1 - \dsep_{\mA}^{(0)}(x, y \mid z)) \pvals(x, y) +\sum_{x, y, z \in [d]} (1 - \dsep_{\mA}^{(1)}(x, y \mid z)) \pvals(x, y \mid z) .
\end{aligned}
$}
\end{equation*}
And then the TPTN Ratio score is computed as
\begin{align}  \label{eq:dag-selection-score}
    \text{TPTN-Ratio}(\mA, \data) = \frac{\text{TP}(\mA, \data) + \text{TN}(\mA, \data)}{\text{TP}(\mA, \data) + \text{TN}(\mA, \data) + \text{FP}(\mA, \data) + \text{FN}(\mA, \data)} ,
\end{align}
which ranges in $[0, 1]$ and the higher the score, the better the causal graph in matching the low-order CI statements found in the data.

\subsection{Full algorithm of \ourmethod}  \label{appdx:model-detail-full-algorithm}

Combining all techniques together, \ourmethod's full algorithm is given in \Cref{alg:our-method}.

{
\algnewcommand{\Compute}{\textbf{compute}\xspace}
\begin{algorithm}[ht]
\caption{\ourmethod}  \label{alg:our-method}
\begin{algorithmic}[1] 
\Require Data $\data$, initial parameter $\theta_0$, number of steps $T$, number of best DAGs $K$, step size $\beta$
\For{$t = 0 : T - 1$}
    \State $\mW_t \leftarrow \sigma(\theta_t)$
    \For{$(x, y) \in [d]^2$} \Comment{{\color{blue}Can be done in parallel}}
        \State \Compute $\Tilde{\dsep}_{\mW_t}^{(0)}$ and $\Tilde{\dcon}_{\mW_t}^{(0)}$ as in \Cref{def:diff-dsep}
    \EndFor
    \For{$(x, y, z) \in [d]^3$} \Comment{{\color{blue}Can be done in parallel}}
        \State \Compute $\Tilde{\dsep}_{\mW_t}^{(1)}$ and $\Tilde{\dcon}_{\mW_t}^{(1)}$ as in \Cref{def:diff-dsep}
    \EndFor
    \State \Compute $\loss_{\text{TP-0}}$, $\loss_{\text{TP-1}}$, $\loss_{\text{TN-0}}$, $\loss_{\text{TN-1}}$, $\loss_{\text{DAG}}$ as in \Cref{def:multitask-loss}
    \State $U(\theta_t) \leftarrow \loss_{\text{TP-0}} + \loss_{\text{TP-1}} + \loss_{\text{TN-0}} + \loss_{\text{TN-1}} + \loss_{\text{DAG}}$
    \State \Compute $\Delta\theta_t^{(\text{PC})}$ via PCGrad~\citep{yu2020gradient} (\Cref{alg:pcgrad})
    \State \Compute $\theta_{t+1}$ via DLP~\citep{zhang2022langevin} (\Cref{alg:dlp})
    \State \Compute $\mA_{t+1}$ by converting $\theta_{t+1}$ to a discrete DAG (\Cref{appdx:model-detail-dag-selection})
    \State \Compute $\text{TPTN-Ratio}(\mA_{t+1}, \data)$
\EndFor
\State \Return $K$ DAGs from $\{ \mA_{t} \}_{t=1}^{T}$ with the Top-$K$ highest $\text{TPTN-Ratio}(\mA_{t}, \data)$ score
\end{algorithmic}
\end{algorithm}
}

\subsection{Computation Complexity Analysis}

\revision{%
The computational bottleneck of \ourmethod is the p-values computation for all low-order CI statements and the reachability computation required by the differentiable d-separation formulae.

The complexity of p-values computation (which we GPU-accelerate) depends on the specific choice of statistical independence test. Take the Chi-squared test for example. Given $n$ data points, computing each single Chi-squared p-value takes $O(n)$ as this is the time required for iterating over all data points and building the contingency. Thus, the overall time required is $O(d^3 n)$ as we obtain a total of $d^2 + d^3$ number of p-values for the unconditional and first-order conditional statements.

For the differentiable d-separation formulae: the reachability subroutine (\Cref{def:fol-reachability}) involves a maximum of $d$ recursive steps, and each all-pairs Bellman-Ford update step operates on $O(d^3)$ matrix entries (\Cref{eq:graph-reachability-bellman-ford-2}). Thus, the total computation time is $O(d^4)$. The resulting all-pairs reachability matrix will be cached and accessed by the d-connection/d-separation score computation. Now, since in the 1st-order d-connection/d-separation scores (\Cref{eq:d-sep-fol-1,eq:d-con-fol-1}), we need reachability on the node-deleted subgraph $\mathbf{A}_{-z}$ for all conditioning node $z$, this requires a total of $d+1$ all-pairs reachability matrices, thus rendering the total runtime of this part $O(d^5)$.

For the 0th-order d-separation/d-connection formulae (\Cref{eq:d-sep-fol-0,eq:d-con-fol-0}), computing $S^{(0)}\_{\mathbf{A}} (x, y)$ or $C^{(0)}\_{\mathbf{A}} (x, y)$ for each (x, y) takes $O(d)$ time, since it iterates over $d$ possible common ancestors and accessing the reachability cache takes $O(1)$. Thus, computing all-pairs 0th-order d-separation/d-connection scores take $O(d^3)$ time. Now, since later in the 1th-order formulae, we need all-pairs 0th-order d-separation/d-connection scores for the node-deleted subgraphs $A_{-z}$ for all conditioning node $z$, this adds another $d$ dimension. Thus, the total time is $O(d^4)$.

For the 1st-order d-separation/d-connection formulae (\Cref{eq:d-sep-fol-1,eq:d-con-fol-1}), similarly, computing the result for each query triple $(x, y \mid z)$ takes $O(d)$ time. Thus, getting the results for all triple of nodes takes a total of $O(d^4)$ computations.

Thus in summary, computing all-pairs p-values takes $O(d^3 n)$ computations, computing reachability matrix (and cache for later use) takes $O(d^5)$ computations, and computing 0th-order and 1st-order d-separation/d-connection scores each take $O(d^4)$ computations. The total will be dominated by the reachability computation, which is $O(d^5)$.

\paragraph{Acceleration in Practice} The total $O(d^5)$ time complexity notwithstanding, we seize many promising opportunities for acceleration in the code implementation of \ourmethod.

First, we note that all computations of reachability and d-separation/d-connection scores are matrix-vector operations, which can benefit from \emph{GPU vectorization}. In practice, we leverage PyTorch GPU tensor library for all such computations, avoiding any explicit for-loops and significantly improving the speed of optimization. 

Second, in practice one can limit the reachability computation to only consider paths of a constant maximum length $k$, reducing the time complexity from $O(d^5)$ to $O(d^4 k)$, \emph{rendering the entire pipeline $O(d^4)$}. In the experiments for larger graphs with $d = 50$ number of nodes (\Cref{appdx:synthetic-binary-exp-full}), we limit the path length to $k = 25$. Observations during earlier method development suggest negligible performance degradation with significantly improved computational efficiency. 

Third, the all-pairs p-values computation can be massively parallelized and then cached and reused. This is particularly useful in the case where one wishes to obtain multiple output causal graphs for the same input dataset to enhance solution diversity and exploration of the Markov equivalence class. In this case, the computation time for the all-pairs p-values step can be amortized as it can be reused in subsequent runs.

Finally, we expect future work to further examine how to incorporate sparsity assumptions into the graph to further reduce the time complexity. For example, instead of assuming each node can connect to all $d-1$ other nodes, one reasonable assumption is to restrict to a maximum of constant $k$ degree. In that case, the reachability computation subroutine computation can drastically speed up.
}%

\section{Experiment Details}  \label{appdx:experiment-details}

\subsection{Code and dataset release}  \label{appdx:code-release}

We release our code and data in 

\url{https://github.com/PurdueMINDS/DAGPA}

\subsection{Dataset creation and conversion}  \label{appdx:dataset-creation}

\revision{%
We generated synthetic binary datasets following the approach introduced in the \kPC codebase~\citep{kocaoglu2023characterization}. We simulated DAGs using both Erdős–Rényi (ER) and Scale-Free (SF) graph models with $d \in \{10, 50\}$ nodes and expected edge-to-node ratios $r \in \{2, 4\}$. For ER graphs, we set edge probability $p = r/d$ to achieve the desired arc ratio. For SF graphs, we used the Barabási-Albert preferential attachment model with attachment parameter $m = \lfloor r/2 \rfloor$ to generate undirected graphs, then randomly oriented edges to form DAGs. 

For each generated DAG adjacency matrix $A$, we constructed a binary Bayesian network using pyAgrum~\citep{pyAgrum}. Each variable $X_i$ takes values in $\{0, 1\}$, and we randomly generated conditional probability tables (CPTs) for all nodes. For root nodes (variables with no parents), we sampled marginal probabilities $P(X_i = 1) \sim \text{Uniform}(0.2, 0.8)$ to avoid extreme probabilities. For non-root nodes with parent set $\text{PA}(i)$, we sampled conditional probabilities $P(X_i = 1 \mid \text{PA}(i) = \mathbf{c}) \sim \text{Uniform}(0.2, 0.8)$ independently for each parent configuration $\mathbf{c} \in \{0,1\}^{|\text{PA}(i)|}$. From these fully specified Bayesian networks, we drew $n \in \{100, 1000, 10000, 100000\}$ independent samples via ancestral sampling (forward sampling from the topological order). Since the data is binary, no normalization or scaling preprocessing was needed.
}%


\revision{We also generated continuous data with simulated ER and SF DAG following the approach introduced in~\citep{bello2022dagma}. Specifically, for each DAG adjacency matrix $A$, we generated structural equation models of the form $X_i = \sum_{j \in \text{PA}(i)} w_{ji} f_j(X_j) + \epsilon_i$, where $\text{PA}(i)$ denotes the parent set of node $i$, edge weights $w_{ji}$ are sampled uniformly from $[-2.0, -0.5] \cup [0.5, 2.0]$, and noise terms $\epsilon_i \sim \mathcal{N}(0, 1)$ are independent Gaussian. For linear continuous data, we set $f_j(x) = x$. For nonlinear continuous data, we randomly assigned each $f_j$ to one of four nonlinear functions: $f_j(x) \in \{x^2, x^3, \sin(x), \cos(x)\}$ with equal probability. We generated datasets with the same sample sizes $n \in \{100, 1000, 10000, 100000\}$ and graph structures (ER and SF with $d \in \{10, 50\}$ nodes and arc ratios $r \in \{2, 4\}$) as the binary setting to enable direct comparison across data types.}

For experiments on real-world dataset, we benchmarked our method and baselines over the Sachs dataset~\citep{sachs2005causal} and the LUCAS (LUng CAncer Simple set) dataset~\citep{guyon2011causality}. 
The Sachs dataset is a widely recognized benchmark in causal discovery research, consisting of protein signaling pathway data collected through flow cytometry experiments. It contains measurements of 11 phosphorylated proteins and phospholipids derived from thousands of individual primary immune system cells, gathered under various experimental conditions. What makes this dataset particularly valuable for benchmarking causal learning methods is that the pathways between these proteins are well-established in scientific literature, providing a reliable ground truth causal graph against which algorithms can be evaluated. The dataset has been extensively used in numerous studies to assess the performance of causal discovery algorithms, including recent work that demonstrates how less restrictive modeling approaches can capture complex causal relationships in this data that traditional methods assuming additive noise often fail to identify. We obtained and did benchmark on the subset of Sachs data containing approximately 800 samples with no perturbation. We used the standard discretized version preprocessed using the Hartemink discretization method, which converts the original continuous protein concentration measurements into 3-level categorical variables (low, average, high) while preserving the underlying dependence structure.
The LUCAS dataset is an artificially generated benchmark specifically designed to evaluate causal discovery algorithms under different conditions. It features binary variables in a causal Bayesian network structure and comes in several variants that present increasing levels of challenge. Among all experiments introduced in LUCAS, we used LUCAS0 (baseline unmanipulated data) of size 2000.

\subsection{Evaluation metrics}   \label{appdx:evaluation-metric}



The primary metric we adopt in this work is the Conditional Independence Matthews Correlation Coefficient (CI-MCC), which we describe in detail in \Cref{sec:experiments} with a visual illustration in \Cref{fig:ci-mcc-metric}.

Additionally, we evaluate our method and baselines using the following standard graph structure-based metric:

\begin{itemize}
    \item \textbf{CPDAG Arrowhead F1:} We compare predicted causal graphs with the ground-truth CPDAG derived from the ground-truth DAG. Since different methods produce varying graph types, we standardize all outputs by converting them to CPDAGs. For methods like NOTEARS~\citep{zheng2018dags} and DAGMA~\citep{bello2022dagma} that return DAGs, we convert their outputs to CPDAGs using the utility method from the causal-learn package~\citep{zheng2024causal}. For k-PC~\citep{kocaoglu2023characterization}, which returns k-essential graphs containing circle-marked edges (in addition to directed and undirected edges), we treat circle-marked edges as undirected edges in the CPDAG conversion. This treatment is appropriate because circle-marked edges in k-essential graphs denote edges whose directions cannot be determined from low-order CI statements alone. Once both predicted and ground-truth graphs are converted to CPDAGs, we compute the arrowhead F1 score exclusively over the directed edges in the CPDAGs.

    \item \textbf{CPDAG Skeleton F1:} Following the same standardization process as CPDAG Arrowhead F1, we convert all predicted causal graphs to CPDAGs. We then further convert all directed edges in both predicted and ground-truth CPDAGs to undirected edges, creating graph skeletons. The F1 score is computed on these undirected skeleton graphs.

    \item \textbf{CPDAG Structural Hamming Distance (SHD):} After standardizing predicted causal graphs by converting them to CPDAGs, we compute the structural Hamming distance (SHD) between predicted and ground-truth CPDAGs. Each unit of SHD corresponds to a single edge difference between the two CPDAGs, including: missing edges, extra edges, incorrect edge orientations (directed vs. undirected), or incorrect edge directions.

    \item \textbf{DAG F1:} We also evaluate methods that directly output DAGs by comparing them with the ground-truth DAG. Since constraint-based methods like GES~\citep{GESBIC}, PC~\citep{spirtes1991algorithm}, and k-PC~\citep{kocaoglu2023characterization} return CPDAGs or k-essential graphs rather than DAGs, this metric only applies to score-based methods: our approach, NOTEARS (linear and nonlinear)~\citep{zheng2018dags,zheng2020learning}, and DAGMA~\citep{bello2022dagma}.
\end{itemize}

We note that these graph structure-based metrics may pose an inherent disadvantage to our method compared to CI-MCC. Through empirical observation, we have identified cases where \ourmethod produces DAGs with similarly high CI-MCC scores but vastly different performance on structure-based metrics. For instance, one sampled DAG may achieve both high CI-MCC and high structure-based scores, while another DAG with comparable CI-MCC performance may score poorly on structure-based metrics. This suggests that while \ourmethod consistently samples DAGs that align well with conditional independence patterns (as measured by CI-MCC), this alignment does not necessarily translate to high performance on traditional graph structure metrics. We provide detailed analysis of this phenomenon in Appendix~\ref{appdx:ci-mcc-analysis}.

\subsection{\ourmethod model details and hyperparameters}  \label{appdx:our-method-details}

The most important and sensitive hyperparameter in \ourmethod is the DLP sampling step size $\beta$ (\Cref{eq:dlp-proposal}). To this end, we first find values for all other hyperparameters through preliminary experimentations then fix them, and only vary in the step size for the experiments on the synthetic binary dataset and the real-world datasets.

Some of the other important hyperparameters and their values:
\begin{itemize}
    \item \textbf{DLP support logit set $\sD$:} This hyperparameter controls the support logits that the model parameter $\theta$ can take during sampling. We use $\sD = [-2.0, 0.0, 2.0]$.

    \item \textbf{LogMeanExp temperature $\alpha$:} This hyperparameter controls the approximation accuracy of the t-conorm operator (\Cref{eq:logical-operators}) and thus the accuracy of differentiable d-separation scores. Setting this value too large will lose approximation accuracy, while setting it too low will induce unstable and unsmooth gradients. Thus, during training or parameter update cycles, we use a $\alpha_{\text{train}} = 0.01$, while during evaluation when computing the DAG selection score, we use a $\alpha_{\text{eval}} = 1e-5$.

    \item \textbf{DAGMA's~\citep{bello2022dagma} log-det acyclicity constraint hyperparameter $s$:} This hyperparameter controls the valid region of M-matrices in which the log-det acyclicity loss is well-defined. Setting this value too large will risk model parameters stepping out of this region and causing undefined gradients, whereas setting this value too large will cause the gradient to have very small norm. In our experiments, for small graphs (include $n=10$ synthetic binary dataset and both Sachs~\citep{sachs2005causal} and Lucas~\citep{guyon2011causality}) we use $s = 3.0$, while for large graphs ($n=50$ synthetic binary dataset) we use $s = 8.0$.
\end{itemize}

Finally, for the DLP step size $\beta$, for each dataset we choose a different range to run hyperparameter search and choose the best sampled DAGs therein according to the DAG selection score (\Cref{appdx:model-detail-dag-selection}). The specific value range of $\beta$ is chosen according to the acceptance rate in the DLP's Metropolis-Hastings acceptance-rejection step. Specifically, we choose the lowest $\beta$ value to be the one that can roughly achieve 0.8 acceptance rate, and the highest $\beta$ value to be the one that can roughly achieve 0.2 acceptance rate. 

\begin{itemize}
    \item \textbf{$n=10$ graphs in synthetic binary dataset:} $\beta \in \{ 0.5, 0.6, 0.7, 0.8, 0.9, 1.0, 1.1, 1.2 \}$.
    \item \textbf{$n=50$ graphs in synthetic binary dataset:} $\beta \in \{ 0.31, 0.32, 0.33, 0.34, 0.35, 0.36, 0.37, 0.38 \}$.
    \item \textbf{Sachs and Lucas:} $\beta \in \{ 0.76, 0.78, 0.80, 0.82, 0.84, 0.86, 0.88, 0.90 \}$
\end{itemize}

\subsection{Baseline model details and hyperparameters}  \label{appdx:baseline-details}
We tested our method against 5 methods from the table in the following section. For PC and kPC, we used chi-square independence test and chose significance level threshold at 0.05, and we tested with both first and second order conditional independence tests for kPC. We used the causal-learn implementation\cite{zheng2024causal} for PC and GES algorithms. For GES we used the local BIC score. For NOTEARS and DAGMA linear mode, we used the l2 loss for non-linear mode, and we followed the default dimensions for the MLP layer as used in the authors' original code. Rest of parameters all followed default settings.

\subsection{Compute resources used}   \label{appdx:compute-resource}
For the baselines, we ran PC, kPC, GES and linear version of NOTEARS, DAGMA on a 64-core AMD Epyc 7662 "Rome" processor with 16 CPU cores and 32 GB memory requested. The non-linear version of NOTEARS and DAGMA were run on one A30 with same CPU and memory requirement. Every experiment is completed in 4 hours.

For \ourmethod, we run all experiments on an AMD GPU cluster, equipped with 32GB MI108 and 64GB MI210 and EPYC 7V13 cpu with 64 cores. Experiments on small graphs ($n=10$ synthetic binary, Sachs, and Lucas) terminate within 1.5 hours, whereas experiments on large graphs ($n=50$) terminates within 12 hours.

\section{Related Work} \label{appdx:related_work}

The methodological evolution of causal discovery has progressed through two primary paradigms—constraint-based and score-based approaches—with recent advances bridging their complementary strengths.

\textbf{Constraint-Based Methods}
Pioneered by~\citep{spirtes2000causation}, constraint-based methods leverage conditional independence (CI) tests to reconstruct causal graphs. The PC algorithm~\citep{spirtes1991algorithm} established core principles: (1) infer conditional independencies via statistical tests, (2) eliminate edges violating d-separation rules, and (3) orient edges using collider detection. While effective in principle, finite-sample reliability suffered from error propagation in high-order CI tests~\citep{drori2020learning}. Extensions like FCI~\citep{zhang2008causal} addressed latent confounding through more sophisticated separation criteria, and Kernel CI Test (KCIT)~\citep{zhang2012kernel} enabled nonparametric testing via Hilbert space embeddings. Modern variants like LOCI~\citep{wienobst2020recovering} demonstrated that low-order CI statements suffice for structure recovery under specific faithfulness conditions, while k-PC~\citep{kocaoglu2023characterization} provided theoretical guarantees for bounded-order testing.

\textbf{Score-Based Methods}
Score-based methods reformulated causal discovery as a combinatorial optimization problem, maximizing score functions (\textit{e.g.}, BIC~\citep{schwarz1978estimating}) over DAG spaces. \revision{These methods can be broadly divided into those that search the discrete graph space and those that leverage continuous optimization.}
\revision{Among discrete methods,} GES~\citep{GESBIC} advanced this paradigm through greedy equivalence class search, 
\revision{Another major family consists of Bayesian methods, such as BayesDAG~\citep{annadani2023bayesdag}, which define a prior distribution over DAG structures and a likelihood function. By applying Bayes' rule, they compute a posterior distribution over DAGs, typically using sampling methods like MCMC to explore the structure space and average over models. While powerful for uncertainty quantification, these methods, like greedy search, are computationally intensive and constrained by the discrete search space.}
The field transformed with~\citep{zheng2018dags}'s NOTEARS framework, which redefined acyclicity through a differentiable constraint:
\begin{align*}
\text{tr}(e^{W \circ W}) - d = 0,
\end{align*}
where $W$ is the weighted adjacency matrix. This enabled continuous gradient-based optimization, spawning derivatives like DAGMA~\citep{bello2022dagma} with improved learning stability and enhanced gradient behavior via a log-determinant characterization,
\begin{align*}
    - \log \det (s\mI - \mW) + d\log s = 0 ,
\end{align*}
and GOLEM~\citep{NEURIPS2020_d04d42cd} using likelihood-based objectives. Nonlinear extensions~\citep{zheng2020learning} incorporated neural networks to model complex functional relationships while preserving acyclicity.



\section{Full Experiment Results}  \label{appdx:exp-results}

\subsection{Analysis of CI-MCC versus graph structure metrics}  \label{appdx:ci-mcc-analysis}

In this work we primarily showcase the conditional independence Matthews Correlation Coefficient (CI-MCC) metric, along with auxiliary graph-structure-based metrics like CPDAG Arrowhead F1, CPDAG Skeleton F1, CPDAG SHD, and DAG F1. We notice, however, that the graph-structure-based metrics may pose an unfair challenging to \ourmethod. In particular, we found that the graph-structure metrics may not always align with \ourmethod's objective of matching the causal DAG's predicted d-separation statements with the low-order CI statements found in the dataset. There are many cases where the DAGs returned by \ourmethod made few mistakes in aligning the CI statements, yet are still scored badly by the graph-structure-based metrics. We provide concrete evidence in this section. 

\begin{wrapfigure}{r}{0.5\textwidth}
    \centering
    \includegraphics[width=\linewidth]{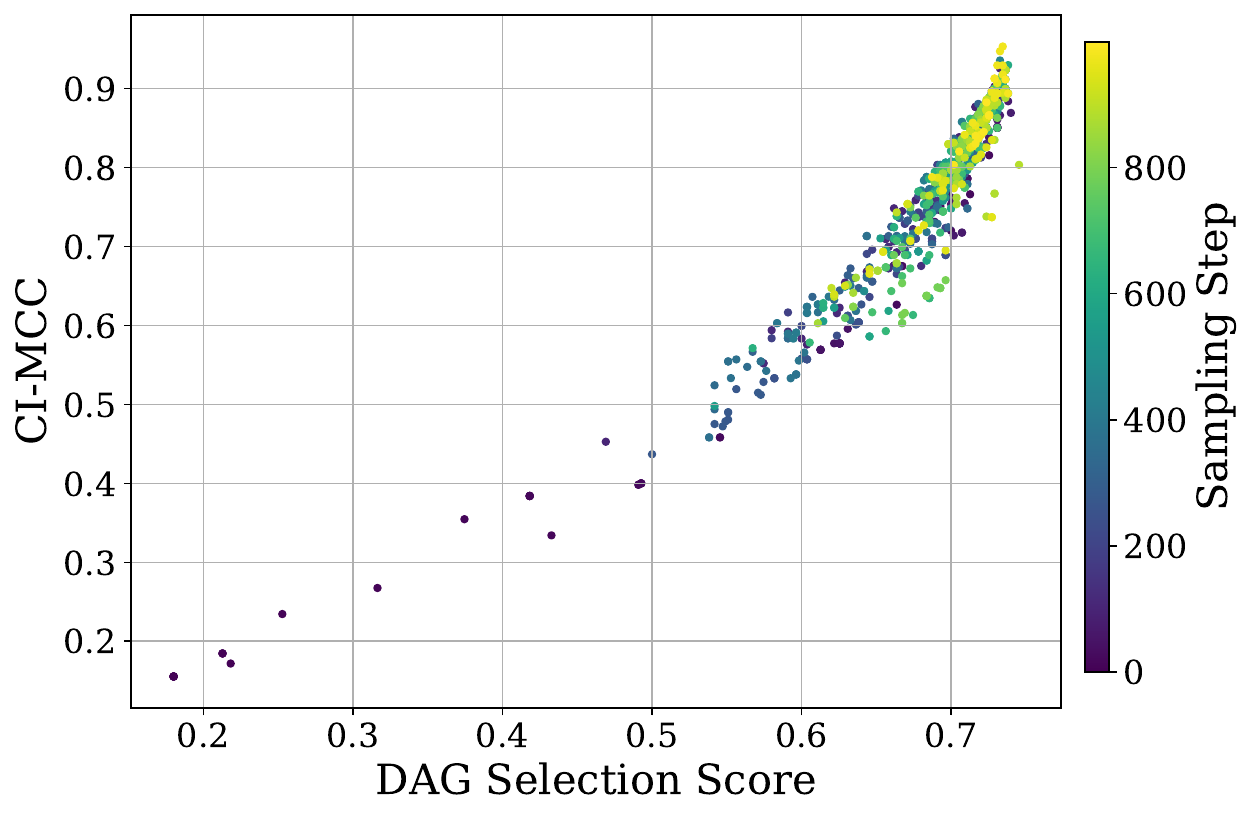}
    \caption{DAG selection score versus CI-MCC. There is a strong positive correlation.}
    \label{fig:dag_selection}
\end{wrapfigure}

Here, we pick one experiment run of \ourmethod on one of the synthetic binary dataset generated by an ER graph with $n=10$ nodes with in total $N=10000$ samples. This particular run performed 1000 sampling steps, thus giving us 1000 sampled DAGs. For each sampled DAG, we compute the DAG selection score (\Cref{appdx:model-detail-dag-selection}) and the CI-MCC metric score, as well as the scores of all graph-structure-based metrics. To recall, the DAG selection score is used by \ourmethod to select the best DAGs out of all that were sampled during training and requires only the input data to compute. It is \emph{not} a test metric. \Cref{fig:metric-analysis-selection-score-vs-graph-score} shows the scatter plots of comparing CI-MCC to graph-structure-based metrics, \Cref{fig:metric-analysis-ci-mcc-vs-graph-score} shows the scatter plots of comparing the DAG selection score to graph-structure-based method, and finally \Cref{fig:dag_selection} shows the relationship between the DAG selection score with CI-MCC.

From \Cref{fig:metric-analysis-selection-score-vs-graph-score}, we can observe that, although the graph-structure-based metrics have generally positive correlation with CI-MCC, there are many samples concentrated in the bottom-right - the region where these sampled DAGs achieves high CI-MCC but low graph-structure-based metric values. Moreover, especially on regions of high CI-MCC values, the samples vary a lot on their graph-structure-based metric values. This implies that there is a misalignment between CI-MCC and the graph-structure-based metric, where performing well on matching model's d-separation with data's low-order CI statements do not translate to similarity on the graph structure.

\Cref{fig:metric-analysis-ci-mcc-vs-graph-score} reveals a similar story. In this case, the x-axis is the score actually used by \ourmethod during training. The similar pattern shows that, even if \ourmethod discovers a high-quality DAG with high DAG selection score, which in turn translates to making minimal false positive and false negative mistakes on the low-order CI statements (\Cref{appdx:model-detail-dag-selection}), it may still have a drastically different structure than the ground-truth DAG or CPDAG, yielding a very low graph-structure-based metric value such as a low CPDAG Arrowhead F1 value. We leave the problem of proposing an alternative DAG selection score that may yield much more positive correlation with graph-structure-based metric to future research.

As a sanity check, \Cref{fig:dag_selection} shows the relationship between \ourmethod's DAG selection score and the CI-MCC metric. Here, we can finally observe a strong correlation, demonstrating the validity of our proposed DAG selection score - it is correctly doing what it is designed to do, i.e., checking alignments of low-order CI statements between the model and input data. 

Finally, throughout all three figures, we visualize the sampling step numbers in color scale, where the darker colors corresponds to early stages in the sampling, and lighter colors corresponds to later stages. We can observe that, in general, there exists a gradual transition from darker color to lighter color when going from bottom-left to top-right. This pattern suggests that \ourmethod is indeed gradually approaching the regions in the weight space that have better and better performance, across all types of metrics.

\begin{figure*}[t]
    \centering
    \begin{subfigure}[b]{0.24\textwidth} 
        \centering
        \includegraphics[height=2.5cm]{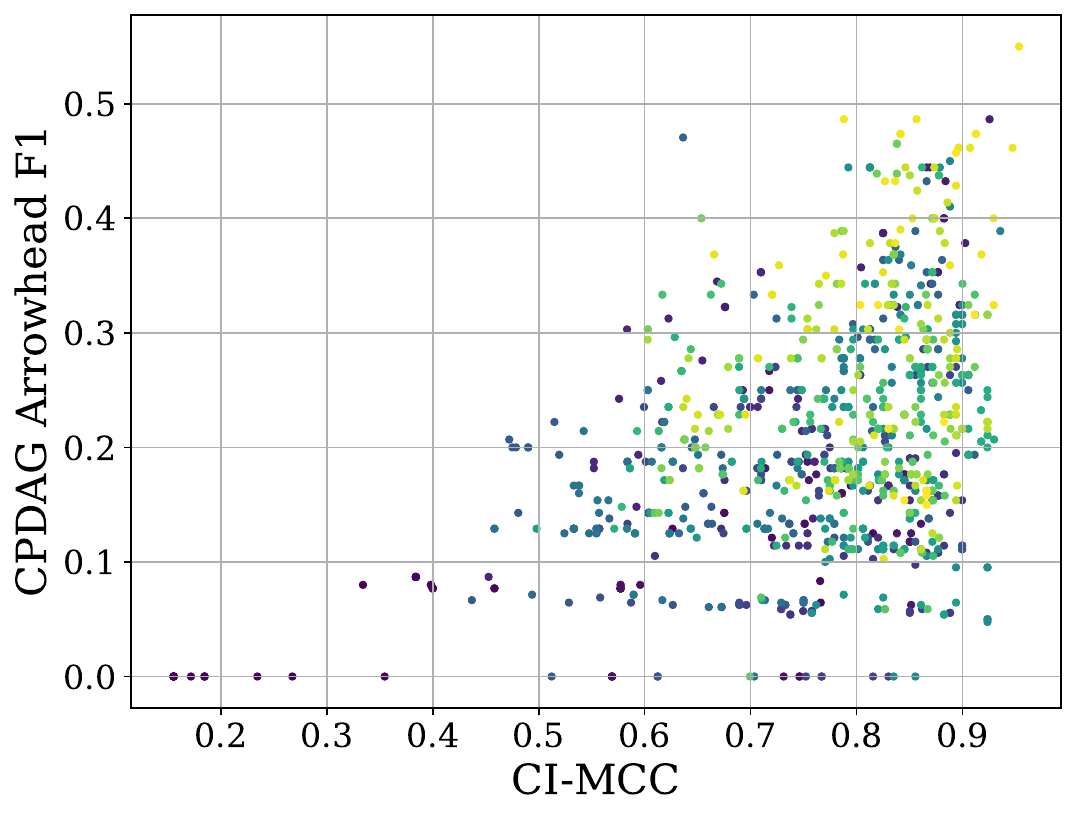}
        \caption{CPDAG Arrowhead F1}
    \end{subfigure}
    \hfill
    \begin{subfigure}[b]{0.24\textwidth} 
        \centering
        \includegraphics[height=2.5cm]{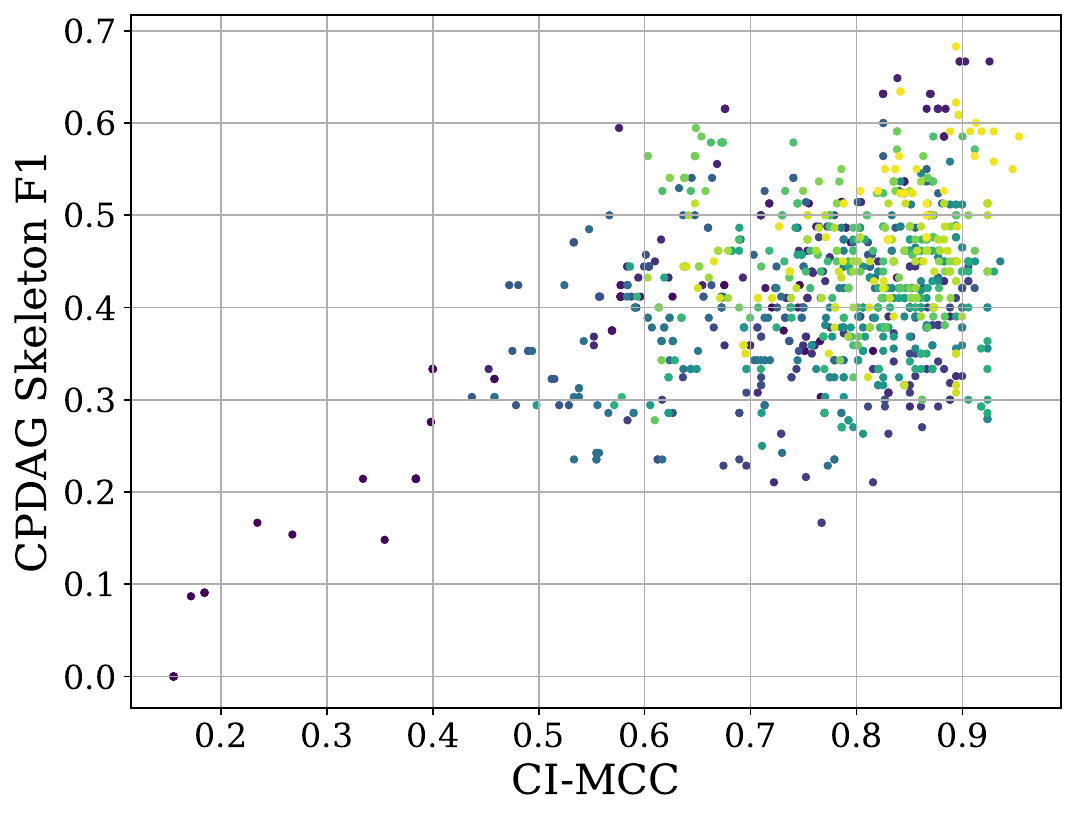}
        \caption{CPDAG Skeleton F1}
    \end{subfigure}
    \hfill
    \begin{subfigure}[b]{0.24\textwidth} 
        \centering
        \includegraphics[height=2.5cm]{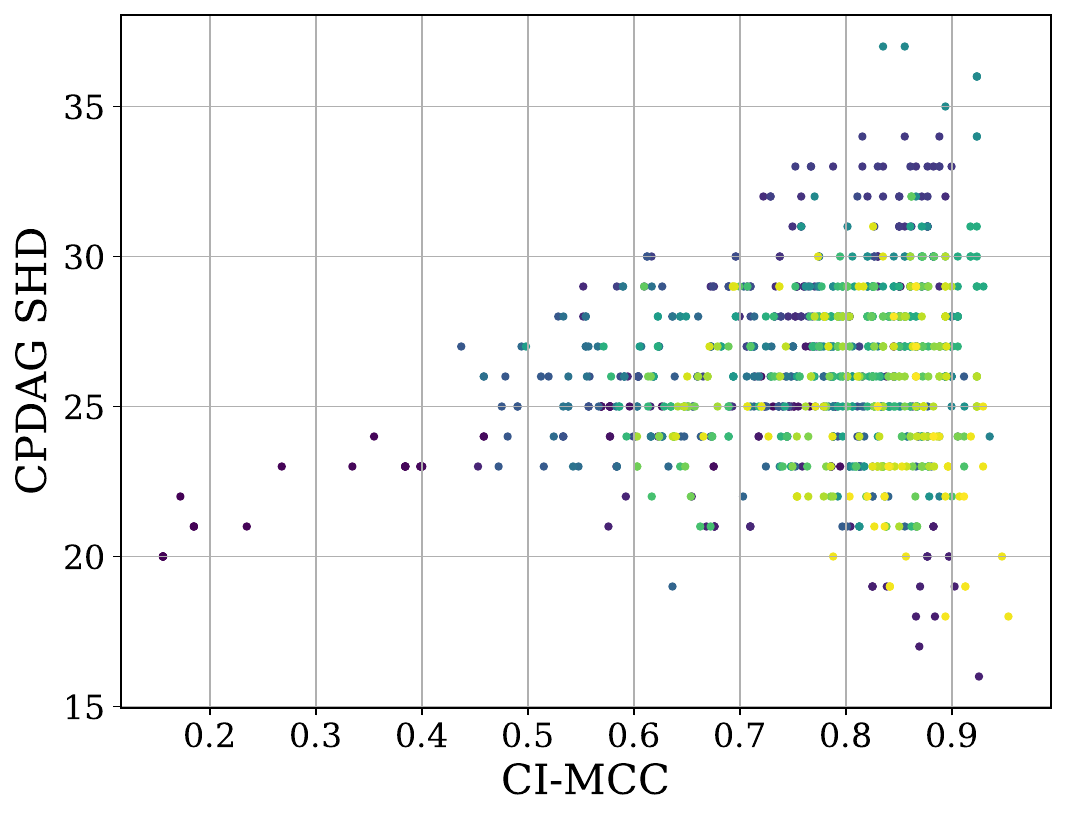}
        \caption{CPDAG SHD}
    \end{subfigure}
    \hfill
    \begin{subfigure}[b]{0.24\textwidth} 
        \centering
        \includegraphics[height=2.5cm]{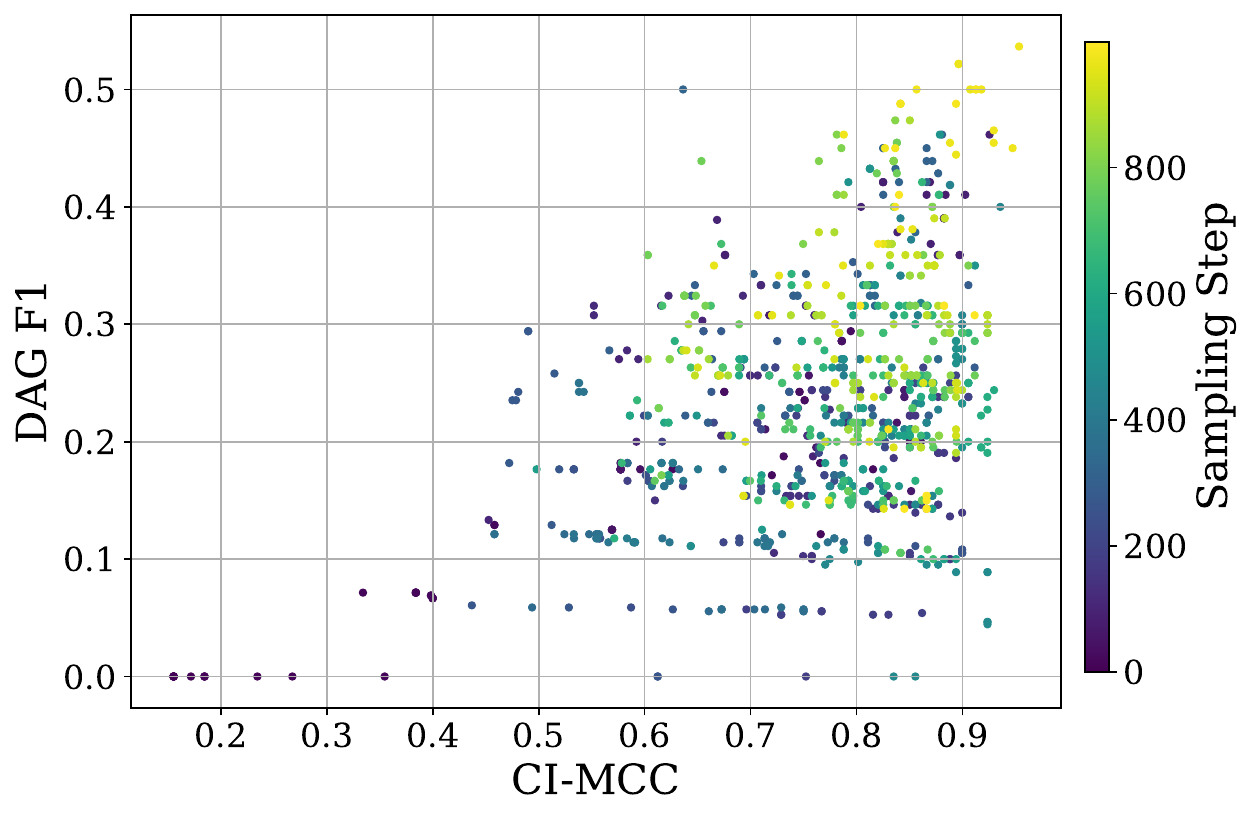}
        \caption{DAG F1}
    \end{subfigure}
    \caption{Scatter plots of CI-MCC metric versus standard graph-structure-based metrics of DAGs sampled by an exemplar \ourmethod run on a $n=10$ synthetic binary data. The graph-structure-based metrics are not strongly positively correlated with CI-MCC. Many DAGs have similarly good CI-MCC values, but vary significantly in their graph-structure-based metric values. Thus, the graph-structure-based metrics pose an unfair challenge to \ourmethod that focuses on aligning low-order CI statements.}
    \label{fig:metric-analysis-ci-mcc-vs-graph-score}
\end{figure*}

\begin{figure*}[t]
    \centering
    \begin{subfigure}[b]{0.24\textwidth} 
        \centering
        \includegraphics[height=2.5cm]{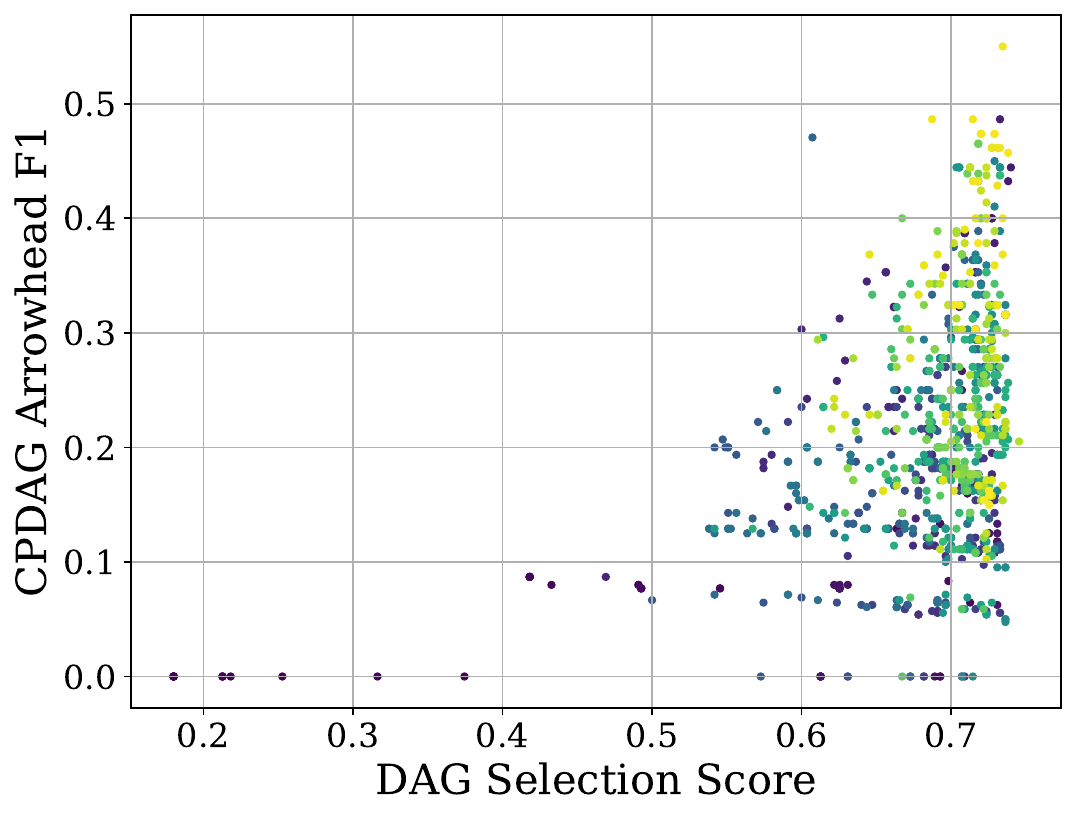}
        \caption{CPDAG Arrowhead F1}
    \end{subfigure}
    \hfill
    \begin{subfigure}[b]{0.24\textwidth} 
        \centering
        \includegraphics[height=2.5cm]{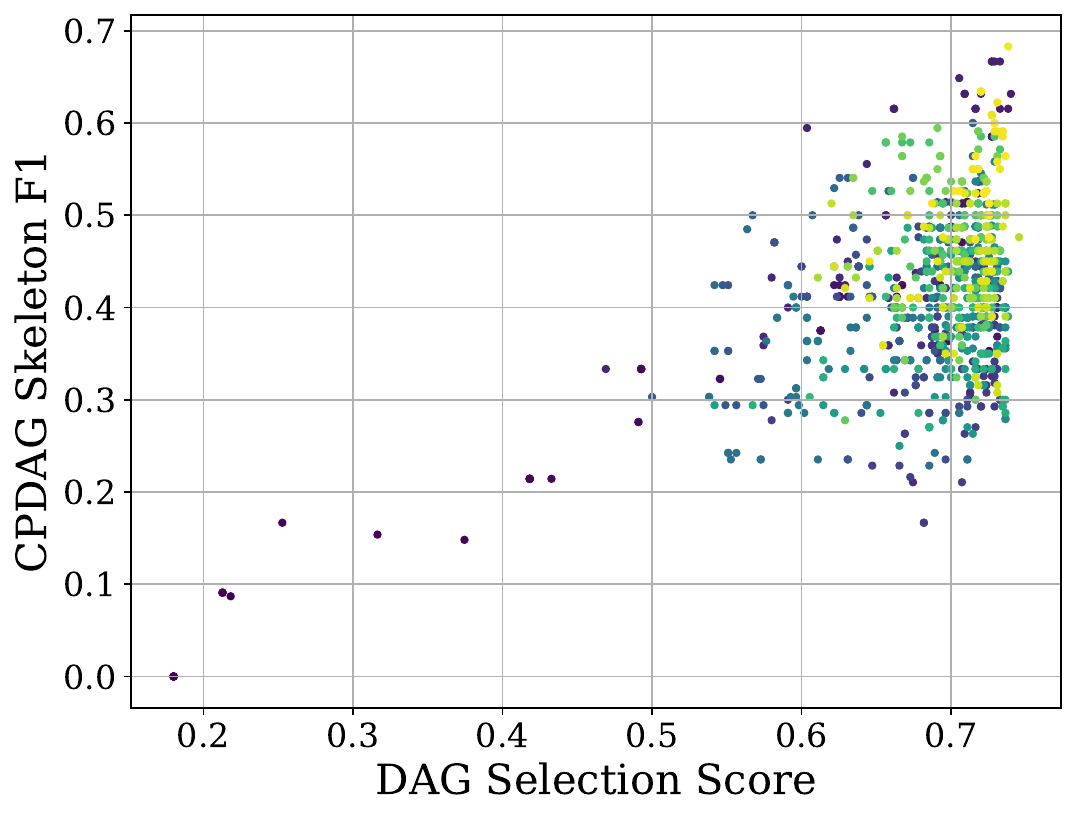}
        \caption{CPDAG Skeleton F1}
    \end{subfigure}
    \hfill
    \begin{subfigure}[b]{0.24\textwidth} 
        \centering
        \includegraphics[height=2.5cm]{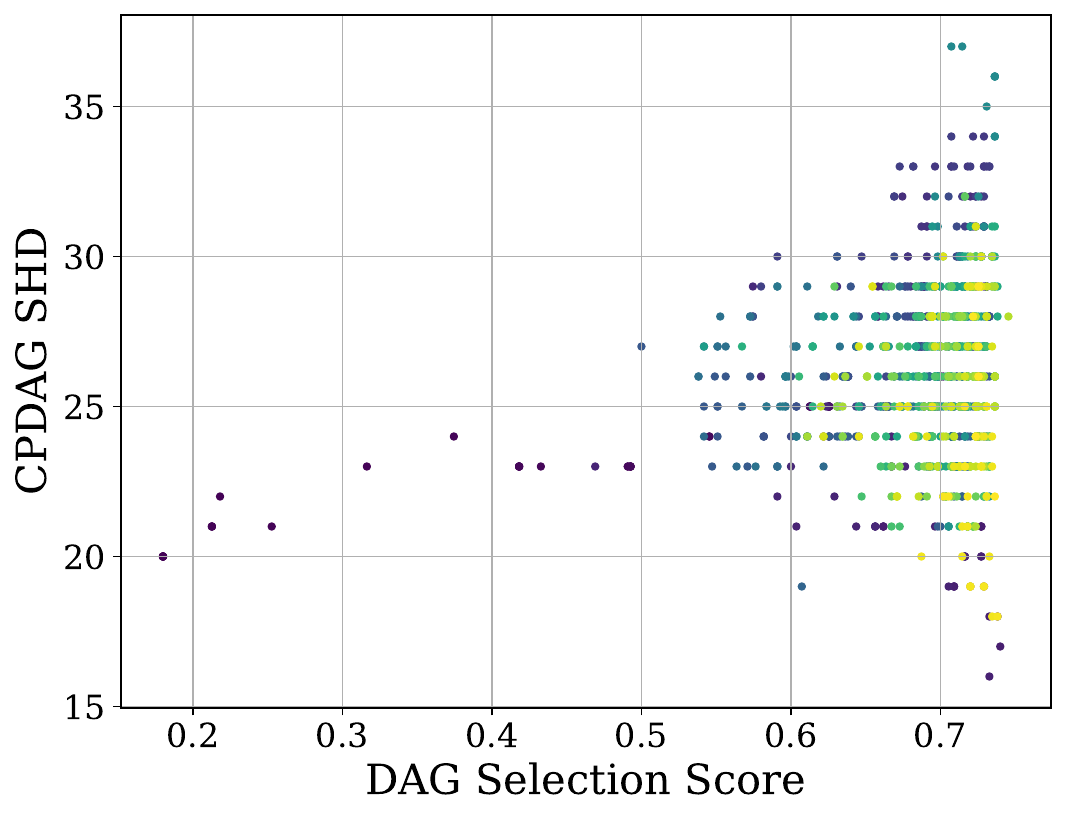}
        \caption{CPDAG SHD}
    \end{subfigure}
    \hfill
    \begin{subfigure}[b]{0.24\textwidth} 
        \centering
        \includegraphics[height=2.5cm]{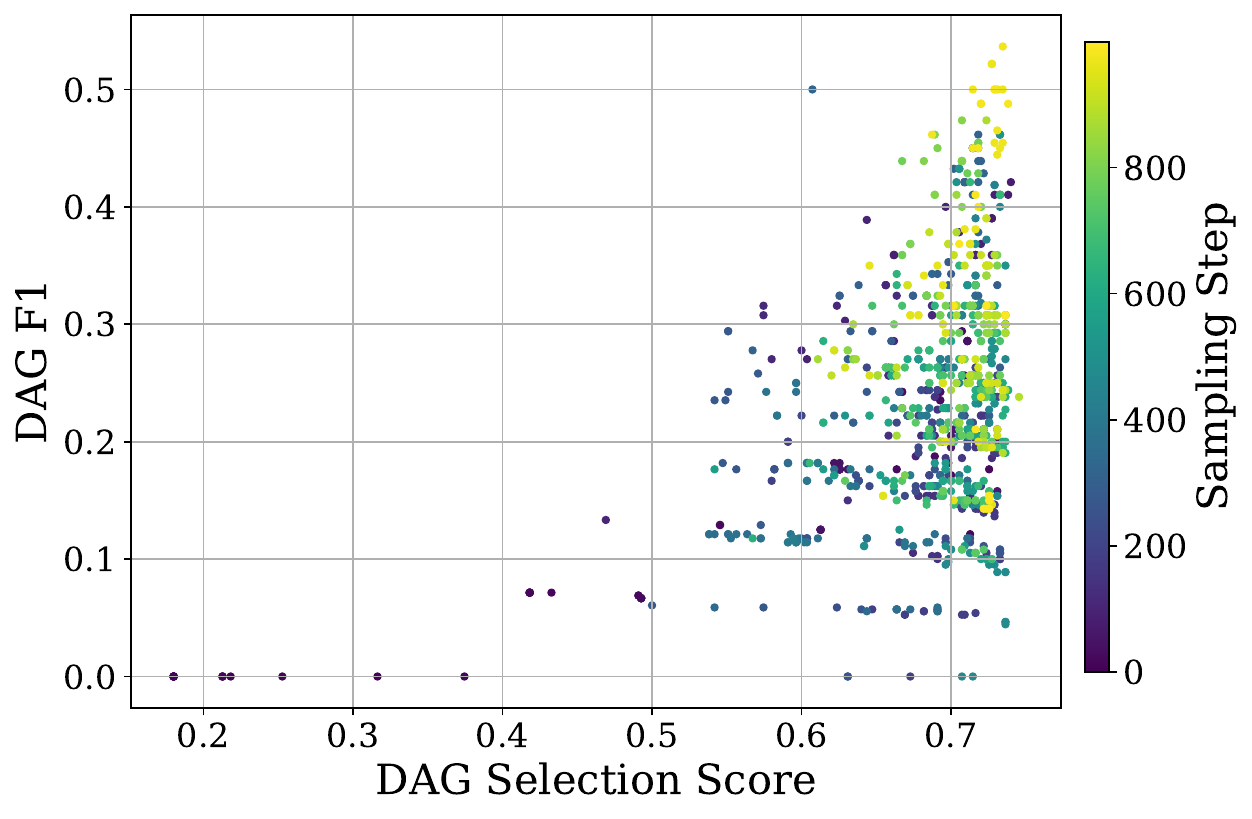}
        \caption{DAG F1}
    \end{subfigure}
    \caption{Scatter plots of \ourmethod's DAG selection score (\Cref{appdx:model-detail-dag-selection}) versus standard graph-structure-based metrics of DAGs sampled by an exemplar \ourmethod run on a $n=10$ synthetic binary data. Similar to the case of CI-MCC, many DAGs with similarly good DAG selection score may have vastly different values on the graph-strcture-based score.}
    \label{fig:metric-analysis-selection-score-vs-graph-score}
\end{figure*}

\subsection{Results on synthetic binary data}  \label{appdx:synthetic-binary-exp-full}

We present the full suite of results on the synthetic binary dataset in \Cref{tab:synth-binary-er-r2,tab:synth-binary-er-r4,tab:synth-binary-sf-r2,tab:synth-binary-sf-r4}.

\begin{table*}[!htbp]
\centering
\caption{Experimental results on Synthetic binary ER, $r=2$}
\label{tab:synth-binary-er-r2}

\begin{subtable}{\textwidth}
\centering
\caption{Synthetic binary ER, $r=2$}
\resizebox{\textwidth}{!}{%

    \caption{Full results on Sachs~\citep{sachs2005causal} and Lucas~\citep{guyon2011causality} real-world dataset.}
    \label{fig:exp-real-world-full}
\end{figure*}

\section{Future Work Discussions}

\subsection{Integration with Score-Based Causal Discovery Methods}
\label{app:integration_notears}

\revision{
Our differentiable d-separation framework provides a novel optimization objective based on conditional independence constraints, which can potentially be integrated with existing score-based methods like NOTEARS and DAGMA as a structured regularization technique. This section briefly discusses potential strategies for integrating our approach with existing score-based methods, particularly NOTEARS \cite{zheng2018dags} and DAGMA \cite{bello2022dagma}. We focus on the key technical challenge: ensuring parameter compatibility between score-based and CI-constraint objectives.

 The challenge is ensuring compatibility between the model parameters $\theta$ used in score-based methods and the probabilistic edge weights $W \in [0,1]^{d \times d}$ required by our framework. For example, for NOTEARS linear models, applying sigmoid activation $\sigma(\theta)$ naturally produces valid probability measures. However, for nonlinear variants where the weighted adjacency matrix is derived from neural network parameters (e.g., $\sqrt{(\theta^{(1)})^T \theta^{(1)}}$ for the first MLP layer), additional normalization is needed to map unnormalized non-negative values to $[0,1]$ while preserving meaningful probabilistic semantics. Once this parameterization is established, the score-based prediction loss (e.g., MSE reconstruction loss) can be added as an additional task in our multi-task optimization framework, combining functional relationship learning with explicit CI constraint enforcement. While this integration direction is promising and could leverage complementary strengths of both paradigms, the normalization challenge for nonlinear models and comprehensive empirical evaluation remain important future work.
}

\subsection{Alternative Conditional Independence Measurements and Tradeoffs}
\label{app:alt_ci_measurements}
\revision{%
Our framework uses p-values from statistical conditional independence (CI) tests as soft measures of independence strength. This section discusses the motivation for CI-based measurements and compares different CI measurement approaches.
}%

\subsubsection{Motivation: Robustness and Interpretability}
\revision{%
\textbf{Robustness to Data Preprocessing.} Many statistical CI tests offer inherent robustness due to their scale-invariance properties. Standard CI tests remain reliable across different preprocessing pipelines: Fisher-z test statistics are invariant under linear transformations since partial correlations $\rho_{XY|Z}$ are preserved when variables are standardized; chi-square tests operate on frequency counts which are unaffected by scaling; kernel-based tests use normalized embeddings providing scale-invariance by design. We empirically verified this: DAGPA maintains CI-MCC scores with low variance of $\approx0.12$ across unnormalized, standardized, min-max normalized, and robust-scaled versions of the same dataset ($n=100, d=10$ ER-2 synthetic data), demonstrating stability that is valuable when data sources have heterogeneous scales or different preprocessing conventions.

\textbf{Interpretability via Direct Hypothesis Testing.} CI-based measurements provide direct interpretability through testable hypotheses. Each structural decision corresponds to specific CI tests with p-values, enabling transparent explanations. For example, when removing edge $X \to Y$, we can justify: ``Edge removed because unconditional test shows $X \dep Y$ with p-value 0.001 and conditional test shows $X \indep Y | Z$ with p-value 0.82, indicating $X$ influences $Y$ likely through $Z$ but does not have direct causal link to $Y$.'' Domain experts can independently verify these claims by examining raw data, running alternative tests, or checking whether distributional assumptions hold. This transparency is particularly valuable in high-stakes applications (medical diagnosis, policy making) where algorithmic decisions require human oversight and regulatory approval.
}%

\subsubsection{Overview of CI Measurement Methods}
\revision{%
Different CI tests present tradeoffs across multiple dimensions. Table~\ref{tab:ci_comparison} summarizes key characteristics:

\begin{table}[h]
\centering
\small
\caption{Comparison of CI measurement methods. ``Small-$n$'' indicates performance with $n < 200$.}
\label{tab:ci_comparison}
\begin{tabular}{lccccc}
\toprule
Method & Assumptions & Complexity & Small-$n$ & Nonlinearity & p-value \\
\midrule
Fisher-z \cite{fisher1915frequency} & Gaussian, linear & $O(n)$ & Excellent & Poor & Native \\
Chi-square & Categorical & $O(n)$ & Good & Arbitrary & Native \\
KCIT \cite{zhang2012kernel} & Nonparametric & $O(n^2)$ & Poor & Excellent & Approximate \\
RCIT \cite{strobl2019approximate} & Nonparametric & $O(n \log n)$ & Moderate & Good & Approximate \\
CMI (k-NN) & Nonparametric & $O(n \log n)$ & Poor & Good & Requires transform \\
\bottomrule
\end{tabular}
\end{table}

\textbf{Fisher-z Test:} Tests whether partial correlation $\rho_{XY|Z}$ equals zero using transformation $z = \frac{1}{2}\sqrt{n - |Z| - 3} \log\frac{1 + \hat{\rho}}{1 - \hat{\rho}} \sim \mathcal{N}(0, 1)$. Strengths include excellent sample efficiency (reliable with $n \geq 50$ for $|Z| \leq 1$), $O(n)$ computational cost, and native p-values. Limitations: designed for Gaussian data and primarily detects linear relationships. Our default choice for continuous data in low-sample regime.

\textbf{Chi-square Test:} Tests independence via contingency table using $\chi^2 = \sum_{i,j} \frac{(O_{ij} - E_{ij})^2}{E_{ij}}$. Strengths include no distributional assumptions, detection of arbitrary associations, and good small-sample performance. Limitations: only applicable to categorical variables, requires sufficient cell counts ($E_{ij} \geq 5$), and exponential complexity with $|Z|$. Our default for binary/categorical data.

\textbf{Kernel CI Test (KCIT):} Tests independence via kernel embeddings using Hilbert-Schmidt Independence Criterion. Strengths include detecting arbitrary nonlinear dependencies and no parametric assumptions. Limitations: $O(n^2)$ complexity, requires $n > 500$ for reliable results, and kernel bandwidth selection affects performance. Recommended for large datasets ($n > 1000$) with suspected nonlinear relationships.

\textbf{Randomized CI Test (RCIT):} Approximates KCIT using random Fourier features, reducing complexity to $O(n \log n)$. Provides middle ground between Fisher-z (fast, linear) and KCIT (slow, nonlinear) for moderate samples ($500 < n < 5000$).

\textbf{Conditional Mutual Information (CMI):} Measures $I(X; Y | Z) = \mathbb{E}\left[\log \frac{p(X,Y|Z)}{p(X|Z)p(Y|Z)}\right]$ via k-NN or kernel density estimation. Provides information-theoretic interpretation but lacks native p-values (requires transformation like $p = \exp(-\beta \cdot \hat{I})$) and needs $n > 1000$ for reliable density estimation.

\textbf{Tradeoffs and Selection Guidance.} For causal discovery with $d$ variables, total CI testing requires $O(d^2)$ order-0 tests and $O(d^3)$ order-1 tests. With $d=50, n=1000$: Fisher-z completes in $\sim$30 seconds, RCIT in $\sim$5 minutes, KCIT in $\sim$45 minutes. Sample efficiency also varies: Fisher-z achieves high power with $n<200$ for linear relationships, while kernel methods require $n>500$ but eventually match or exceed Fisher-z for nonlinear cases. Our framework is measurement-agnostic—any function producing $[0,1]$ scores can be substituted via our API:
}%

\begin{verbatim}
model = DAGPA(ci_test='fisherz')      # Default
model = DAGPA(ci_test='kcit')         # For nonlinear data
model = DAGPA(ci_test=MyCustomTest()) # Custom implementation
\end{verbatim}

\revision{%
We recommend Fisher-z or chi-square for most applications due to sample efficiency and computational scalability, with kernel methods reserved for large datasets with confirmed nonlinear relationships. Practitioners should conduct sensitivity analysis across multiple tests when possible, as edges appearing consistently provide higher confidence regardless of specific test assumptions.
}%

\section{Licenses}  \label{appdx:licenses}

In this work, we evaluated our method on two publicly available causal discovery benchmark datasets:

\textbf{Sachs Dataset}: The Sachs dataset contains simultaneous measurements of 11 phosphorylated proteins and phospholipids derived from thousands of individual primary immune system cells, subjected to both general and specific molecular interventions. This dataset was originally published by Sachs et al. (2005) and is widely used as a benchmark in causal discovery research. The dataset is publicly available through multiple repositories including bnlearn~\citep{bnlearn}, other causal discovery toolboxes. 

\textbf{Availability}: The dataset is publicly accessible for research purposes through various causal discovery software packages and repositories.

\textbf{LUCAS Dataset}: The LUCAS (LUng CAncer Simple set) dataset~\citep{guyon2011causality} is a synthetic benchmark dataset consisting of 12 binary variables and 2000 instances, representing 12 different causal relationships in a medical diagnosis problem for identifying patients with lung cancer. This dataset was created as part of the Causality Workbench project to provide standardized benchmarks for testing causal discovery algorithms.

\textbf{Availability}: The dataset is publicly available for research and educational purposes through the Causality Workbench repository.

{Usage Declaration}: Both datasets were used in accordance with their respective terms of use for academic research purposes. No additional permissions were required for their use in this study.

\section{Societal Impact Statement}    \label{appdx:societal-impact}

Our differentiable d-separation framework for causal discovery has potential positive impacts in healthcare (identifying causal factors in disease), public policy (evaluating intervention effectiveness), scientific discovery (understanding complex systems), and algorithmic fairness (distinguishing causation from correlation in decision systems). However, several risks must be acknowledged: (1) incorrect causal discoveries could lead to harmful interventions if implemented without domain expert validation; (2) causal discovery in sensitive domains may raise privacy concerns, necessitating differential privacy techniques; (3) when applied to historically biased datasets, discovered relationships might reflect and perpetuate these biases rather than ground truth; and (4) computational requirements might limit accessibility to well-resourced institutions. We recommend multiple mitigations: returning diverse candidate structures rather than single models, requiring expert validation before implementation, implementing privacy-preserving techniques with sensitive data, examining discovered relationships for bias, and developing more efficient implementations to improve accessibility. Causal discovery tools require responsible application and domain expertise, especially in high-stakes domains where incorrect inferences could lead to harmful consequences.

\end{bibunit}


\newpage
\section*{NeurIPS Paper Checklist}

\begin{enumerate}

\item {\bf Claims}
    \item[] Question: Do the main claims made in the abstract and introduction accurately reflect the paper's contributions and scope?
    \item[] Answer: \answerYes{} 
    \item[] Justification: {
        The abstract and introduction claim that we contribute a novel framework for causal discovery that bridges constraint-based and continuous-optimization score-based methods with promising empirical results. These claims accurately reflect our work's scope and contributions as evidenced in: (1) Section~\ref{sec:dsep-framework}, where we develop the novel differentiable d-separation framework with theoretical guarantees and percolation-based probabilistic interpretations; (2) Section~\ref{sec:our-method}, where we demonstrate one practical instantiation of this framework into the model we named \ourmethod; and (3) Section~\ref{sec:experiments}, where we empirically validate that our approach achieves competitive performance compared to established methods across multiple datasets and settings. 
    }
    \item[] Guidelines:
    \begin{itemize}
        \item The answer NA means that the abstract and introduction do not include the claims made in the paper.
        \item The abstract and/or introduction should clearly state the claims made, including the contributions made in the paper and important assumptions and limitations. A No or NA answer to this question will not be perceived well by the reviewers. 
        \item The claims made should match theoretical and experimental results, and reflect how much the results can be expected to generalize to other settings. 
        \item It is fine to include aspirational goals as motivation as long as it is clear that these goals are not attained by the paper. 
    \end{itemize}

\item {\bf Limitations}
    \item[] Question: Does the paper discuss the limitations of the work performed by the authors?
    \item[] Answer: \answerYes{} 
    \item[] Justification: {
        We provide a dedicated Limitations paragraph in \Cref{sec:limitations} that thoroughly discusses multiple aspects of our work's constraints and outlines specific future research directions to address them. Throughout the paper, we are transparent about the scope of our empirical validation and the provisional nature of our current implementation compared to the theoretical framework's potential.
    }
    \item[] Guidelines:
    \begin{itemize}
        \item The answer NA means that the paper has no limitation while the answer No means that the paper has limitations, but those are not discussed in the paper. 
        \item The authors are encouraged to create a separate "Limitations" section in their paper.
        \item The paper should point out any strong assumptions and how robust the results are to violations of these assumptions (e.g., independence assumptions, noiseless settings, model well-specification, asymptotic approximations only holding locally). The authors should reflect on how these assumptions might be violated in practice and what the implications would be.
        \item The authors should reflect on the scope of the claims made, e.g., if the approach was only tested on a few datasets or with a few runs. In general, empirical results often depend on implicit assumptions, which should be articulated.
        \item The authors should reflect on the factors that influence the performance of the approach. For example, a facial recognition algorithm may perform poorly when image resolution is low or images are taken in low lighting. Or a speech-to-text system might not be used reliably to provide closed captions for online lectures because it fails to handle technical jargon.
        \item The authors should discuss the computational efficiency of the proposed algorithms and how they scale with dataset size.
        \item If applicable, the authors should discuss possible limitations of their approach to address problems of privacy and fairness.
        \item While the authors might fear that complete honesty about limitations might be used by reviewers as grounds for rejection, a worse outcome might be that reviewers discover limitations that aren't acknowledged in the paper. The authors should use their best judgment and recognize that individual actions in favor of transparency play an important role in developing norms that preserve the integrity of the community. Reviewers will be specifically instructed to not penalize honesty concerning limitations.
    \end{itemize}

\item {\bf Theory assumptions and proofs}
    \item[] Question: For each theoretical result, does the paper provide the full set of assumptions and a complete (and correct) proof?
    \item[] Answer: \answerYes{} 
    \item[] Justification: {
        In our paper, all assumptions are explicitly stated before each theorem and lemma, and the complete proofs are provided in Appendix~\ref{appdx:proofs}, with every item properly numbered and consistently cross-referenced throughout the paper. In addition, we precede each theoretical result with intuitive explanations connecting the formal statements to the broader framework, and for the most important results, such as Theorem~\ref{thm:fol-d-sep-correctness}, we provide proof sketches in the main text that highlight the critical steps and insights.
    }
    \item[] Guidelines:
    \begin{itemize}
        \item The answer NA means that the paper does not include theoretical results. 
        \item All the theorems, formulas, and proofs in the paper should be numbered and cross-referenced.
        \item All assumptions should be clearly stated or referenced in the statement of any theorems.
        \item The proofs can either appear in the main paper or the supplemental material, but if they appear in the supplemental material, the authors are encouraged to provide a short proof sketch to provide intuition. 
        \item Inversely, any informal proof provided in the core of the paper should be complemented by formal proofs provided in appendix or supplemental material.
        \item Theorems and Lemmas that the proof relies upon should be properly referenced. 
    \end{itemize}

    \item {\bf Experimental result reproducibility}
    \item[] Question: Does the paper fully disclose all the information needed to reproduce the main experimental results of the paper to the extent that it affects the main claims and/or conclusions of the paper (regardless of whether the code and data are provided or not)?
    \item[] Answer: \answerYes{} 
    \item[] Justification: {
        We provide complete reproducibility through detailed algorithm pseudocode for \ourmethod (\Cref{appdx:full-algorithm}), code and data release (\Cref{appdx:code-release}), dataset creation and preprocessing procedures (\Cref{appdx:dataset-creation}), and comprehensive hyperparameter specifications for both our method (\Cref{appdx:our-method-details}) and all baselines (\Cref{appdx:baseline-details}). Additionally, we document the computational resources used (\Cref{appdx:compute-resource}) and fully explain our evaluation metrics (\Cref{appdx:evaluation-metric}), ensuring all aspects of our experimental pipeline can be independently reproduced.
    
    }
    \item[] Guidelines:
    \begin{itemize}
        \item The answer NA means that the paper does not include experiments.
        \item If the paper includes experiments, a No answer to this question will not be perceived well by the reviewers: Making the paper reproducible is important, regardless of whether the code and data are provided or not.
        \item If the contribution is a dataset and/or model, the authors should describe the steps taken to make their results reproducible or verifiable. 
        \item Depending on the contribution, reproducibility can be accomplished in various ways. For example, if the contribution is a novel architecture, describing the architecture fully might suffice, or if the contribution is a specific model and empirical evaluation, it may be necessary to either make it possible for others to replicate the model with the same dataset, or provide access to the model. In general. releasing code and data is often one good way to accomplish this, but reproducibility can also be provided via detailed instructions for how to replicate the results, access to a hosted model (e.g., in the case of a large language model), releasing of a model checkpoint, or other means that are appropriate to the research performed.
        \item While NeurIPS does not require releasing code, the conference does require all submissions to provide some reasonable avenue for reproducibility, which may depend on the nature of the contribution. For example
        \begin{enumerate}
            \item If the contribution is primarily a new algorithm, the paper should make it clear how to reproduce that algorithm.
            \item If the contribution is primarily a new model architecture, the paper should describe the architecture clearly and fully.
            \item If the contribution is a new model (e.g., a large language model), then there should either be a way to access this model for reproducing the results or a way to reproduce the model (e.g., with an open-source dataset or instructions for how to construct the dataset).
            \item We recognize that reproducibility may be tricky in some cases, in which case authors are welcome to describe the particular way they provide for reproducibility. In the case of closed-source models, it may be that access to the model is limited in some way (e.g., to registered users), but it should be possible for other researchers to have some path to reproducing or verifying the results.
        \end{enumerate}
    \end{itemize}

\item {\bf Open access to data and code}
    \item[] Question: Does the paper provide open access to the data and code, with sufficient instructions to faithfully reproduce the main experimental results, as described in supplemental material?
    \item[] Answer: \answerYes{} 
    \item[] Justification: {
        We provide complete code and data access through an anonymous GitHub repository (linked in Appendix~\ref{appdx:code-release}) that includes all implementation files, datasets (both synthetic generators and real-world data), and evaluation scripts necessary to reproduce our results. The repository contains detailed documentation with exact environment setup instructions (via Conda environment files), step-by-step commands to run all experiments, data preprocessing pipelines, and scripts to generate all figures and tables presented in the paper.
    }
    \item[] Guidelines:
    \begin{itemize}
        \item The answer NA means that paper does not include experiments requiring code.
        \item Please see the NeurIPS code and data submission guidelines (\url{https://nips.cc/public/guides/CodeSubmissionPolicy}) for more details.
        \item While we encourage the release of code and data, we understand that this might not be possible, so “No” is an acceptable answer. Papers cannot be rejected simply for not including code, unless this is central to the contribution (e.g., for a new open-source benchmark).
        \item The instructions should contain the exact command and environment needed to run to reproduce the results. See the NeurIPS code and data submission guidelines (\url{https://nips.cc/public/guides/CodeSubmissionPolicy}) for more details.
        \item The authors should provide instructions on data access and preparation, including how to access the raw data, preprocessed data, intermediate data, and generated data, etc.
        \item The authors should provide scripts to reproduce all experimental results for the new proposed method and baselines. If only a subset of experiments are reproducible, they should state which ones are omitted from the script and why.
        \item At submission time, to preserve anonymity, the authors should release anonymized versions (if applicable).
        \item Providing as much information as possible in supplemental material (appended to the paper) is recommended, but including URLs to data and code is permitted.
    \end{itemize}

\item {\bf Experimental setting/details}
    \item[] Question: Does the paper specify all the training and test details (e.g., data splits, hyperparameters, how they were chosen, type of optimizer, etc.) necessary to understand the results?
    \item[] Answer: \answerYes{} 
    \item[] Justification: {
        The main text (\Cref{sec:experiments}) provides essential experimental details including dataset characteristics, evaluation metrics, baselines, and the overall experimental design, while \Cref{appdx:experiment-details} contains comprehensive information on hyperparameters, model selection criteria, data creation and preprocessing, and computational resources. We explicitly describe our parameter selection process and provide justification for key design choices to ensure results can be fully understood and contextualized.
    }
    \item[] Guidelines:
    \begin{itemize}
        \item The answer NA means that the paper does not include experiments.
        \item The experimental setting should be presented in the core of the paper to a level of detail that is necessary to appreciate the results and make sense of them.
        \item The full details can be provided either with the code, in appendix, or as supplemental material.
    \end{itemize}

\item {\bf Experiment statistical significance}
    \item[] Question: Does the paper report error bars suitably and correctly defined or other appropriate information about the statistical significance of the experiments?
    \item[] Answer: \answerYes{} 
    \item[] Justification: {
        Our primary results are presented as empirical cumulative distribution functions (CDFs) over multiple (10) independent dataset instances per configuration, fully displaying performance variability across different data realizations and allowing for direct statistical comparison between methods. For tabular results, we report mean values with standard deviation error bars (1-sigma), clearly labeled as such, and calculated using standard statistical formulas across the independent trials. The sources of variability (different random DAG structures and data samples) are explicitly described in \Cref{sec:experiments}, and all statistical comparisons are appropriately referenced in the text.
    }
    \item[] Guidelines:
    \begin{itemize}
        \item The answer NA means that the paper does not include experiments.
        \item The authors should answer "Yes" if the results are accompanied by error bars, confidence intervals, or statistical significance tests, at least for the experiments that support the main claims of the paper.
        \item The factors of variability that the error bars are capturing should be clearly stated (for example, train/test split, initialization, random drawing of some parameter, or overall run with given experimental conditions).
        \item The method for calculating the error bars should be explained (closed form formula, call to a library function, bootstrap, etc.)
        \item The assumptions made should be given (e.g., Normally distributed errors).
        \item It should be clear whether the error bar is the standard deviation or the standard error of the mean.
        \item It is OK to report 1-sigma error bars, but one should state it. The authors should preferably report a 2-sigma error bar than state that they have a 96\% CI, if the hypothesis of Normality of errors is not verified.
        \item For asymmetric distributions, the authors should be careful not to show in tables or figures symmetric error bars that would yield results that are out of range (e.g. negative error rates).
        \item If error bars are reported in tables or plots, The authors should explain in the text how they were calculated and reference the corresponding figures or tables in the text.
    \end{itemize}

\item {\bf Experiments compute resources}
    \item[] Question: For each experiment, does the paper provide sufficient information on the computer resources (type of compute workers, memory, time of execution) needed to reproduce the experiments?
    \item[] Answer: \answerYes{} 
    \item[] Justification: {
        Yes. We detail the computation resources used for each experiment settings in \Cref{appdx:compute-resource}.
    }
    \item[] Guidelines:
    \begin{itemize}
        \item The answer NA means that the paper does not include experiments.
        \item The paper should indicate the type of compute workers CPU or GPU, internal cluster, or cloud provider, including relevant memory and storage.
        \item The paper should provide the amount of compute required for each of the individual experimental runs as well as estimate the total compute. 
        \item The paper should disclose whether the full research project required more compute than the experiments reported in the paper (e.g., preliminary or failed experiments that didn't make it into the paper). 
    \end{itemize}
    
\item {\bf Code of ethics}
    \item[] Question: Does the research conducted in the paper conform, in every respect, with the NeurIPS Code of Ethics \url{https://neurips.cc/public/EthicsGuidelines}?
    \item[] Answer: \answerYes{} 
    \item[] Justification: {
        Yes. This research fully conforms to NeurIPS Code of Ethics.
    }
    \item[] Guidelines:
    \begin{itemize}
        \item The answer NA means that the authors have not reviewed the NeurIPS Code of Ethics.
        \item If the authors answer No, they should explain the special circumstances that require a deviation from the Code of Ethics.
        \item The authors should make sure to preserve anonymity (e.g., if there is a special consideration due to laws or regulations in their jurisdiction).
    \end{itemize}

\item {\bf Broader impacts}
    \item[] Question: Does the paper discuss both potential positive societal impacts and negative societal impacts of the work performed?
    \item[] Answer: \answerYes{} 
    \item[] Justification: {
        We provide a comprehensive Broader Impact Statement in \Cref{appdx:societal-impact} that discusses both potential positive impacts (in healthcare, public policy, scientific discovery, and algorithmic fairness) and potential negative impacts (misplaced trust in discovered causal relationships, privacy concerns, potential for reinforcing biases, and accessibility challenges). For each potential negative impact, we also suggest specific mitigation strategies that practitioners can implement when applying our method.
    }
    \item[] Guidelines:
    \begin{itemize}
        \item The answer NA means that there is no societal impact of the work performed.
        \item If the authors answer NA or No, they should explain why their work has no societal impact or why the paper does not address societal impact.
        \item Examples of negative societal impacts include potential malicious or unintended uses (e.g., disinformation, generating fake profiles, surveillance), fairness considerations (e.g., deployment of technologies that could make decisions that unfairly impact specific groups), privacy considerations, and security considerations.
        \item The conference expects that many papers will be foundational research and not tied to particular applications, let alone deployments. However, if there is a direct path to any negative applications, the authors should point it out. For example, it is legitimate to point out that an improvement in the quality of generative models could be used to generate deepfakes for disinformation. On the other hand, it is not needed to point out that a generic algorithm for optimizing neural networks could enable people to train models that generate Deepfakes faster.
        \item The authors should consider possible harms that could arise when the technology is being used as intended and functioning correctly, harms that could arise when the technology is being used as intended but gives incorrect results, and harms following from (intentional or unintentional) misuse of the technology.
        \item If there are negative societal impacts, the authors could also discuss possible mitigation strategies (e.g., gated release of models, providing defenses in addition to attacks, mechanisms for monitoring misuse, mechanisms to monitor how a system learns from feedback over time, improving the efficiency and accessibility of ML).
    \end{itemize}
    
\item {\bf Safeguards}
    \item[] Question: Does the paper describe safeguards that have been put in place for responsible release of data or models that have a high risk for misuse (e.g., pretrained language models, image generators, or scraped datasets)?
    \item[] Answer: \answerNA{} 
    \item[] Justification: {
        Our work presents a methodological framework for causal discovery that does not pose the high-risk misuse concerns typically associated with generative models or scraped datasets. We only use synthetic datasets (generated with provided code) and well-established, properly cited benchmark datasets (e.g. Sachs) from trusted scientific sources. Our method does not learn representations that could be misused for generating harmful content, creating fake profiles, or similar high-risk applications that would necessitate special safeguards beyond standard research best practices.
    }
    \item[] Guidelines:
    \begin{itemize}
        \item The answer NA means that the paper poses no such risks.
        \item Released models that have a high risk for misuse or dual-use should be released with necessary safeguards to allow for controlled use of the model, for example by requiring that users adhere to usage guidelines or restrictions to access the model or implementing safety filters. 
        \item Datasets that have been scraped from the Internet could pose safety risks. The authors should describe how they avoided releasing unsafe images.
        \item We recognize that providing effective safeguards is challenging, and many papers do not require this, but we encourage authors to take this into account and make a best faith effort.
    \end{itemize}

\item {\bf Licenses for existing assets}
    \item[] Question: Are the creators or original owners of assets (e.g., code, data, models), used in the paper, properly credited and are the license and terms of use explicitly mentioned and properly respected?
    \item[] Answer: \answerYes{} 
    \item[] Justification: {
        We properly cite all original papers for datasets and code libraries used in our work. In ~\Cref{appdx:licenses}, we provide detailed license information for the Sachs dataset (the only external real-world dataset used) and all third-party libraries/packages employed in our implementation, including version numbers and URLs to their repositories. For synthetic datasets, we specify that they are generated by our code which is released under an MIT license.
    }
    \item[] Guidelines:
    \begin{itemize}
        \item The answer NA means that the paper does not use existing assets.
        \item The authors should cite the original paper that produced the code package or dataset.
        \item The authors should state which version of the asset is used and, if possible, include a URL.
        \item The name of the license (e.g., CC-BY 4.0) should be included for each asset.
        \item For scraped data from a particular source (e.g., website), the copyright and terms of service of that source should be provided.
        \item If assets are released, the license, copyright information, and terms of use in the package should be provided. For popular datasets, \url{paperswithcode.com/datasets} has curated licenses for some datasets. Their licensing guide can help determine the license of a dataset.
        \item For existing datasets that are re-packaged, both the original license and the license of the derived asset (if it has changed) should be provided.
        \item If this information is not available online, the authors are encouraged to reach out to the asset's creators.
    \end{itemize}

\item {\bf New assets}
    \item[] Question: Are new assets introduced in the paper well documented and is the documentation provided alongside the assets?
    \item[] Answer: \answerYes{} 
    \item[] Justification: {
        We release our implementation of \ourmethod via an anonymized GitHub repository (linked in Appendix~\ref{appdx:code-release}) with comprehensive documentation including: installation instructions, usage examples, API documentation, parameter descriptions, training/optimization/sampling procedures, and known limitations. The repository includes a structured README file, detailed code comments, and a clear MIT license statement. 
    }
    \item[] Guidelines:
    \begin{itemize}
        \item The answer NA means that the paper does not release new assets.
        \item Researchers should communicate the details of the dataset/code/model as part of their submissions via structured templates. This includes details about training, license, limitations, etc. 
        \item The paper should discuss whether and how consent was obtained from people whose asset is used.
        \item At submission time, remember to anonymize your assets (if applicable). You can either create an anonymized URL or include an anonymized zip file.
    \end{itemize}

\item {\bf Crowdsourcing and research with human subjects}
    \item[] Question: For crowdsourcing experiments and research with human subjects, does the paper include the full text of instructions given to participants and screenshots, if applicable, as well as details about compensation (if any)? 
    \item[] Answer: \answerNA{} 
    \item[] Justification: {
        This work does not involve crowdsourcing nor research with human subjects.
    }
    \item[] Guidelines:
    \begin{itemize}
        \item The answer NA means that the paper does not involve crowdsourcing nor research with human subjects.
        \item Including this information in the supplemental material is fine, but if the main contribution of the paper involves human subjects, then as much detail as possible should be included in the main paper. 
        \item According to the NeurIPS Code of Ethics, workers involved in data collection, curation, or other labor should be paid at least the minimum wage in the country of the data collector. 
    \end{itemize}

\item {\bf Institutional review board (IRB) approvals or equivalent for research with human subjects}
    \item[] Question: Does the paper describe potential risks incurred by study participants, whether such risks were disclosed to the subjects, and whether Institutional Review Board (IRB) approvals (or an equivalent approval/review based on the requirements of your country or institution) were obtained?
    \item[] Answer: \answerNA{} 
    \item[] Justification: {
        This work does not involve crowdsourcing nor research with human subjects.
    }
    \item[] Guidelines:
    \begin{itemize}
        \item The answer NA means that the paper does not involve crowdsourcing nor research with human subjects.
        \item Depending on the country in which research is conducted, IRB approval (or equivalent) may be required for any human subjects research. If you obtained IRB approval, you should clearly state this in the paper. 
        \item We recognize that the procedures for this may vary significantly between institutions and locations, and we expect authors to adhere to the NeurIPS Code of Ethics and the guidelines for their institution. 
        \item For initial submissions, do not include any information that would break anonymity (if applicable), such as the institution conducting the review.
    \end{itemize}

\item {\bf Declaration of LLM usage}
    \item[] Question: Does the paper describe the usage of LLMs if it is an important, original, or non-standard component of the core methods in this research? Note that if the LLM is used only for writing, editing, or formatting purposes and does not impact the core methodology, scientific rigorousness, or originality of the research, declaration is not required.
    \item[] Answer: \answerNA{} 
    \item[] Justification: {
        This work does not involve LLMs as any component of our core methodology, algorithms, or experiments. LLMs were used solely for writing assistance, editing, and LaTeX formatting, which per the NeurIPS LLM policy does not impact the scientific methodology or originality of our work and therefore does not require declaration beyond this checklist.
    }
    \item[] Guidelines:
    \begin{itemize}
        \item The answer NA means that the core method development in this research does not involve LLMs as any important, original, or non-standard components.
        \item Please refer to our LLM policy (\url{https://neurips.cc/Conferences/2025/LLM}) for what should or should not be described.
    \end{itemize}

\end{enumerate}

\end{document}